\theoremstyle:=definition,remark,plain\do{%
        \expandafter\g@addto@macro\csname th@\theoremstyle\endcsname{%
            \addtolength\thm@preskip\parskip
            }%
        }
\crefname{lemma}{lemma}{lemmas}
\Crefname{lemma}{Lemma}{Lemmas}
\crefname{thm}{theorem}{theorems}
\Crefname{thm}{Theorem}{Theorems}
\crefname{prop}{proposition}{propositions}
\Crefname{prop}{Proposition}{Propositions}
\crefname{assumption}{assumption}{assumptions}
\crefname{assumption}{Assumption}{Assumptions}
\def\adl@drawiv#1#2#3{%
        \hskip.5\tabcolsep
        \xleaders#3{#2.5\@tempdimb #1{1}#2.5\@tempdimb}%
                #2\z@ plus1fil minus1fil\relax
        \hskip.5\tabcolsep}
\newcommand{\cdashlinelr}[1]{%
  \noalign{\vskip\aboverulesep
           \global\let\@dashdrawstore\adl@draw
           \global\let\adl@draw\adl@drawiv}
  \cdashline{#1}
  \noalign{\global\let\adl@draw\@dashdrawstore
           \vskip\belowrulesep}}
\renewcommand{\epsilon}{\varepsilon}
\declaretheorem[style=plain,name=Theorem]{theorem}
\declaretheorem[style=plain,sibling=theorem,name=Lemma]{lemma}
\declaretheorem[style=plain,sibling=theorem,name=Proposition]{proposition}
\declaretheorem[style=definition,sibling=theorem,name=Definition]{definition}
\declaretheorem[style=definition,name=Assumption]{assumption}
\declaretheorem[style=remark,sibling=theorem,name=Remark]{remark}
\newenvironment{example*}
 {\pushQED{\qed}\example}
 {\popQED\endexample}
\numberwithin{equation}{section}
\DeclarePairedDelimiterX\Set[1]{\lbrace}{\rbrace}%
{  #1 }
      \OR\ifentrytype{incollection}\OR\ifentrytype{inproceedings}%
      \OR\ifentrytype{inreference}\OR\ifentrytype{misc}}
\crefname{example}{Example}{Examples}
\crefname{lemma}{Lemma}{Lemmas}
\crefname{cor}{Corollary}{Corollaries}
\crefname{theorem}{Theorem}{Theorems}
\crefname{assumption}{Assumption}{Assumptions}
\declaretheoremstyle[
%    postheadspace=\newline,
spacebelow=\parsep,
    spaceabove=\parsep,
  mdframed={
    backgroundcolor=gray!10!white,     % vv: weird spacing issue, so leaving transpartent for now
    hidealllines=true, 
    innertopmargin=8pt, 
    innerbottommargin=4pt, 
    skipabove=8pt,
    skipbelow=10pt,
    nobreak=true
}
]{grayboxed}
\crefname{gassumption}{Assumption}{Assumptions}
\definecolor{WowColor}{rgb}{.75,0,.75}
\definecolor{SubtleColor}{rgb}{0,0,.50}
\newcounter{margincounter}
\newcommand{\ip}[2] {\ensuremath{\langle #1 , #2 \rangle}}
\newcommand{\RR}{\mathbb{R}}
\newcommand{\calD}{\mathcal{D}}
\newcommand{\calS}{\mathcal{S}}
\newcommand{\calX}{\mathcal{X}}
\newcommand{\sig}{\sigma}
\newcommand{\al}{\alpha}
\newcommand{\Del}{\Delta}
\newcommand{\calC}{\mathcal{C}}
\newcommand{\calY}{\mathcal{Y}}
\newcommand{\uk}{\diamond}
\newcommand{\embx}{\bar{f}}
\newcommand{\emby}{\bar{g}}
\newcommand{\latx}{h_x}
\newcommand{\laty}{h_y}
\newcommand{\latentembx}{f}
\newcommand{\latentemby}{g}
\newcommand{\core}[1]{\textrm{core}(#1)}
\newcommand{\neigh}[1]{\textrm{ne}(#1)}
\def\showauthornotes{1}
\newcommand{\Authornote}[2]{{\sf\small\color{blue}{[#1: #2]}}}
\newcommand{\Authornote}[2]{}
\title{On the Origins of Linear Representations\\ in Large Language Models}
\author[1]{Yibo Jiang\footnote{Equal Contribution}}
\newcommand\CoAuthorMark{\footnotemark[\arabic{footnote}]}
\author[2]{Goutham Rajendran\protect\CoAuthorMark}
\author[2]{\authorcr Pradeep Ravikumar}
\author[3]{Bryon Aragam} 
\author[4, 5]{Victor Veitch}
\affil[1]{Department of Computer Science, University of Chicago}
\affil[2]{Machine Learning Department, Carnegie Mellon University}
\affil[3]{Booth School of Business, University of Chicago}
\affil[4]{Department of Statistics, University of Chicago}
\affil[5]{Data Science Institute, University of Chicago}
\date{}
\begin{document}

\maketitle

\begin{abstract}
    Recent works have argued that high-level semantic concepts are encoded ``linearly'' in the representation space of large language models. In this work, we study the origins of such linear representations. To that end, we introduce a simple latent variable model to abstract and formalize the concept dynamics of the next token prediction. We use this formalism to show that
    the next token prediction objective (softmax with
cross-entropy) and the implicit bias of gradient descent together promote the linear representation of concepts.
Experiments show that linear representations emerge when learning from data matching the latent variable model, confirming that this simple structure already suffices to yield linear representations. We additionally confirm some predictions of the theory using the LLaMA-2 large language model, giving evidence that the simplified model yields generalizable insights.
\end{abstract}

\section{Introduction}

% \looseness=-1
One of the central questions of interpretability research for language models \cite{mikolov2013linguistic, openai2023gpt4, touvron2023llama}
is to understand how high-level semantic concepts that are meaningful to humans are encoded in the representations of these models.
In this context, a surprising observation is that these concepts are often represented \emph{linearly} \citep[e.g.,][]{mikolov2013linguistic,pennington2014glove,arora2016latent, elhage2022toy, burns2022discovering, tigges2023linear, nanda2023emergent, moschella2022relative, park2023linear, li2023inference, gurnee2023finding}.
This observation is mainly empirical, leaving open major questions; most importantly: could the apparent ``linearity'' be illusory? 

% \looseness=-1
In this paper, we study the origins of linear representations in large language models. 
We introduce a mathematical model for next token prediction in which context sentences and next tokens both reflect latent binary \textit{concept variables}. These latent variables give a formalization of underlying human-interpretable concepts.
Using this mathematical model, we prove that these latent concepts are indeed linearly represented in the learned representation space.
This result comes in two parts.
First, similar to earlier findings on word embeddings \citep{pennington2014glove,arora2016latent,gittens2017skip,ethayarajh2018towards}, we show log-odds matching leads to linear structure.
Second, we show that the implicit bias of gradient descent leads to the emergence of linear structure even when this (strong) log-odds matching condition fails. 
Together, these results provide strong support for the linear representation hypothesis.

There are some noteworthy implications of these results. First, linear representation structure is not specific to the choice of model architecture, but a by-product of how the model learns the conditional probabilities of different contexts and corresponding outputs. Second, the simple latent variable model gives rise to representation behavior such as linearity and orthogonality akin to those observed in LLMs. This suggests it may be a useful tool for further theoretical study in interpretability research.

The development is as follows:
\begin{enumerate}
    \item In \cref{sec: setting}, we present a latent variable model that abstracts the concept dynamics of LLM inference, allowing us to mathematically analyze LLM representations of concepts. 
    \item Using this model, we show in \cref{sec: linearity} that the next token prediction objective (softmax with cross-entropy) and the implicit bias of gradient descent together promote concepts to admit linear representations.
    \item A surprising fact about LLMs is that Euclidean geometry on representations sometimes reasonably encodes semantics, despite the Euclidean inner product being unidentified by the standard LLM objective \citep{park2023linear}. In \cref{sec: ortho}, we show that the implicit bias of gradient descent can result in Euclidean structure having a privileged status such that independent concepts are represented almost orthogonally. 
    \item Finally, in \cref{sec: expts}, we conduct experiments on simulations from the latent variable model confirming that this structure does indeed yield linear representations. Additionally, we assess the theory's predictions using LLaMA-2 \cite{touvron2023llama}, showing that the simple model yields generalizable predictions. 
\end{enumerate}

\begin{figure}[!t]
    \centering
    \hspace{-2em}
    \scalebox{.9} {
    \begin{tikzpicture}[
            > = stealth, % arrow head style
            % shorten > = 1pt, % don't touch arrow head to node
            auto,
            % node distance = 3cm, % distance between nodes
            semithick, % line style
            label distance=3mm,
            scale=1
        ]

        \tikzstyle{every state}=[
            draw = black,
            thick,
            fill = white,
            minimum size = 6mm,
            text width=6mm,
            align = center
        ]
        \node[state] at (0, 0) (X) {$X$};
        \node[state] at (-1.5, 3)(Z1){$C_1$};
        \node[state] at (1.5, 3)(Z2){$C_2$};
        \node[state] at (4.5, 3)(Z3){$C_3$};
        \node[state] at (3.8, 0) (Y){$Y$};
        \path[<->] (X) edge node[align = center, swap] {$\latx$ maps $x$\\
        to $c_{\core{x}}$} (0, 2.4);
        \path[<->] (3.8, 2.0) edge node[align = center] {$\laty$ maps $y$\\ to $(c_1, c_2, c_3)$} (Y);
        \draw[loosely dashed, blue] (-2.2,2.4) rectangle (2.2,3.8);
    \node[fill=white] at (1.8,4.2) {\textcolor{blue}{$\core{x}$}};
    \draw[loosely dashed, red] (-2.5,2) rectangle (5.2, 4.6);
    \node[fill=white] at (4.,5.0) {\textcolor{red}{Concept space $\calC$}};
    \path[<->,dashed] (-3, 0.3) edge node[align = center, rotate=45] {Deterministic\\ maps} (-3, 2);
    \path[->, dashed, bend left] (2.2, 3.5) edge node[align = center] {$p(.|c_{\core{x}})$} (Z3);
    \node[draw] at (0,-1.1) {The ruler of a kingdom is a};
    \node[draw] at (3.8,-1.1) {king};
    \node[] at (4.6,-1.1) {or};
    \node[draw] at (5.5,-1.1) {queen};
    \end{tikzpicture}
    }
\caption{Latent conditional model visualization. An example flow: Suppose $X$ is ``The ruler of a kingdom is a''. It first maps to a set of core concepts $C_1 = c_1, C_2 = c_2$. This leaves concept $C_3 = \uk$ unknown, which indicates say ``male'' or ``female''.
The value of $C_3 = c_3 \in \{0, 1\}$ is determined by a conditional probability $p(. | C_1 = c_1, C_2 = c_2)$ and once it's determined, the entire $c = (c_1, c_2, c_3)$ can be mapped to the next token $Y$, e.g. ``king'' (if $c_3 = 0$) or ``queen'' (if $c_3 = 1 $) respectively.}
\label{fig: setup}
\end{figure}
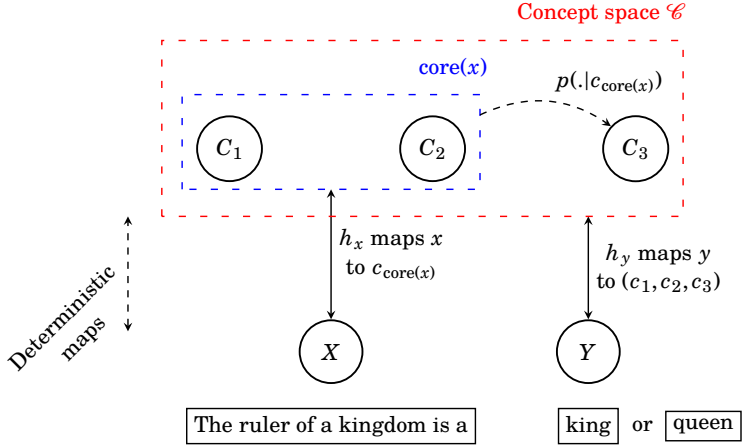

\section{Problem setting}
\label{sec: setting}

In this section, we give a simple latent variable model to study next-token predictions.

\subsection{Latent conditional model}

We let context sentences and next tokens share the same latent space where the \textit{probabilistic inference} happens. This lets us transform the next token prediction problem into the problem of learning various conditional probabilities among the latent variables.
For a visualization of the framework, see \cref{fig: setup}.

\paragraph{Latent space of concepts} We model the latent space with a set of binary random variables: $V_C = \{C_1, ..., C_m \}$.
Intuitively, one can view each latent binary random variable as representing the existence of a \emph{concept}, e.g. positive vs negative sentiment, and $V_C$ contains all relevant concepts of interest for language modeling.
Notation-wise, we will use $C = (C_1, \ldots, C_m)$ to represent the latent random vector and $c \in \{0, 1\}^m$ to denote realizations of the latent random vector. For a index set $I \subseteq [m]$,  $C_{I}$ (similarly, $c_I)$ denotes a subset of random variables. Note that the latent sample space $\mathcal{C}$ is the collection of binary vectors $\{0, 1\}^m$.

These binary random variables are not necessarily independent.
To model their dependencies, we assume that the variables form a Markov random field. That is, there is an undirected graph $G_C = (V_C, E_C)$ such that $p(C_i | C_{[m]\setminus i}) = p(C_i | C_{\neigh{i}})$ for all $i \in [m]$ (where $\neigh{i}$ denotes the set of neighbors of $i$), i.e. a variable is conditionally independent of all other variables given its neighbors. Markov random fields capture a wide variety of probability distributions, which include independent random variables as a special case.

\begin{remark}
\label{rmk: dags}
    The theory readily handles directed acyclic graphical models (DAGs) \cite{koller2009probabilistic}, but we work with undirected graphs for notational simplicity.
\end{remark}

\paragraph{Latent space to next token ($y$)} 
Let $Y \in \calY$ be the random variable denoting the next token with sample space $\mathcal{Y}$.
We assume there exists an injective measurable function from concept space $\calC$ to token space $\calY$.
That is, the value of $C$ can always be read off from $Y$.
This map induces a distribution on the next token (via the pushforward measure) that the LLM tries to learn. We denote the inverse of this map as $\laty$ (mapping tokens to concepts). 
The injectivity assumption is made for simplicity---with the injectivity assumption, for any given concept, one can identify pairs of tokens that only differ in that concept. This is useful for the theoretical analysis.

\paragraph{Context sentence ($x$) to latent space} Let $X \in \calX$ be the random variable denoting context sentence where $\mathcal{X}$ is the sample space, e.g. ``The ruler of a kingdom is a'' is an element of $\calX$. 
Intuitively, $X$ contains partial information about the latent concepts that determine the next token $Y$.
This is similar to a hidden Markov model where one can use previous observations to infer the current latent state.
In other words, if one models language generation as transitions/drifts in the latent space \cite{arora2016latent}, then given the context, one cannot fully determine but can make educated guesses on where the latent space might be transitioned into. Note that there's inherent ambiguity involving the next token prediction, e.g., multiple tokens  could follow the sentence "I am a ". 

Now, we will describe precisely how contexts $X$ and concepts $C$ relate to each other.
First, observe that
not every concept combination can induce the context sentence of interest, e.g., not every human-interpretable concept is associated with the sentence ``The ruler of a kingdom is a''.
To capture the notion of concepts relevant to a specific prompt $x$ we define:
\begin{align}
\mathcal{C}^x &= \{c \in \mathcal{C} = \{0, 1\}^m: P(x|C = c) \neq 0 \} \\
\textrm{core}(x) &= \{ i \in [m] :  (c_i = 1, \forall c \in \mathcal{C}^x) \text{ or }  ( c_i = 0, \forall c \in \mathcal{C}^x)\}.
\end{align}
Here, $\mathcal{C}^x$
is the set of all realizations of latent concept vectors in $\calC = \{0, 1\}^m$ that can induce $x$.
In other words, $\calC^x$ contains all concept vectors $c$ that are related to $x$. Within this subset, $\core{x}$ indicates the concepts that are always ``on'' or ``off''. 
See \cref{fig: setup} for an illustration.
In other words, given $x$, the values of ``core concepts'' are fixed and the rest of the non-core concepts are left to be determined.
This represents the determinacy in the relationship between $X$ and $C$.
To capture this determinancy relationship we define the map $\latx: \mathcal{X} \to  \{\uk, 0, 1\}^m$ (where $\uk$ stands for ``unknown'' and is to be read as ``unknown'') as
\begin{equation*}
    \latx(x)_i = \begin{cases}
    c_i \quad &\text{if} \; i \in \textrm{core}(x)  \; \text{and } c \in \mathcal{C}^x\\
    \uk \quad &\text{if} \; i \notin \textrm{core}(x) \\
\end{cases}
\end{equation*}
where $c_i$ is the known quantity for the core concept of index $i$. Let's denote $\mathcal{D} = \{\uk, 0, 1\}^m$, the set of all possible conditioning contexts.

\paragraph{Predictive distribution}
We make the modeling choice $p(c | x) = p(c | c_{\core{x}})$---the relationship between $x$ to $c$ factorizes via the core concepts. This captures the idea that only the core concepts present in a context sentence are relevant for the next token prediction.
So, e.g., the sentence ``The ruler of a kingdom is a '' has an induced probability of the gender concept. This induced probability is determined by the conditional prob of core concepts (e.g. $\texttt{is\_royalty} = 1$) of the sentence.
In other words, given a context sentence $x$, the joint posterior distribution of latent concepts outside of ``core concepts''---which are fixed by $x$---is determined by the latent space.
Thus, we have
\begin{equation*}
    p(y|x) = p(c|x) = p(c|c_{\textrm{core}(x)})
\end{equation*}
where $c = \laty(y)$ and the first equality follows from injectivity.

\paragraph{Notation} We summarize our notation here.
\begin{itemize}
    \item $\calX$ -- Set of all context sentences
    \item $\calY$ -- Set of all tokens
    \item $\calC = \{0, 1\}^m$ -- The set of binary concept vectors
    \item $\calD = \{\uk, 0, 1\}^m$ -- The set of context vectors with some unknown concepts
    \item $\core{x} \subseteq [m]$ -- The core concepts in $x$
    \item $\latx$ -- Deterministic map from $X$ to $\calD$
    \item $\laty$ -- Deterministic map from $Y$ to $\calC$
\end{itemize}

\subsection{Next token prediction}
The goal of the next token prediction is to learn $p(y|x)$.
Such probabilities are outputs by autoregressive large language models.
In particular, LLMs learn two functions, the embedding $\embx: \mathcal{X} \to \RR^d$ and the unembedding $\emby: \mathcal{Y} \to \RR^d$ such that
\begin{equation*}
    p(y|x) \approx \hat{p}(y|x) = \frac{\exp(\embx (x)^T \emby(y))}{\sum_y \exp(\embx (x)^T \emby(y))}
\end{equation*}

Note that under our model, $p(y|x) = p(c|c_{\textrm{core}(x)})$.
Therefore, we can assume $\embx ,\emby$ depends on the latents $c$ as well through the functions $f, g$. In particular, define $\embx = \latentembx \circ \latx$ and $\emby = \latentemby \circ \laty$, i.e. $\embx(x) = \latentembx(\latx(x))$ and $\emby(y) = \latentemby( \laty(y) )$. 

Recall that $\calC = \{0, 1\}^m$ denotes the collection of all binary concept vectors, and $\calD = \{\uk, 0, 1\}^m$ is the set encompassing all conditions. Then, equivalently, under the latent conditional model, $f$ and $g$ are trained such that their inner product can be used to estimate various conditional probabilities of the following form,
\begin{equation*}
    p(c|d) \approx \hat{p}(c|d) = \textrm{softmax}(\latentembx (d)^T \latentemby(c))
\end{equation*}
for all $c \in \widehat{\mathcal{C}}$ and $d \in \widehat{\mathcal{D}}$ where $\widehat{\mathcal{C}} \subseteq \mathcal{C}$ and $\widehat{\mathcal{D}} \subseteq \mathcal{D}$.  $\widehat{\mathcal{C}}$ and $\widehat{\mathcal{D}}$ represent the binary vectors and contexts present in the training dataset.
To prove the full slate of our results, unless otherwise stated, we assume $\widehat{\mathcal{C}} = \mathcal{C}$ and $\widehat{\mathcal{D}} = \mathcal{D}$. We will show in \cref{sec: expts} what happens when this does not hold.

In our analysis, without loss of generality, 
we represent  
elements of $\widehat{\mathcal{C}}$ and $\widehat{\mathcal{D}}$ as one-hot encodings. We let $f$ and $g$ assign unique vectors to each element (i.e., the functions are embedding lookups).

\section{Linearity}
\label{sec: linearity}

In this section, we study the phenomenon of linear representations, which we now formally define.  Adapting \citep{park2023linear}, we introduce the following definition within the latent conditional model. Recall that the cone of a vector $v$ is defined as $\mathrm{Cone}(v) = \{\alpha v : \alpha > 0\}$. For a concept $c \in \calC$ and $t \in \{0, 1\}$, let $c_{(i \to t)}$ indicate the concept $c$ with the $i$th concept set to $t$, i.e. the $j$th concept of $c_{(i \to t)}$ is $c_j$ if $j \neq i$ and $t$ otherwise. 
A similar notation can be defined for vectors in $\calD$ as well. In addition, we refer to vectors that only differ in one latent concept as counterfactual pairs (e.g. $c_{(i \to 1)}, c_{(i \to 0)}$) and the differences between their representations as steering vectors (e.g. $g(c_{(i \to 1)}) -  g(c_{(i \to 0)})$).

\begin{definition}[Linearly encoded representation]
A latent concept $C_i$ is said to have \emph{linearly encoded representation in the unembedding space} if 
there exists a unit vector $u$ such that $g(c_{(i \to 1)}) -  g(c_{(i \to 0)}) \in \mathrm{Cone}(u)$ for all $c \in \widehat{\mathcal{C}}$. Similarly, we say that $C_i$ has a \emph{linearly encoded representation in the embedding space} if there exists a unit vector $v$ such that $f(d_{(i \to 1)}) -  f(d_{(i \to 0)}) \in \mathrm{Cone}(v)$ for all $d \in \widehat{\mathcal{D}}$. In addition, 
we say $C_i$ has \emph{a matched}-representation if $u = v$.
\end{definition}

We first prove that linearly encoded representations will arise in a subspace as a result of log-odds matching (\cref{sec:linear-log-odds}) which relies on assumptions on the training data distributions. However, empirical observations (\cref{sec: expts}) reveal that, within the latent conditional model, linear representations in both the embedding and unembedding space can emerge without depending on the underlying graphical structures or the true conditional probabilities.
This leads us to establish a connection between the linearity phenomenon and the implicit bias of gradient descent on exponential loss in \cref{sec: bias_linearity}, which forms the main technical contribution of our work.

\subsection{Linearity from log-odds}
\label{sec:linear-log-odds}
In this section, we show that if the log odds of the learned probabilities is a constant that depends only on the concept of interest, then we must have linearity of representations for that concept in a subspace. 
This result is in line with many existing works on word embeddings \citep{pennington2014glove, arora2016latent, Gittens2017SkipGramZ, ethayarajh2018towards, allen2019vec, ri2023contrastive}. 
This shows that the latent conditional model is a viable theoretical framework for studying LLMs.
Going beyond prior works, our findings additionally connect linearity with the graphical structure of the latent space. An illustrative example is the case of independent concepts, discussed below. 

Concretely, the goal of this section is to argue that for any concept $i \in [m]$, the steering vectors in the unembedding space $\Del_{c, i} = g(c_{(i \to 1)}) - g(c_{(i \to 0)})$ will all be parallel for all $c \in \calC$ (up to an ambiguous subspace we cannot control).

Before we state the main theorem, we motivate the assumptions.
Firstly, we can reasonably assume the log-odds condition, which states that the model has been trained well enough to have learned correct log-odds for any concept $i$, under all contexts not conditioning on $i$. 
Secondly, since we cannot take the logarithm of $0$ conditional probabilities, it's difficult to predict how the steering vector behaves in the directions $f(d)$ for $d_i \neq \uk$.
Therefore, in this section, we project out this ambiguous subspace for now. 
However, as we will show in \cref{sec: bias_linearity}, learning these $0$ conditional probabilities with gradient descent promotes linear representations overall.

As motivated above, define $\overline{\Del_{c, i}} = \Pi_i \Del_{c, i}$ where $\Pi_i$ is the projection operator onto the space $\text{span}\{f(d) | d_i = \uk\}$, i.e., we project onto the space of contexts that does not condition on $C_i$.

\begin{restatable}[Log-odds implies linearity]{theorem}{logoddslinearity}
\label{thm: indep_linearity}
    Fix a concept $i \in [m]$. Suppose for any concept vector $c \in \calC$ and context $d \in \calD$ such that $d_i = \uk$, we have
    \begin{equation*}
    \ln \frac{\hat{p}(c_{(i \to 0)}|d)}{\hat{p}(c_{(i \to 1)}|d)} = \ln \frac{p(C_i = 0)}{p(C_i = 1)}
    \end{equation*}
    Then, the vectors $\overline{\Del_{c, i}}$ over all $c\in \calC$ are parallel.
\end{restatable}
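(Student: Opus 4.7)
The plan is to reduce the log-odds hypothesis to a linear equation relating $f(d)$ and the steering vectors $\Delta_{c,i}$, and then exploit the fact that the right-hand side of this equation does not depend on $c$ to deduce that $\Delta_{c,i} - \Delta_{c',i}$ is orthogonal to the relevant span.

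First, I will unfold the softmax definition of $\hat p$ in the log-odds ratio. Since $\hat p(c\mid d) \propto \exp(f(d)^T g(c))$, the normalization cancels and we get
\begin{equation*}
\ln\frac{\hat p(c_{(i\to 0)}\mid d)}{\hat p(c_{(i\to 1)}\mid d)} \;=\; f(d)^T\bigl(g(c_{(i\to 0)}) - g(c_{(i\to 1)})\bigr) \;=\; -\,f(d)^T \Delta_{c,i}.
\end{equation*}
Writing $\alpha_i \defeq \ln\frac{p(C_i=0)}{p(C_i=1)}$, the hypothesis therefore says $f(d)^T \Delta_{c,i} = -\alpha_i$ for every $c \in \calC$ and every $d \in \calD$ with $d_i = \uk$. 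The crucial feature is that this value is \emph{independent of $c$}.

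Second, I subtract the identity at two different concept vectors $c, c' \in \calC$ to obtain
\begin{equation*}
f(d)^T\bigl(\Delta_{c,i} - \Delta_{c',i}\bigr) \;=\; 0 \quad \text{for all } d \in \calD \text{ with } d_i = \uk.
\end{equation*}
Thus $\Delta_{c,i} - \Delta_{c',i}$ is orthogonal to every generator of the subspace $S_i \defeq \mathrm{span}\{f(d) : d_i = \uk\}$, hence orthogonal to $S_i$ itself.

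Third, I apply the projector $\Pi_i$ onto $S_i$. Linearity of $\Pi_i$ gives $\overline{\Delta_{c,i}} - \overline{\Delta_{c',i}} = \Pi_i(\Delta_{c,i} - \Delta_{c',i})$, and orthogonality to $S_i$ forces this projection to vanish. Therefore $\overline{\Delta_{c,i}} = \overline{\Delta_{c',i}}$ for all $c, c' \in \calC$, which is the strongest possible form of ``parallel'' (they coincide as vectors, and in particular lie in a common one-dimensional cone). This completes the argument.

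The proof is almost entirely bookkeeping once the log-odds is rewritten via the softmax; the only real content is the observation that the RHS of the log-odds condition does not depend on the ``background'' concept vector $c$, which is what allows the $c$-dependence of $\Delta_{c,i}$ to be pushed into the orthogonal complement of $S_i$. There is no serious obstacle; the reason the projection $\Pi_i$ appears in the statement is precisely that without it one has no control over the components of $\Delta_{c,i}$ lying in directions $f(d)$ with $d_i \neq \uk$, which is exactly the ambiguity the later gradient-descent argument in \cref{sec: bias_linearity} is designed to resolve.
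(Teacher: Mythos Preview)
Your proof is correct and follows essentially the same approach as the paper: both unwind the softmax to get $f(d)^T\Delta_{c,i}$ equal to a constant independent of $c$, and then use this to conclude that the projections $\overline{\Delta_{c,i}}$ coincide. The only cosmetic difference is that the paper expands $\overline{\Delta_{c,i}}$ in an orthonormal basis of $S_i$ and computes each coordinate explicitly, whereas you subtract at two concept vectors and use $\Pi_i(\Delta_{c,i}-\Delta_{c',i})=0$ directly; your version is a bit more streamlined but the content is identical, and in both cases one actually obtains equality of the $\overline{\Delta_{c,i}}$ rather than mere parallelism.
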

The proof is deferred to \cref{sec: indep_linearity}.
This theorem states that for a fixed concept $C_i$, all concept steering vectors are parallel to each other, regardless of the other concepts (if we project out the ambiguous subspace). This can help to explain that in the representation space, one can have $\emby(\text{"king")} - \emby(\text{"queen"})$ (approximately) parallel to $\emby(\text{"man"}) -\emby(\text{"woman"})$ \citep{mikolov2013linguistic, park2023linear}. In this case, the two pairs $(\text{"king"}, \text{"queen"})$ and $(\text{"man"}, \text{"woman"})$ only differ in the gender concept.

\paragraph{The case of independent concepts}
We now justify that the assumption in \cref{thm: indep_linearity} holds when the concepts are jointly independent in the training distributions, therefore the theorem applies to such concepts as a special case.
Indeed, in the jointly independent case, the true distribution $p$ is a product distribution, therefore the expression on the left-hand side must match the right-hand side above and the assumption holds.

In summary, \cref{thm: indep_linearity} applies to the case of jointly independent concepts (and also more generally), implying that matching log-odds formally implies linearity of concept representations in a subspace.
In \cref{sec: mrf_linearity},
we further generalize this result to the case when the concepts are not necessarily independent and instead come from an MRF (or a DAG), where we show using log-odds that concept representations lie in a subspace of small dimension.
However, the assumption of matching log-odds may be restrictive. In order to go beyond,
we invoke ideas from the optimization literature on the implicit bias of gradient descent, which we outline in the next section.

\subsection{Linearity from implicit bias of gradient descent}
\label{sec: bias_linearity}
The goal of this section is to argue that implicit bias of gradient descent also promotes linearity in the entire space of representations. We have seen in the previous section that log-odds matching can lead to linearity. However, in the context of language modeling, it is stringent to require concepts to have matching log-odds under every possible conditioning. Furthermore, the peculiar case of linearity in the latent conditional model is that the phenomenon can occur even for randomly generated graphs and parameters of the underlying distributions (\cref{sec: expts}). This raises the question of whether there are other factors influencing the representation of concepts.

We now study the role of gradient descent in this phenomenon. However, because the next token prediction is a highly complex nonconvex optimization problem, rather than delving into the entire optimization process, our focus is on identifying \emph{subproblems} within this optimization that can result in linearly encoded representations. The goal here is to highlight the underlying dynamics that prefer linear representations. It is worth noting that we choose to study the role of gradient descent instead of the more computational tractable stochastic gradient descent used in experiments for analytic simplicity. However, according to classical stochastic approximation theory \cite{kushner2003stochastic}, with a sufficiently small learning rate, the additional stochasticity is negligible, and stochastic gradient descent will behave highly similarly to gradient descent \cite{wu2020direction}. 

The key observation is that there is a hidden binary classification task. Let's consider the simple three-variable example. Suppose instead of predicting all possible conditional distributions of latents, the model only learns $p( \cdot |c_1 = 1)$ and $p( \cdot | c_1 = 0)$. Denoting $v = g(1, 1, 1) -  g(0, 1, 1), w_0 = f(0, \uk, \uk), w_1 = f(1, \uk, \uk)$,
\begin{align*}
    \frac{\hat{p}(0, 1, 1 | c_1=1)}{\hat{p}(1, 1, 1 | c_1=1)} &= \exp(-w_1^T v) \approx 0,\\
    \frac{\hat{p}(1, 1, 1 | c_1=0)}{\hat{p}(0, 1, 1 | c_1=0)} &= \exp(w_0^T v) \approx 0
\end{align*}

Equivalent, the model is trying to optimize the following loss:
\begin{equation*}
    L(w_1, w_0, v) = \sum_{i=0}^1 \ell(-y_i w_i^T v)
\end{equation*}
where $y_0 = 1, y_1 = -1$ and $\ell(x) = \exp(-x)$. 
Intuitively, if $w_0$ and $w_1$ are fixed, then this is a binary classification task with exponential loss. It is known that gradient descent under this setting would converge to the max-margin solution \citep{soudry2018implicit}, which makes the direction of $v$ unique.

The following theorem makes this intuition precise. It states that by optimizing a specific subproblem of the next token predictions using gradient descent with \emph{embeddings fixed}, latent unembedding representations will be encoded linearly. For vectors $u,v$, denote $\cos(u, v) = \frac{\langle u, v \rangle}{||u|| \cdot ||v||}$.

\begin{restatable}[Gradient descent with fixed embeddings]{theorem}{fixedembedding}
\label{thm:fixedembedding}
Fix $i \in [m]$.
Let $\widehat{\mathcal{D}} = \{ d^{\uk}_{(i \to 1)},  d^{\uk}_{(i \to 0)}\}$ where $d^{\uk} = [\uk, ..., \uk]$ and $\Delta_{c,i} = g(c_{(i \to 1)}) -  g(c_{(i \to 0)}) $. Suppose the loss function is the following:
\begin{equation*}
\begin{split}
    L(\{\Delta_{c,i}\}_c, f(d^{\uk}_{(i \to 1)}), f(d^{\uk}_{(i \to 0)})) = 
    & \sum_{c \in \overline{\mathcal{C}}} \left(\exp(- \Delta_{c, i}^T f(d^{\uk}_{(i \to 1)})) + \exp(\Delta_{c, i}^T f(d^{\uk}_{(i \to 0)}))\right) \\
\end{split}
\end{equation*}
where for all $c \in \overline{\calC}$, $c_i = 1$ and $f(d^{\uk}_{(i \to 1)})) \neq f(d^{\uk}_{(i \to 0)})$. Then fixing $f$ and training  $g$ using gradient descent with the appropriate step size, we have
\begin{equation*}
    \lim_{t \to \infty} \cos(\Delta_{c^{1}, i}^t, \Delta_{c^{2}, i}^t) = 1
\end{equation*}
for any $c^{1}, c^{2} \in \overline{\calC}$ where the superscript $t$ is meant to represent vectors after $t$ number of iterations.
\end{restatable}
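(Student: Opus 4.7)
The plan is to exploit two structural features of the stated loss: it decouples across different $c \in \overline{\calC}$, and each decoupled subproblem reduces to a binary classification problem with exponential loss on only two data points, so that the implicit-bias theorem of \citep{soudry2018implicit} applies directly.

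First I would observe that because every $c \in \overline{\calC}$ has $c_i = 1$, the four one-hot indices $c^1, c^1_{(i \to 0)}, c^2, c^2_{(i \to 0)}$ are pairwise distinct whenever $c^1 \ne c^2$: either $c^1$ and $c^2$ already disagree away from coordinate $i$ (and that disagreement survives flipping coordinate $i$ in either one), or they differ only at coordinate $i$ itself. Since $g$ is an embedding lookup, the corresponding unembedding vectors are independent optimization variables. Consequently the loss splits as $L = \sum_{c \in \overline{\calC}} L_c$, where each summand depends only on the pair $(g(c), g(c_{(i \to 0)}))$, and gradient descent on $g$ evolves each pair independently, so it suffices to analyze a single $c$.

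Next I would analyze one pair. Writing $u = g(c)$, $v = g(c_{(i \to 0)})$, $w_1 = f(d^{\uk}_{(i \to 1)})$, and $w_0 = f(d^{\uk}_{(i \to 0)})$, a short calculation shows $\nabla_u L_c = -\nabla_v L_c$, so $u + v$ is preserved exactly by gradient descent while the difference $\Delta = u - v = \Delta_{c,i}$ follows gradient descent (at twice the step size) on
\begin{equation*}
    \tilde{L}(\Delta) \;=\; \exp(-\Delta^T w_1) + \exp(\Delta^T w_0).
\end{equation*}
This is the exponential loss for a linear classifier on the two-point dataset $\{(w_1, +1), (w_0, -1)\}$, which is linearly separable precisely because $w_1 \ne w_0$ by hypothesis.

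To finish, I would invoke \citep{soudry2018implicit}: for exponential loss on a linearly separable dataset with a sufficiently small step size, the gradient descent iterates diverge in norm while their direction converges to the unique hard-margin SVM solution, which depends only on the data. Since every $c \in \overline{\calC}$ yields the same two-point dataset $\{(w_1, +1), (w_0, -1)\}$, each $\Delta_{c,i}^t / \|\Delta_{c,i}^t\|$ converges to the same unit vector, so $\cos(\Delta_{c^1, i}^t, \Delta_{c^2, i}^t) \to 1$. The main obstacle is checking the decoupling: one must ensure that the shared embedding vectors $w_0, w_1$ do not secretly couple the dynamics of the $\Delta_{c, i}$'s, which reduces to the disjointness of unembedding indices argued above. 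Translating the step-size assumption from \citep{soudry2018implicit} into the ``appropriate step size'' of our theorem is also needed but routine.
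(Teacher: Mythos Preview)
Your proposal is correct and follows essentially the same approach as the paper: decouple the loss across $c$, then invoke the implicit-bias result of \citep{soudry2018implicit} on each two-point exponential-loss subproblem to conclude that every $\Delta_{c,i}$ converges in direction to the same hard-margin SVM solution. You supply considerably more detail than the paper (the explicit disjointness of unembedding indices and the reduction $\nabla_u L_c = -\nabla_v L_c$ showing that $\Delta$ itself follows gradient descent on $\tilde L$), but the skeleton is identical. One small caveat: the claim that the two-point dataset $\{(w_1,+1),(w_0,-1)\}$ is separable ``precisely because $w_1\neq w_0$'' is not literally true (e.g.\ $w_0=2w_1$), though the paper's own proof glosses over the same point.
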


All proofs are deferred to \cref{app: bias_proof}.
The theorem says that under gradient descent with the exponential loss function and embeddings fixed, the unembedding vector differences for a fixed concept are all aligned in the limit.
It is worth mentioning that although the loss function is a bit simplified, it does represent the significant subproblem of the optimization in terms of linear geometry. The reasons are two-fold: (1) Because conditional probabilities are learned with the softmax function, there is an extra degree of freedom. In other words, one does not need to use all the logits to estimate distributions, just the differences between them. (2) Due to the nature of the exponential function, the closer the ratios between conditional distributions are to zero, the larger the distances between unembeddings need to be. Therefore, learning to predict these zero conditional probabilities would dominate the direction of concept representations.

The above theorem assumes that the embedding space is fixed. It turns out that gradient descent will also have the tendency to align embedding and unembedding space
when they're not fixed and are trained jointly, as we will show next. First, one can observe that if the $0$ conditional probabilities are learned to a reasonable approximation, then the inner product of embedding and unembedding steering vectors of the same concept should be large (\cref{prop:smallep}) or in other words, the exponential of the inner product is approaching infinity. But since there are different ways that this inner product can reach infinity, a natural question to ask is what happens when one purely optimizes this exponential learning objective. \cref{thm:align-emb-unemb} shows that gradient descent, when trained to minimize the exponential of negative inner product of two vectors, would align these two vectors.

Finally, using \cref{thm:fixedembedding} and \cref{thm:align-emb-unemb}, we obtain the following statement, which is one of our core contributions.

\begin{theorem}[Gradient descent aligns representations]
\label{thm:main}
Under the conditions of \cref{thm:fixedembedding}, if one further assumes that $\{\Delta_{c,i}\}_c, f(d^{\uk}_{(i \to 1)})), f(d^{\uk}_{(i \to 0)})$ are initialized to be mutually orthogonal with the same norm, then optimizing $f$ and $g$ using gradient descent, we have
\begin{equation*}
\begin{split}
    &\lim_{t \to \infty} \cos(\Delta_{c^{1}, i}^t, \Delta_{c^{2}, i}^t) = 1 \\
    &\lim_{t \to \infty} \cos(\Delta_{c^{1}, i}^t, f^t(d^{\uk}_{(i \to 1)}) - f^t(d^{\uk}_{(i \to 0)})) = 1 \\
\end{split}
\end{equation*}
for any $c^{1}, c^{2} \in \overline{\calC}$ where the superscript $t$ is meant to represent vectors after $t$ number of iterations.
\end{theorem}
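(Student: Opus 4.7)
The plan is to exploit the mutual orthogonality and equal-norm initialization to preserve a symmetry under gradient descent that reduces the joint optimization to a form where \cref{thm:fixedembedding} and \cref{thm:align-emb-unemb} can be combined. Write $u_1^t = f^t(d^{\uk}_{(i \to 1)})$, $u_0^t = f^t(d^{\uk}_{(i \to 0)})$, and $\tilde{u}^t = u_1^t - u_0^t$. Direct differentiation of $L$ gives the updates $\Delta_{c,i}^{t+1} = \Delta_{c,i}^t + \eta(\alpha_c^t u_1^t - \beta_c^t u_0^t)$, $u_1^{t+1} = u_1^t + \eta\sum_c \alpha_c^t \Delta_{c,i}^t$, and $u_0^{t+1} = u_0^t - \eta\sum_c \beta_c^t \Delta_{c,i}^t$, where $\alpha_c^t = \exp(-\Delta_{c,i}^t \cdot u_1^t)$ and $\beta_c^t = \exp(\Delta_{c,i}^t \cdot u_0^t)$.

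The first step is an induction on $t$ to establish the invariant $\Delta_{c,i}^t \cdot u_1^t = -\Delta_{c,i}^t \cdot u_0^t$ for every $c \in \overline{\calC}$. At $t=0$ both inner products vanish by orthogonality. If the invariant holds at step $t$ then $\alpha_c^t = \beta_c^t$, so the update of $u_1+u_0$ vanishes; combined with $\langle u_1^0, u_0^0\rangle = 0$ and $\|u_1^0\| = \|u_0^0\|$, a short computation propagates the invariant to step $t+1$. A second induction shows every $\Delta_{c,i}^t$ and $\tilde{u}^t$ remains orthogonal to the conserved vector $u_1^0 + u_0^0$, so the active trajectory is captured by $\tilde{u}^t$ and $\{\Delta_{c,i}^t\}_c$ alone. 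Under this reduction the updates simplify to $\Delta_{c,i}^{t+1} = \Delta_{c,i}^t + \eta \alpha_c^t \tilde{u}^t$ and $\tilde{u}^{t+1} = \tilde{u}^t + 2\eta \sum_c \alpha_c^t \Delta_{c,i}^t$ with $\alpha_c^t = \exp(-\Delta_{c,i}^t \cdot \tilde{u}^t / 2)$.

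The reduced system is structurally a coupled exponential-loss problem in which $\tilde{u}$ and the $\Delta_{c,i}$'s all push along directions that maximize their mutual inner products. I would then combine the two auxiliary results: applying \cref{thm:align-emb-unemb} to each pair $(\Delta_{c,i}, \tilde{u})$ drives them into alignment, yielding $\cos(\Delta_{c,i}^t, \tilde{u}^t) \to 1$, which is the second limit claimed; transitivity across $c$ then immediately gives $\cos(\Delta_{c^1,i}^t, \Delta_{c^2,i}^t) \to 1$, the first limit. The role of \cref{thm:fixedembedding} is to certify that the max-margin direction in the $\Delta$-subproblem (with $\tilde{u}$'s direction held fixed) is precisely $\tilde{u}$ itself, so that the two alignment statements are mutually consistent and share a common limiting axis rather than drifting to different ones.

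The main obstacle is that neither the embedding nor the unembedding is held fixed, so neither auxiliary theorem applies verbatim. The orthogonal, equal-norm initialization is the crucial lever: it collapses the pair $(u_1, u_0)$ onto a single effective axis $\tilde{u}$ and symmetrizes the couplings among the pairs $(\Delta_{c,i}, \tilde{u})$, which is what permits the two building blocks to be woven together. A secondary technical point is controlling the step size so that both the invariant-preserving induction and the ensuing asymptotic alignment arguments remain simultaneously valid; the ``appropriate step size'' hypothesis inherited from \cref{thm:fixedembedding} handles this uniformly.
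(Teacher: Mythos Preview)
Your symmetry reduction is on the right track and matches the paper's strategy in spirit: exploit the orthogonal, equal-norm initialization to collapse the dynamics, then invoke an alignment argument of the \cref{thm:align-emb-unemb} type. You correctly establish $\alpha_c^t = \beta_c^t$ (so $u_1+u_0$ is conserved) and correctly reduce to the coupled system in $(\tilde{u}^t,\{\Delta_{c,i}^t\}_c)$.

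The genuine gap is the step ``apply \cref{thm:align-emb-unemb} to each pair $(\Delta_{c,i},\tilde{u})$''. This does not work: the update $\tilde{u}^{t+1}=\tilde{u}^t+2\eta\sum_{c}\alpha_c^t\Delta_{c,i}^t$ couples all $c$ simultaneously, so no single pair $(\Delta_{c,i},\tilde{u})$ evolves according to the two-vector dynamics of \cref{thm:align-emb-unemb}. To decouple, you need a further symmetry you have not established: that $\alpha_c^t$ is the \emph{same for all $c$}, together with $\langle\Delta_{c,i}^t,\sum_{c'}\Delta_{c',i}^t\rangle$ being independent of $c$. These require the equal-norm hypothesis on the $\Delta_{c,i}^0$ themselves (not only on $u_1^0,u_0^0$) and must be proved by a simultaneous induction alongside your existing invariant. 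Once all $\alpha_c^t$ coincide, every $\Delta_{c,i}^t$ receives the identical increment $\eta\alpha^t\tilde{u}^t$, so $\Delta_{c,i}^t-\Delta_{c',i}^t$ is constant in $t$; then $\cos(\Delta_{c^1,i}^t,\Delta_{c^2,i}^t)\to 1$ follows directly from $\|\Delta_{c,i}^t\|\to\infty$, with no pairwise use of \cref{thm:align-emb-unemb} at all. For the second limit the paper applies the \emph{proof technique} of \cref{thm:align-emb-unemb} (not the theorem verbatim) to the aggregate pair $\bigl(\sum_c\Delta_{c,i}^t,\tilde{u}^t\bigr)$, after a rescaling that equalizes the asymmetric coefficients $1$ and $2$ in the two update rules.

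A minor point: your invocation of \cref{thm:fixedembedding} is off. Despite the sentence preceding the theorem in the main text, the paper's actual proof does not use \cref{thm:fixedembedding}, and the ``max-margin certification'' role you assign it is unnecessary once the cross-$c$ symmetries above are in place.
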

\begin{figure*}[t]
% \captionsetup[subfigure]{font=small}
    \centering
    \begin{subfigure}{0.3\textwidth}
        
    \centering
    \includegraphics[scale=0.23]{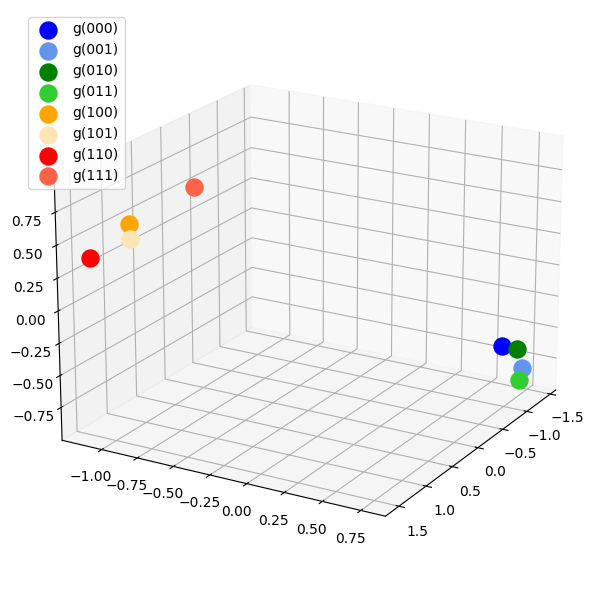}
        \caption{$\widehat{\mathcal{D}} = \{(0, \uk, \uk), (1, \uk, \uk)\}$}
        \label{fig:sub1}
    \end{subfigure}
    \hfill
    \begin{subfigure}{0.35\textwidth}
        
    \centering
    \includegraphics[scale=0.23]{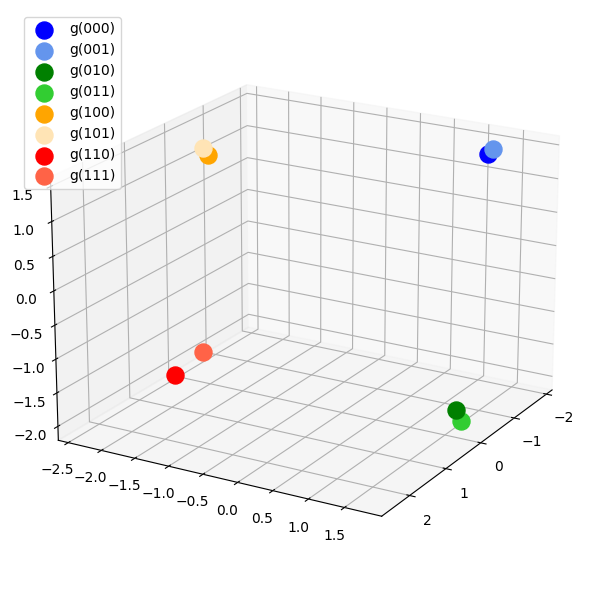}
        {\scriptsize\caption{$\widehat{\mathcal{D}} = \{(0, 0, \uk), (0, 1, \uk), (1, 0, \uk), (1, 1, \uk)\}$}}
        \label{fig:sub2}
    \end{subfigure}
    \hfill
    \begin{subfigure}{0.3\textwidth}
        
    \centering
    \includegraphics[scale=0.23]{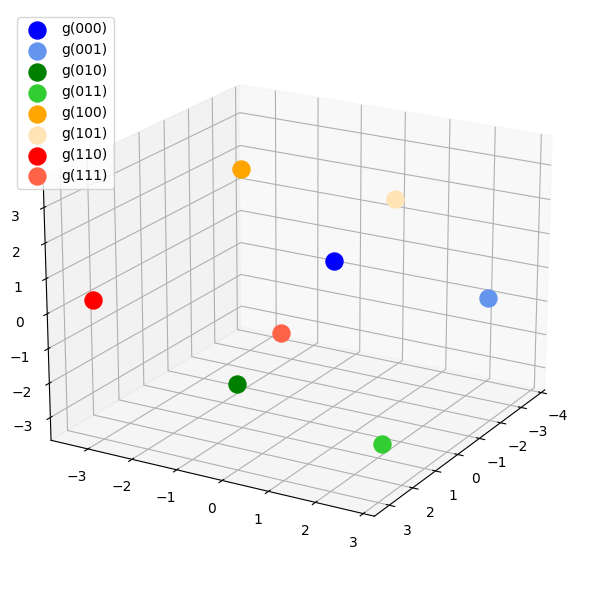}
        \caption{$\widehat{\mathcal{D}}= \mathcal{D} $}
        \label{fig:sub3}
    \end{subfigure}
    \caption{Unembedding representations form different clusters after training on various conditioning sets.
    This figure shows unembedding representations after learning different sets of conditional distributions. }
    \label{fig:3-unembedding}
    % \vspace{-15pt}
\end{figure*}

While stated generally as a result of the properties of gradient descent, \cref{thm:main} says that when the latent conditional models are trained on a subproblem of next token prediction with reasonable initializations (see below remark), implicit bias of gradient descent will align the embedding and unembedding vectors as well as aligning among the unembedding vectors.
This can manifest as linear representations in the final trained LLMs such as LLaMA-2 \cite{touvron2023llama}.
Although the theorem addresses only a specific subproblem, it notably suggests, as discussed earlier, the underlying bias and dynamics that contribute to the emergence of linear representations. That is, if one solely optimizes this subproblem, one can get perfect linear representations. We verify this result empirically in \cref{sec: expts}.

\begin{remark}
The initial condition assumption in the theorem statement seems unnecessary as evidenced by our experiments in \cref{sec: expts}. On the other hand, if one initializes vectors with high dimensional multivariate Gaussians, then the assumption will be approximately satisfied with high probability \citep{dasgupta1999learning}.
\end{remark}

\paragraph{The role of learning various conditional distributions} 
So far, we have only discussed learning conditional distributions for a single concept. In general, due to the presence of zero probabilities in various conditioning contexts, learning these conditional probabilities is, in some sense, learning different forces to push sets of representations far apart because one needs relatively large inner products to get exponents to be close to zero. 

\cref{fig:3-unembedding} gives an illustrative example with three concept variables that shows how learning different subsets of conditional probabilities would push various subsets of unembeddings in different ways. For example, if the model exclusively learns conditional probabilities for $C_1$, the unembedding representations would form clusters based on the values of $C_1$. Similarly, if the model learns conditional probabilities for both $C_1$ and $C_2$, the unembedding representations would cluster based on the values of both $C_1$ and $C_2$. However, in the second case, the concept directions of $C_1$ and $C_2$ are coupled. As shown in \cref{fig:3-unembedding}(b), $g(110) - g(100)$ is parallel with $g(111) - g(101)$ but not with $g(010) - g(000)$. To decouple them, one needs a bigger $\widehat{\calD}$ as in \cref{fig:3-unembedding}(c). That said, it seems that a complete set of $\calD$ is not always necessary (\cref{sec: expts}).

\looseness=-1
An analogy would be to view each binary vector as a combination of ``electrons'' where the values represent positive or negative charges. Learning probabilities under different conditioning contexts can be likened to applying various forces to manipulate the positions of these electrons. This is similar to the Thomson Problem in chemistry and occurs in other contexts of representation learning as well \cite{elhage2022toy}.

\section{Orthogonality}
\label{sec: ortho}

A surprising phenomenon observed in LLMs like LLaMA-2 \cite{touvron2023llama} is that the Euclidean geometry can somewhat capture semantic structures, despite the fact that the Euclidean inner product is not identified by the training objective \citep{park2023linear}. Consequently, one notable outcome is the tendency for semantically unrelated concepts to be represented as almost orthogonal vectors in the unembedding space.
In this section, we study this phenomenon under our latent conditional model. Based on our underlying Markov random field distribution, we define unrelated concepts to be those separated in $G_C$.

\begin{restatable}{theorem}{thmortho}
\label{thm:ortho}
Let $\widehat{\mathcal{C}} = \mathcal{C}$ and $\widehat{\mathcal{D}} = \mathcal{D}$. Assuming $p(c) > 0$ for any $c \in \mathcal{C}$ and $C_i$ and $C_j$ are two latent variables separated in $G_C$. 
Given any binary vector $c \in \mathcal{C}$, there exists a subset $\mathcal{D}_c \subset \mathcal{D}$ such that $d_i = \uk$ and ${p}(c|d) > 0$ for any $d \in \mathcal{D}_c$. If one further assume that $\hat{p}(c|d) = {p}(c|d)$ for any $d \in \mathcal{D}_c$,
then
\begin{equation*}
    g(c_{(i \to 1)}) -  g(c_{(i \to 0)}) \perp f(d_{(j \to c_j)}) - f(d_{(j \to \uk)})
\end{equation*}
for any $d \in \mathcal{D}_c$.
\end{restatable}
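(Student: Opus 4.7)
The plan is to convert the target inner product into a log-ratio of four conditional probabilities via the softmax identity, invoke the hypothesis $\hat p=p$ on each of them, and then read off orthogonality from the latent-variable factorization of $p(c\mid d)$.

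First, using $\ln\hat p(c'\mid d) = f(d)^\top g(c') - \ln Z(d)$, every $\ln Z(d)$ term cancels in the bilinear expansion, yielding
\[
(g(c_{(i\to 1)})-g(c_{(i\to 0)}))^\top (f(d_{(j\to c_j)})-f(d_{(j\to\uk)}))
=\ln\frac{\hat p(c_{(i\to 1)}\mid d_{(j\to c_j)})\,\hat p(c_{(i\to 0)}\mid d_{(j\to\uk)})}{\hat p(c_{(i\to 0)}\mid d_{(j\to c_j)})\,\hat p(c_{(i\to 1)}\mid d_{(j\to\uk)})},
\]
so it suffices to show this ratio equals $1$.

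Next, I would verify that $\hat p=p$ transfers to all four entries. Because $C_i$ and $C_j$ are distinct (separation in $G_C$ ensures $i\neq j$), both $d_{(j\to c_j)}$ and $d_{(j\to\uk)}$ retain $i$-th coordinate $\uk$. Both $c_{(i\to 1)}$ and $c_{(i\to 0)}$ agree with $c$ off coordinate $i$, and by positivity each $p(c'\mid d')>0$; hence every pair $(c',d')$ above satisfies the defining conditions of $\mathcal D_{c'}$, and applying the theorem's hypothesis with $c$ replaced by each of $c_{(i\to 1)},c_{(i\to 0)}$ converts every $\hat p$ above into the corresponding $p$.

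Finally, I would simplify the $p$-ratio using the model. For any $c'$ consistent with $d'$, the latent conditional model gives $p(c'\mid d')=p(C=c')/p(\text{event of }d')$, so the event factor cancels in the $c'$-ratio at each fixed $d'$, leaving $p(C=c_{(i\to 1)})/p(C=c_{(i\to 0)})$, which does not depend on $d'$. The two ratios (for $d'=d_{(j\to c_j)}$ and $d'=d_{(j\to\uk)}$) therefore coincide, the log vanishes, and the inner product is zero. The separation hypothesis, together with positivity, supplies the pairwise Markov property $C_i \indpt C_j \mid C_{[m]\setminus\{i,j\}}$, under which this cancellation is reinterpreted as saying the conditional log-odds of $C_i$ given everything else is unchanged by the extra information $C_j=c_j$ encoded in $d_{(j\to c_j)}$, making the orthogonality geometrically meaningful. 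The main obstacle is carrying the hypothesis $\hat p(c\mid d)=p(c\mid d)$ from the single fixed $c$ to its counterfactual partner $c_{(i\to 1-c_i)}$; this is handled by noting that the defining conditions of $\mathcal D_{c'}$ are stable under flipping the $i$-th coordinate because $d_i=\uk$ in every relevant context, so the hypothesis applies over the same set of contexts for both concepts.
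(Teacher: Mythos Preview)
Your proposal is correct and in fact more elementary than the paper's argument. The paper reaches the same four-term softmax identity and the same transfer of $\hat p$ to $p$, but then invokes the Hammersley--Clifford theorem to factorize $p$ over cliques and argues that the log-odds $\ln\frac{p(c_{(i\to1)}\mid d)}{p(c_{(i\to0)}\mid d)}$ depends only on clique variables adjacent to $C_i$; separation of $C_i$ and $C_j$ in $G_C$ then guarantees that toggling $d_j$ leaves this quantity unchanged. Your route bypasses this entirely: once both $c_{(i\to0)}$ and $c_{(i\to1)}$ are consistent with $d'$ (ensured by $d'_i=\uk$), the identity $p(c'\mid d')=p(c')/p(C_S=d'_S)$ makes the log-odds equal to $\ln\frac{p(c_{(i\to1)})}{p(c_{(i\to0)})}$, which is manifestly independent of $d'$. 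This shows that the graphical separation hypothesis is not actually used in the orthogonality conclusion beyond guaranteeing $i\neq j$; the paper's Hammersley--Clifford detour obscures this. On the other hand, the paper's clique-based formulation has the virtue of pointing toward how the argument would change if one worked with marginal (rather than full-configuration) log-odds, where separation genuinely matters---but for the theorem as stated, your direct Bayes computation is both sufficient and sharper.
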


All proofs are deferred to \cref{sec: ortho_proofs}.
As suggested in \cref{sec: bias_linearity} and verified empirically in \cref{sec: expts}, under the latent conditional model, representations are not only linearly encoded but also frequently aligned between embedding and unembedding spaces, which implies the following corollary on the orthogonal representation of unrelated concepts.

\begin{restatable}{cor}{corortho}
Under the conditions of \cref{thm:ortho}, if concept $C_i$ and $C_j$ have matched-representations, then
\begin{equation*}
    g(c_{(i \to 1)}) -  g(c_{(i \to 0)}) \perp g(c_{(j \to 1)}) -  g(c_{(j \to 0)})
\end{equation*}
for any $c \in \mathcal{C}$.
\end{restatable}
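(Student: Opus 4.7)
My plan is to reduce the corollary to the single orthogonality statement $u_i \perp u_j$, where $u_i$ and $u_j$ are the unit vectors directing the matched linear representations of $C_i$ and $C_j$. Because matched representation gives $g(c_{(i\to 1)}) - g(c_{(i\to 0)}) \in \mathrm{Cone}(u_i)$ and $g(c_{(j\to 1)}) - g(c_{(j\to 0)}) \in \mathrm{Cone}(u_j)$ for every $c \in \mathcal{C}$, the desired conclusion follows immediately once these two directions are shown to be orthogonal. So the whole task reduces to producing a single orthogonality relation involving $u_i$ and $u_j$.

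To that end, I would fix an arbitrary $c \in \mathcal{C}$ and consider its $j$-th counterfactual $c' := c_{(j \to 1 - c_j)}$. The main step is to choose a single context $d \in \mathcal{D}$ that simultaneously witnesses Theorem~\ref{thm:ortho} for both $c$ and $c'$. A natural candidate is the $d$ with $d_i = \uk$, $d_j = \uk$, and $d_k = c_k = c'_k$ for all remaining coordinates. I would then argue that $d \in \mathcal{D}_c \cap \mathcal{D}_{c'}$: the condition $d_i = \uk$ holds by construction, and both $p(c\mid d)$ and $p(c'\mid d)$ are strictly positive because $c$ and $c'$ each agree with $d$ on every non-$\uk$ coordinate while the positivity assumption $p(\cdot) > 0$ prevents any joint configuration from being ruled out.

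Applying Theorem~\ref{thm:ortho} to $c$ and to $c'$ with this common $d$ then yields
\begin{align*}
g(c_{(i\to 1)}) - g(c_{(i\to 0)}) &\perp f(d_{(j \to c_j)}) - f(d_{(j \to \uk)}), \\
g(c'_{(i\to 1)}) - g(c'_{(i\to 0)}) &\perp f(d_{(j \to 1-c_j)}) - f(d_{(j \to \uk)}).
\end{align*}
Both left-hand sides are parallel to $u_i$ by the linearity of $C_i$'s unembedding representation, so $u_i$ is perpendicular to each right-hand side, and hence to their difference, which telescopes to $\pm\bigl(f(d_{(j \to 1)}) - f(d_{(j \to 0)})\bigr)$. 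The matched linear representation of $C_j$ places this $f$-difference in $\mathrm{Cone}(u_j)$, so $u_i \perp u_j$, completing the argument.

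The main technical point to pin down will be verifying $p(c\mid d), p(c'\mid d) > 0$ for the chosen $d$; everything else is a short linear-algebraic computation that does not require any fresh use of the graph structure beyond what is already absorbed into Theorem~\ref{thm:ortho}. A minor subtlety is that Theorem~\ref{thm:ortho} is also implicitly being invoked with the pair $(C_j, C_i)$ swapped in role only through the matched hypothesis on $C_j$; since separation in $G_C$ is symmetric, both invocations are legitimate under the standing assumptions.
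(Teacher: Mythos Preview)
Your proposal is correct and follows essentially the same approach as the paper's proof: both apply Theorem~\ref{thm:ortho} to a $j$-counterfactual pair $c, c'$ at a common context $d \in \mathcal{D}_c \cap \mathcal{D}_{c'}$, use linearity of $C_i$ to reduce both orthogonality relations to the single direction $u_i$, subtract to obtain $u_i \perp f(d_{(j\to 1)}) - f(d_{(j\to 0)})$, and then invoke the matched representation of $C_j$ to identify this $f$-difference with $u_j$. The only cosmetic difference is that you exhibit an explicit $d$ (with $d_i = d_j = \uk$ and all other coordinates fixed to $c$), whereas the paper simply notes that $\mathcal{D}_{c^{(0)}} \cap \mathcal{D}_{c^{(1)}}$ is nonempty by positivity.
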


In simulation, given enough dimensions, the latent conditional model tends to learn near orthogonal representations regardless of graphical dependencies. 
One plausible rationale behind this phenomenon is the tendency for steering vectors in both unembedding and embedding spaces to possess large norms because, by \cref{prop:smallep}, the inner product of embedding and unembedding steering vectors of the same concept is large. However, for steering vectors representing distinct concepts, it's often necessary for the inner product to be small as dictated by what's given in the training distributions. To accommodate for this, their cosine similarities tend to be close to zero. As a consequence, unembedding representations of different concepts  often end up being near orthogonal as well.

\begin{remark}
It is worth noting that the conditions for orthogonal representations presented here can break in practice for LLMs. For instance, the injectivity assumption might not hold, and embedding and unembedding representations might not have perfectly matched representations.
\end{remark}

\section{Experiments}
\label{sec: expts}

In this section, we present two slates of experiments to validate and augment our theoretical contributions. 
Specifically, in \cref{sec: sim}, we run experiments on simulated data from the latent conditional model to verify the existence of linear and orthogonal representations in this theoretical framework and how training on smaller dimensions, incomplete sets of contexts, and concept vectors can affect the representations. Then, we run experiments on large language models in \cref{sec: llm_expts} to show nontrivial alignment between embedding and unembedding representations of the same concept as predicted by our theory. 

\subsection{Simulated experiments}
\label{sec: sim}

\paragraph{Simulation setup}
For simulation experiments, we first create simulated datasets by initially creating random DAGs (see \cref{rmk: dags}) with $m$ variables/concepts. For a specific random DAG, the conditional probabilities of one variable given its parents are modeled by Bernoulli distributions where the parameters are sampled uniformly from $[0.3, 0.7]$. 
For example, suppose the DAG is $C_1 \to C_2$. Then $C_1 \sim \mathrm{Bern}(p_1)$, $(C_2 | C_1=0) \sim \mathrm{Bern}(p_2)$, $(C_2 | C_1=1) \sim \mathrm{Bern}(p_3)$ where $p_1, p_2, p_3 \sim \mathrm{Unif}~([0.3, 0.7])$. In other words, we generate random graphical models wherein both the structures and distributions are created randomly. From these random graphical models, we can sample values of variables which are binary vectors as our datasets.

To let models learn conditional distributions, we train them to make predictions under cross-entropy loss after randomly masking the sampled binary vectors. Unless otherwise stated, the model is trained using stochastic gradient descent with a learning rate of $0.1$ and batch size $100$. More details are deferred to \cref{app: sim}.

\paragraph{Complete set of conditionals ($\widehat{\mathcal{C}} = \mathcal{C}, \widehat{\mathcal{D}} = \mathcal{D}$)} 
We first run experiments where the model is trained to learn the complete set of conditional distributions. The representation dimension is set to be the same as the number of latent variables. Given a concept, to measure linearity, we calculate the average cosine similarities among steering vectors in the unembedding space and in the embedding space as well as between steering vectors in these two spaces. The result is shown in \Cref{tab:complete-cond} which suggests that under the latent conditional model with the complete set of contexts, learned representations are indeed linearly encoded. In addition, \cref{fig:loss} in \cref{app: sim} shows that as loss decreases, the cosine similarities increase rapidly. 
Due to the exponential number of vectors that need to be trained, one cannot directly simulate a large number of latent variables. However, as we will show later (\cref{app: sim}), one can use incomplete sets $\widehat{\mathcal{C}}$ and $\widehat{\mathcal{D}}$ and simulate with more latent variables.

\begin{table}[!t]
\caption{When the model is trained to learn the complete set of conditionals, the $m$ latent variables are represented linearly, and the embedding and unembedding representations are matched. The table shows average cosine similarities among and between steering vectors of unembeddings and embeddings. Standard errors are over $100$ runs for $3$ variables and $4$ variables, $50, 20$ and $10$ runs for $5, 6$ and $7$ variables respectively.}
\label{tab:complete-cond}
\begin{center}
\begin{small}
\begin{sc}
\begin{tabular}{lcccr}
\toprule
\makecell{$m$} & Unembedding & Embedding & \makecell{Unembedding \\ and Embedding}\\
\midrule
3    & 0.972$\pm$0.006 & 0.982$\pm$0.005 & 0.980$\pm$0.005 \\
4    & 0.975$\pm$0.005 & 0.971$\pm$0.005 & 0.973$\pm$0.005\\
5    & 0.988$\pm$0.004 & 0.981$\pm$0.004 & 0.984$\pm$0.004 \\
6    & 0.997$\pm$0.000 & 0.985$\pm$0.002 & 0.990$\pm$0.001 \\
7    & 0.995$\pm$0.001 & 0.972$\pm$0.004 & 0.981$\pm$0.003 \\
\bottomrule
\end{tabular}
\end{sc}
\end{small}
\end{center}
\end{table}

\paragraph{Orthogonality}
\cref{sec: ortho} argues that, under the latent conditional model, unembedding representations of separated concepts will be orthogonal. In practice, \cref{fig:heatmap}(a) (\cref{app: sim}) shows that this happens for simulated data even without latent variables being separated in the latent graph. We test 
the orthogonalities of representations in the latent conditional model
by calculating the average cosine similarities between different sets of steering vectors.
To see how well the theoretical framework fits LLMs, we similarly calculate the average cosine similarities between different sets of steering vectors in LLaMA-2 \cite{touvron2023llama} using the counterfactual pairs in \citep{park2023linear} and they're reported in \cref{fig:llama-2}. In particular, \cref{fig:heatmap} (\cref{app: sim}) shows that both simulated experiments and LLaMA-2 experiments exhibit similar behaviors where steering vectors of the same concept are more aligned while steering vectors of different concepts are almost orthogonal.
This validates the claim that the latent conditional model is a suitable proxy to study LLMs.

In real-life datasets, not every concept or context vector has a natural correspondence to a token or sentence. On the other hand, the number of tokens and sentences grows exponentially with the number of latent concepts. Therefore, it is not realistic to always have $\widehat{\calD} = \calD$ or $\widehat{\calC} = \calC$. In other words, one does not need the next token prediction to learn all possible conditional probabilities perfectly. Because we are in a simulated setup, it's easy to probe the behavior in these situations. Therefore,
we also perform additional experiments to observe these aspects of our simulated setup, in particular (i) Training on an incomplete set of contexts ($\widehat{\mathcal{D}} \subset \mathcal{D}$), (ii) Incomplete set of concept vectors ($\widehat{\mathcal{C}} \subset \mathcal{C}$). These are deferred to \cref{app: sim}. The experiments show that linearity is robust to these changes.

Moreover, in practice, the representation dimension is typically much smaller than the number of concepts represented. Thus, we run additional experiments in \cref{app: sim} with decreasing dimensions and observe one can still get reasonable linear representations. 

Finally, we show in \cref{app: sim} that gradient descent or stochastic gradient descent is not the only algorithm that can induce linear representation, running other first-order methods like Adam \cite{kingma2014adam} can lead to a similar pattern.

\begin{figure}[!h]
    \centering
    \includegraphics[scale=0.3, trim={0em 0em 0em 0em},clip]{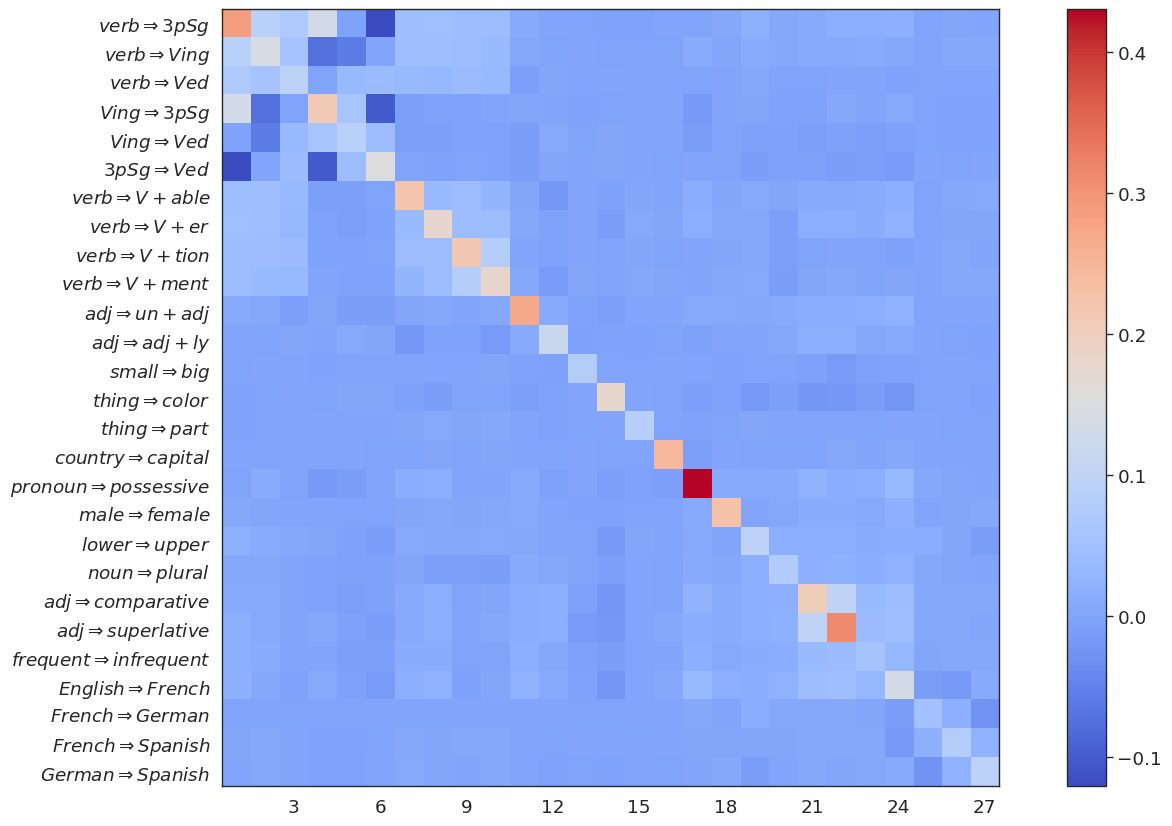}
    \caption{Unembedding steering vectors of the same concept in LLaMA-2 have nontrivial alignment, but steering vectors of different concepts are represented almost orthogonally.}
    \label{fig:llama-2}
    % \vspace{-15pt}
\end{figure}

\subsection{Experiments with large language models}
\label{sec: llm_expts}

In this section, we will conduct experiments on pre-trained large language models.
We first emphasize that prior works \citep{elhage2022toy, burns2022discovering, tigges2023linear, nanda2023emergent, moschella2022relative, park2023linear, gurnee2023finding} have already exhibited certain geometries of LLM representations related to linearity and orthogonality, via experiments.
Therefore, in this section, we probe the geometry of the LLM representations, in particular that of the interplay between the embeddings and unmbeddings of the contexts and the tokens, that have not been explored in prior works.
As \cite{park2023linear} remark, for a given binary concept, it's hard to generate pairs of contexts that differ in precisely this context, partly because of nuances of natural language and mainly because such counterfactual sentences are hard to construct even for human beings. 
In this section, we try to recreate these sets of missing experiments in the literature using existing open-source datasets.

\paragraph{Multilingual embedding geometry}
We first consider language translation concepts.
For the embedding vectors, we consider pairs of contexts $(x^0, x^1)$ where $x^0, x^1$ are the same sentences but in different languages.
We consider four language pairs French--Spanish, French--German, English--French, and German--Spanish from the OPUS Books dataset \cite{tiedemann2012parallel}.
We take 150 random samples and filter out contexts with less than 20 or more than 150 tokens. 
For the unembedding concept vectors, we use the 27 concepts as described in \cite{park2023linear}, which were built on top of the Big Analogy Test dataset \cite{gladkova2016analogy}. Examples of both datasets and a list of the 27 concepts are shown in \cref{app: llm_expts}.

\begin{figure}[!h]
    \centering
    \includegraphics[scale=0.16, trim={0em 0em 0em 0em},clip]{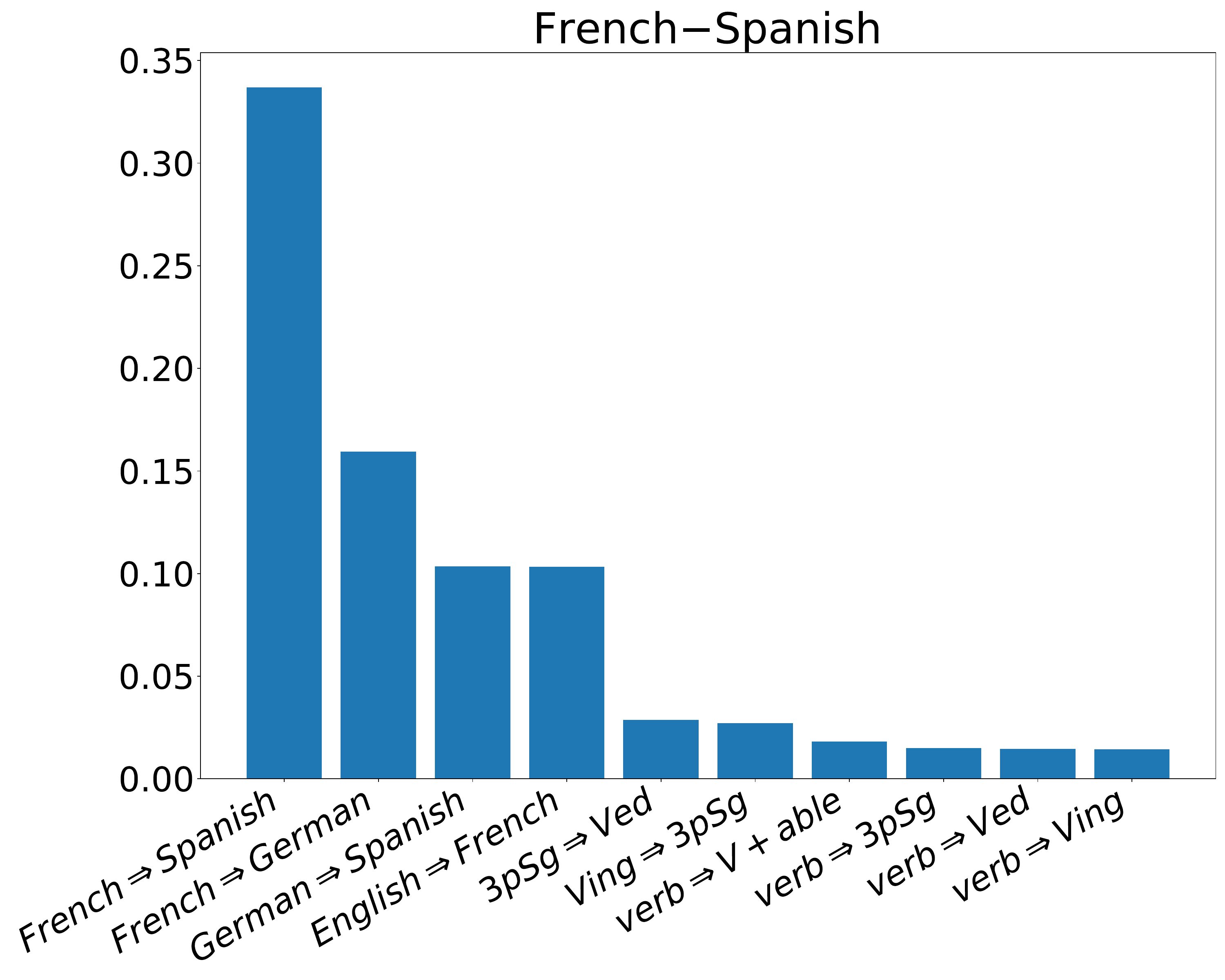}
    \caption{The French--Spanish concept is highly correlated with similar token concepts relative to others. Figure shows cosine similarities between the French--Spanish concept and token concepts.}
    \label{fig: es-fr_top10}
    % \vspace{-15pt}
\end{figure}

We consider embeddings and unembeddings from LLaMA-2-7B \cite{touvron2023llama}. We compute the absolute cosine similarity between the average embedding steering vector and the average unembedding steering vector.
A barplot for French--Spanish with top 10 similarities is shown in \cref{fig: es-fr_top10}. 
The entire barplot with 27 concepts and the barplots for the other language translation concepts are in \cref{app: llm_expts}. 
As we can see, there is alignment between the embedding and unembedding representations for matching concepts relative to unmatched concepts, as \cref{thm:main} predicts.

\paragraph{Winograd Schema}
Next, we consider counterfactual context pairs arising from the
Winograd Schema dataset \cite{levesque2012winograd}, which is a dataset of pairs of sentences that differ in only one or two words and which further contain an ambiguity that can only be resolved with world knowledge and reasoning.
We run experiments similar to the multilingual embedding geometry experiments and also run additional experiments on similarities between Winograd context pairs and the 27 concepts from \cite{park2023linear}.
The experiment details are in \cref{app: llm_expts}.

\looseness=-1
These LLM experiments show that matching contexts are better aligned with the corresponding unembedding steering vectors than non-matching contexts, as predicted from our theory.
Note that although the final cosine similarities are lower than what was exhibited in the simulated experiments, this is not surprising due to the complexity and nuances of natural language and LLMs.
Another reason is that abstract concepts do not necessarily lie in a one-dimensional space, so standard cosine similarity metrics may not be an optimal choice here. However, the partial alignment already serves to strongly validate our theoretical insights.

\section{Literature review}
\label{sec: related_work}

\paragraph{Linear representations} Linear representations have been observed empirically in word embeddings \citep{mikolov2013linguistic,pennington2014glove,arora2016latent} and large language models \citep{elhage2022toy, burns2022discovering, tigges2023linear, nanda2023emergent, moschella2022relative, park2023linear, gurnee2023finding}. 
Many works in the pre-LLM era, also attempt to explain this phenomenon theoretically. 
For instance, \citet{arora2015random} and their follow-up works \citep{arora2018linear, frandsen2019understanding} study the RAND-WALK model in which latent vectors undergo continuous drift on the unit sphere. A similar notion can be found also in dynamic topic modeling \citep{blei2006dynamic} and its subsequent works \citep{rudolph2016exponential, rudolph2017dynamic}. In contrast, the latent conditional model in this paper studies discrete latent concept variables that can capture semantic meanings. 

Other works include the paraphrasing model \citep{Gittens2017SkipGramZ,allen2019analogies,allen2019vec}, which proposes to study a subset of words that are semantically equivalent to a single word, however the uniformity assumption is somewhat unrealistic. \citet{ethayarajh2018towards} explore how the linearity property can arise by decomposing pointwise mutual information matrix and \citet{ri2023contrastive} examine the phenomenon from the perspective of contrastive loss which all rely on assumptions on the matching of probability ratios. 

\Citet{wang2023concept} also use a latent variable model between prompt and output to establish the linear structure of representations. However, they study this phenomena in the context of text-to-image diffusion models, and their construction relies heavily on the Stein score representation. By contrast, the model here is closer to the standard decoder-only LLM setup, and highlights the key role of softmax.

Linearity has also been observed in other domains such as computer vision \cite{radford2015unsupervised, raghu2017svcca, bau2017network, engel2017latent, kim2018interpretability, trager2023linear, wang2023concept} and other intelligent systems \cite{mcgrath2022acquisition, schut2023bridging}. We expect our ideas to extend to other domains, however we restrict our attention to language modeling in this work.

\paragraph{Geometry of representations} There's a body of work on studying the geometry of word
and sentence representations \citep{Mimno2017TheSG, reif2019visualizing, volpi2021natural, Volpi2020EvaluatingNA, li2020sentence, chen2021probing, chang2022geometry, liang2022mind, jiang2023uncovering, park2023linear}. In particular, \citet{Mimno2017TheSG} and \citet{liang2022mind} discover representation gaps in the 
context of word embeddings and vision-language models. \citet{park2023linear} attempts to define a suitable inner product that captures semantic meanings. \citet{jiang2023uncovering} study the connection between independence and orthogonal representation by adopting the abstract notion of independence models. One can also view embeddings in the context of information geometry \citep{volpi2021natural, Volpi2020EvaluatingNA}.

\paragraph{Causal representation learning}

There is a subtle connection between our work and the field of causal representation learning that we now briefly comment on.
Causal representation learning \cite{scholkopf2021towards, scholkopf2022statistical} is an emerging field that exploits ideas from the theory of latent variable modeling \cite{arora2013practical, kivva2021learning, hyvarinen2023identifiability} along with causality \cite{spirtes2000causation, pearl2009causality, rajendran2021structure, squires2022causal} to build generative models for various domains. 
The main goal is to build the true generative models that led to the creation of a dataset.
This field has made exciting advances in recent years \cite{
khemakhem2020variational, falck2021multi, zimmermann2021contrastive, 
kivva2022identifiability, lachapelle2022disentanglement, rajendran2023interventional, varici2023score, concepts2024, jiang2023learning, buchholz2023learning, hyvarinen2023identifiability}. 
However, to study large foundation models, a purely statistical notion of building the true model may not reflect the entire story as one should also take into account the implicit bias of optimization as well (\cref{sec: bias_linearity}).
Moreover, identifying the entire underlying latent distributions may also not be necessary for certain practical purposes and it might be sufficient to represent only certain structure information in the geometry of representations, such as linearly encoded representations. Another example of this has been recently explored by \citep{jiang2023uncovering} namely independence preserving embedding which studies how independence structure can be stored in representations.

\section{Conclusion}

In this work, we presented a simple latent variable model and showed that using a standard LLM pipeline to learn the distribution results in concepts being linearly represented.
We observed that this linear structure is promoted in (at least) two ways---(i) matching log-odds, similar to prior works on word embeddings, and (ii) implicit bias of gradient descent. 
Additionally, experimental results show that---as predicted---the linear structure emerges from the simple latent variable model. We also saw that, as predicted in the simple model, LLaMA-2 representations exhibit alignment between embedding and unembedding representations.

\paragraph{Acknowledgements}
This work is supported by ONR grant N00014-23-1-2591, Open Philanthropy, NSF IIS-1956330, NIH R01GM140467, and the Robert H. Topel Faculty Research Fund at the University of Chicago Booth School of Business. We also acknowledge the support of AFRL and DARPA via FA8750-23-2-1015, ONR via N00014-23-1-2368, and NSF via IIS-1909816, IIS-1955532.

\printbibliography

%%%%%%%%%%%%%%%%%%%%%%%%%%%%%%%%%%%%%%%%%%%%%%%%%%%%%%%%%%%%%%%%%%%%%%%%%%%%%%%
%%%%%%%%%%%%%%%%%%%%%%%%%%%%%%%%%%%%%%%%%%%%%%%%%%%%%%%%%%%%%%%%%%%%%%%%%%%%%%%
% APPENDIX
%%%%%%%%%%%%%%%%%%%%%%%%%%%%%%%%%%%%%%%%%%%%%%%%%%%%%%%%%%%%%%%%%%%%%%%%%%%%%%%
%%%%%%%%%%%%%%%%%%%%%%%%%%%%%%%%%%%%%%%%%%%%%%%%%%%%%%%%%%%%%%%%%%%%%%%%%%%%%%%
\newpage
\appendix
\onecolumn

\section{Linearity from log-odds: Proof of \cref{thm: indep_linearity}}
\label{sec: indep_linearity}

In this section, we prove \cref{thm: indep_linearity} restated below for convenience.

\logoddslinearity*

\begin{proof}
Fix any $i \le m$ and consider the vector $\overline{\Del_{c, i}} = \Pi_i (g(c_{(i \to 0)}) - g(c_{(i \to 1)}))$ for any $c \in \calC$. 
Let $\widehat{\calD} = \{d \in \{\uk, 0, 1\}^m | d_i = \uk\}$ be the contexts that do not condition on $C_i$ and let $u_1, \ldots, u_k$ be an orthonormal basis of the space $\text{span}\{f(d) | d \in \widehat{\calD}\}$. Also, since $f(d)$ over $d \in \widehat{\calD}$ span this space, let
\begin{align*}
    u_j = \sum_{d \in \widehat{\calD}} \al_{j, d} f(d)
\end{align*}
for all $j = 1, 2, \ldots, k$ and scalars $\al_{j, d}$. Note that the $\al_{j, d}$ do not depend on $c$.
Now, for any $j \le k$,
\begin{align*}
    \ip{\overline{\Del_{c, i}}}{u_j} &= \ip{\Pi_i (g(c_{(i \to 0)}) - g(c_{(i \to 1)}))}{u_j}\\
    &= \ip{g(c_{(i \to 0)}) - g(c_{(i \to 1)})}{\Pi_iu_j}\\
    &= \ip{g(c_{(i \to 0)}) - g(c_{(i \to 1)})}{u_j}\\
    &= \ip{g(c_{(i \to 0)}) - g(c_{(i \to 1)})}{\sum_{d \in \calD} \al_{j, d} f(d)}\\
    &= \sum_{d \in \widehat{\calD}} \al_{j, d} \ip{g(c_{(i \to 0)}) - g(c_{(i \to 1)})}{f(d)}
\end{align*}
To this end, we will compute the inner product $\ip{g(c_{(i \to 0)}) - g(c_{(i \to 1)})}{f(d)}$ for a fixed $d \in \widehat{\calD}$. First,
\begin{align*}
\ln \frac{\hat{p}(c_{(i \to 0)}|d)}{\hat{p}(c_{(i \to 1)}|d)} = \ln \frac{p(C_i = 0)}{p(C_i = 1)}   
\end{align*}
But we have 
\begin{align*}
    \hat{p}(c_{(i \to 0)}|d) = \frac{\exp(f(d)^T g(c_{(i \to 0)}))}{\sum_{c'} \exp(f(d)^T g(c'))}, \qquad
    \hat{p}(c_{(i \to 1)}|d) = \frac{\exp(f(d)^T g(c_{(i \to 1)}))}{\sum_{c'} \exp(f(d)^T g(c'))}
\end{align*}
Since the denominator does not depend on $c$, rearranging implies
\begin{align*}
    \ip{g(c_{(i \to 0)}) - g(c_{(i \to 1)})}{f(d)} &= f(d)^T(g(c_{(i \to 1)}) - g(c_{(i \to 0)}))\\
    &= \ln \frac{\hat{p}(c_{(i \to 0)}|d)}{\hat{p}(c_{(i \to 1)}|d)}\\
    &= \ln \frac{p(C_i = 0)}{p(C_i = 1)}
\end{align*}
which only depends on $i$.
Call this expression $\al^{(i)}$ to get
\begin{align*}
    \ip{\overline{\Del_{c, i}}}{u_j} &= \sum_{d \in \widehat{\calD}} \al_{j, d} \ip{g(c_{(i \to 0)}) - g(c_{(i \to 1)})}{f(d)}\\
    &= \sum_{d \in \widehat{\calD}} \al_{j, d} \al^{(i)}
\end{align*}
Therefore,
\begin{align*}
    \overline{\Del_{c, i}} &= \sum_{j \le k} \ip{\overline{\Del_{c, i}}}{u_j} u_j\\
    &= \sum_{j \le k} \left(\sum_{d \in \widehat{\calD}} \al_{j, d} \al^{(i)}\right) u_j\\
    &= \al^{(i)} \sum_{j \le k, d \in \widehat{\calD}} \al_{j, d} u_j
\end{align*}
Note that regardless of $c$, the final expression is always parallel to the vector $v_i = \sum_{j \le k, d \in \widehat{\calD}} \al_{j, d} u_j$ which does not depend on $c$. This completes the proof.
\end{proof}

\section{Linearity from log-odds for general MRFs}
\label{sec: mrf_linearity}

In this section, we generalize the ideas from \cref{sec: indep_linearity} to concepts from a general Markov random field, which is more general than the independent case.
Since most of the technical ideas and motivations are in \cref{sec: indep_linearity}, we will go over them lightly here.
The goal is to study the structure of the steering vector $\Del_{c, i} = g(c_{(i \to 1)}) - g(c_{(i \to 0)})$. As we will see, instead of them all lying in a space of dimension $1$ as in the independent case (that's what being parallel means), they will now live in a subspace of low dimension.
For example, such a phenomenon was experimentally observed by \cite{li2023inference} for the concept of \textit{truthfulness}.

Here, concepts $C_1, \ldots, C_m$ come from a Markov random field with undirected graph $G_C = (V_C, E_C)$ with neighborhood set given by $\neigh{i}$. Therefore, they satisfy the property that 
$p(C_i | C_{[m]\setminus i}) = p(C_i | C_{\neigh{i}})$ for all $i \le m$.
Accordingly, we state the log-odds assumption to capture this general conditional independence.

\begin{assumption}
\label{as: log_odds_gen}
    For any concept $i \le m$, any concept vector $c \in \calC$ and context $d \in \{\uk, 0, 1\}^m$ such that $d_i = \uk$ and $d_j \neq \uk$ for all $j \in \neigh{i}$, we have
    \begin{equation*}
    \ln \frac{\hat{p}(c_{(i \to 0)}|d)}{\hat{p}(c_{(i \to 1)}|d)} = \ln \frac{p(C_i = 0 | C_{\neigh{i}} = d_{\neigh{i}})}{p(C_i = 1 | C_{\neigh{i}} = d_{\neigh{i}})}
\end{equation*}
\end{assumption}

As before, we assume above that the log odds condition holds when $i$ is not conditioned on, but we weaken this further to say that it only needs to hold specifically when every one of its neighbors $j$ has been conditioned on.

Analogously, we project out the space that could possibly contribute to a $0$ conditional probability. This is the space where either $d_i$  has been conditioned on or $d_j$ for some $j \in \neigh{i}$ has been conditioned on. Therefore, define $\overline{\Del_{c, i}} = \Pi_i \Del_{c, i}$ where $\Pi_i$ is the projection into the space $\text{span}\{f(d) | d_i = \uk, d_j \neq \uk \forall j \in \neigh{i}\}$
We now state our main theorem.

\begin{theorem}
\label{thm: mrf_linearity}
    Under \cref{as: log_odds_gen}, for any fixed $i \le m$, the vectors $\overline{\Del_{c, i}}$  for all $c \in \calC$ live in a subspace $\calS$ of dimension at most $2^{|\neigh{i}|}$.
\end{theorem}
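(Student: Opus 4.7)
The plan is to adapt the template of the proof of \cref{thm: indep_linearity}: the only thing that changes substantively is that the inner product $\langle \Delta_{c,i}, f(d)\rangle$, instead of being a universal constant (depending only on $i$), is a function of $d_{\neigh{i}}$. Since $d_{\neigh{i}}$ ranges over $\{0,1\}^{|\neigh{i}|}$, at most $2^{|\neigh{i}|}$ distinct values arise, which will be the source of the $2^{|\neigh{i}|}$ dimension bound.

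Concretely, fix $i$ and let $u_1,\dots,u_k$ be an orthonormal basis of the projection target $V = \text{span}\{f(d)\ :\ d_i=\uk,\ d_j\neq\uk\ \forall j\in\neigh{i}\}$. Write each basis vector as $u_\ell = \sum_d \alpha_{\ell,d} f(d)$ with the sum ranging over the allowed $d$'s and the coefficients $\alpha_{\ell,d}$ independent of $c$. Then, exactly as in \cref{thm: indep_linearity},
\[
\langle \overline{\Delta_{c,i}}, u_\ell\rangle = \sum_d \alpha_{\ell,d} \langle g(c_{(i\to 1)})-g(c_{(i\to 0)}), f(d)\rangle.
\]
Using the softmax form of $\hat p$ gives $\langle g(c_{(i\to 1)})-g(c_{(i\to 0)}), f(d)\rangle = \ln \frac{\hat p(c_{(i\to 1)}|d)}{\hat p(c_{(i\to 0)}|d)}$, and applying \cref{as: log_odds_gen} identifies this with $\beta(d_{\neigh{i}}) := \ln \frac{p(C_i=1|C_{\neigh{i}}=d_{\neigh{i}})}{p(C_i=0|C_{\neigh{i}}=d_{\neigh{i}})}$, which depends on $d$ only through $d_{\neigh{i}}$ and does not depend on $c$ at all.

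The key bookkeeping step is to partition the allowed $d$'s by the neighborhood configuration $\sigma := d_{\neigh{i}}\in\{0,1\}^{|\neigh{i}|}$ and set $A_\ell(\sigma) := \sum_{d:\ d_{\neigh{i}}=\sigma} \alpha_{\ell,d}$. Then
\[
\langle \overline{\Delta_{c,i}}, u_\ell\rangle = \sum_{\sigma\in\{0,1\}^{|\neigh{i}|}} A_\ell(\sigma)\,\beta(\sigma),
\]
so assembling over $\ell$ yields $\overline{\Delta_{c,i}} = \sum_\sigma \beta(\sigma)\, w_\sigma$, where $w_\sigma := \sum_\ell A_\ell(\sigma)\, u_\ell$ is a fixed vector independent of $c$. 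Hence every $\overline{\Delta_{c,i}}$ lies in $\calS := \text{span}\{w_\sigma\ :\ \sigma\in\{0,1\}^{|\neigh{i}|}\}$, a subspace of dimension at most $2^{|\neigh{i}|}$, as claimed.

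I do not anticipate a serious technical obstacle; the argument is essentially linear-algebraic and mirrors the independent case, reducing the problem to identifying through which variables the log-odds factors under the MRF assumption. The only genuinely new idea is the partition of the allowed contexts by $\sigma = d_{\neigh{i}}$, which is what converts "one value of the log-odds" (giving dimension one in \cref{thm: indep_linearity}) into "at most $2^{|\neigh{i}|}$ values" (giving the stated bound). When $\neigh{i}=\emptyset$, the partition is trivial and one recovers \cref{thm: indep_linearity} as a special case.
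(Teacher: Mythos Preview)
Your proposal is correct and follows essentially the same approach as the paper: both expand $\overline{\Delta_{c,i}}$ in an orthonormal basis of the projection target, use the softmax form together with \cref{as: log_odds_gen} to identify $\langle \Delta_{c,i}, f(d)\rangle$ as a function of $d_{\neigh{i}}$ only, and then partition the allowed contexts by $\sigma = d_{\neigh{i}} \in \{0,1\}^{|\neigh{i}|}$ to exhibit the $2^{|\neigh{i}|}$ spanning vectors (your $w_\sigma$ are the paper's $v^{(i),\sigma}$, and your $\beta(\sigma)$ is the paper's $\alpha^{(i),\sigma}$ up to sign). The arguments are effectively identical.
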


This theorem says that if we assume that the concepts come from a general Markov random field, then the steering vectors live in a space of low dimension. Note that when the concepts are independent, we have $|\neigh{i}| = 0 \Longrightarrow 2^{|\neigh{i}|} = 1$ which means all the vectors $\overline{\Del_{c, i}}$ are parallel. Therefore this theorem is more general than \cref{thm: indep_linearity}.

\begin{proof}
We will proceed similarly to the proof of \cref{thm: indep_linearity}. 
Fix any $i \le m, c \in \calC$ and repeat the computation until we get that for any $j \le k$,
\begin{align*}
    \ip{\overline{\Del_{c, i}}}{u_j}
    &= \sum_{d \in \widehat{\calD}} \al_{j, d} \ip{g(c_{(i \to 0)}) - g(c_{(i \to 1)})}{f(d)}
\end{align*}
where  $\widehat{\calD} = \{d \in \{\uk, 0, 1\}^m | d_i = \uk, d_j \neq \uk \forall j \in \neigh{i}\}$.
Now, let's recompute $\ip{g(c_{(i \to 0)}) - g(c_{(i \to 1)})}{f(d)}$. By \cref{as: log_odds_gen}, we have
\begin{align*}
    \ln \frac{\hat{p}(c_{(i \to 0)}|d)}{\hat{p}(c_{(i \to 1)}|d)} = \ln \frac{p(C_i = 0 | C_{\neigh{i}} = d_{\neigh{i}})}{p(C_i = 1 | C_{\neigh{i}} = d_{\neigh{i}})}  
\end{align*}
which gives
\begin{align*}
    \ip{g(c_{(i \to 0)}) - g(c_{(i \to 1)})}{f(d)} &= \ln \frac{p(C_i = 0 | C_{\neigh{i}} = d_{\neigh{i}})}{p(C_i = 1 | C_{\neigh{i}} = d_{\neigh{i}})}
\end{align*}
which depends both on $i$ and $d_{\neigh{i}}$. For all $\sig \in \{0, 1\}^{|\neigh{i}|}$, denote $\al^{(i), \sig}$ to be the expression
\begin{align*}
    \al^{(i), \sig} &= \ln \frac{p(C_i = 0 | C_{\neigh{i}} = \sig)}{p(C_i = 1 | C_{\neigh{i}} = \sig)}
\end{align*}
Therefore,
\begin{align*}
    \overline{\Del_{c, i}}
    &= \sum_{j \le k} \left(\sum_{d \in \widehat{\calD}} \al_{j, d} \al^{(i), d_{\neigh{i}}}\right) u_j\\
    &= \sum_{\sig \in \{0, 1\}^{|\neigh{i}|}} \al^{(i), \sig} \left(\sum _{j \le k, d \in \widehat{\calD}, d_{\neigh{i}} = \sig} \al_{j, d}u_j\right)
\end{align*}
which lives in the span of the vectors $v^{(i), \sig} = \sum _{j \le k, d \in \widehat{\calD}, d_{\neigh{i}} = \sig} \al_{j, d}u_j$ regardless of $c$. The number of such vectors is $|\{0, 1\}^{|\neigh{i}|}| = 2^{|\neigh{i}|}$.
\end{proof}

\section{Linearity from the implicit bias of gradient descent}
\label{app: bias_proof}

In this section, we will prove \cref{thm:fixedembedding} and \cref{thm:main} and additional auxiliary theorems.

\fixedembedding*
\begin{proof}
First note that we can break the whole optimization problem into smaller subproblems where each subproblem only depends on one counterfactual pair. By Theorem 3 of \citep{soudry2018implicit}, all $\Delta_{c, i}$ converges to have the same direction as the hard margin SVM solution.
\end{proof}

\begin{proposition}
\label{prop:smallep}
    Suppose $p(c) > 0$ for any  $c \in \widehat{\mathcal{C}} = \mathcal{C}$ and   $|\hat{p}(c|d) - p(c|d)| < \epsilon$ for all $c \in  \widehat{\mathcal{C}}$ and $ d \in  \widehat{\mathcal{D}}$ such that $0 < \epsilon < p(c|d)$ for all $c, d$ where $p(c|d) > 0$.
    Then for any latent variable $C_i$, we have that
\begin{equation*}
    \exp( - (g(c_{(i \to 1)}) - g(c_{(i \to 0)}))^T (f(d_{(i \to 0)}) - f(d_{(i \to 1)})) ) < \frac{\epsilon^2}{\big(p(c_{c_i=0} | d_{c_i=0}) - \epsilon \big) \big(p(c_{c_i=1} | d_{c_i=1}) - \epsilon\big) }
\end{equation*}
for any $c \in  \widehat{\mathcal{C}}$ and any $d \in \widehat{\mathcal{D}}$.
\end{proposition}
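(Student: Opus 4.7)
The plan is to express the quantity inside the exponential as a pure log-ratio of four learned conditional probabilities $\hat{p}(c|d)$, and then to combine the $\epsilon$-approximation hypothesis with the observation that injectivity of $\laty$ forces two of the four corresponding true conditionals to vanish.

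First I would use the softmax parameterization $\hat{p}(c|d) = \exp(f(d)^T g(c))/Z(d)$ to substitute $f(d)^T g(c) = \ln \hat{p}(c|d) + \ln Z(d)$ into the four cross-terms arising from the expansion of $(g(c_{(i \to 1)}) - g(c_{(i \to 0)}))^T (f(d_{(i \to 0)}) - f(d_{(i \to 1)}))$. The key observation is that each of $\ln Z(d_{(i \to 0)})$ and $\ln Z(d_{(i \to 1)})$ appears exactly twice with opposite signs and hence cancels, leaving a clean identity of the form
\[
(g(c_{(i \to 1)}) - g(c_{(i \to 0)}))^T (f(d_{(i \to 0)}) - f(d_{(i \to 1)})) = \ln \frac{\hat{p}(c_{(i \to 1)}|d_{(i \to 0)})\,\hat{p}(c_{(i \to 0)}|d_{(i \to 1)})}{\hat{p}(c_{(i \to 1)}|d_{(i \to 1)})\,\hat{p}(c_{(i \to 0)}|d_{(i \to 0)})}.
\]
Exponentiating (with the sign convention aligned to the statement) turns the quantity to be bounded into a ratio of ``mismatched'' to ``matched'' conditional probabilities, with the mismatches in the numerator.

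Next I would invoke injectivity of the latent-to-token map $\laty$: because $d_{(i \to 1)}$ conditions on $C_i = 1$, any $c$ with $c_i = 0$ is strictly inconsistent with it, so $p(c_{(i \to 0)}|d_{(i \to 1)}) = 0$, and symmetrically $p(c_{(i \to 1)}|d_{(i \to 0)}) = 0$. Combined with $|\hat{p} - p| < \epsilon$, this yields the upper bounds $\hat{p}(c_{(i \to 0)}|d_{(i \to 1)}), \hat{p}(c_{(i \to 1)}|d_{(i \to 0)}) < \epsilon$ on the two ``mismatched'' terms, while the strict positivity assumption $\epsilon < p(c|d)$ for the nonzero probabilities gives the matching lower bounds $\hat{p}(c_{(i \to t)}|d_{(i \to t)}) > p(c_{(i \to t)}|d_{(i \to t)}) - \epsilon > 0$ on the two ``matched'' terms for $t \in \{0, 1\}$. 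Plugging these into the ratio produces exactly the claimed bound $\epsilon^2 / \bigl((p(c_{c_i=0}|d_{c_i=0}) - \epsilon)(p(c_{c_i=1}|d_{c_i=1}) - \epsilon)\bigr)$.

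The argument is essentially a direct computation and I do not anticipate a real obstacle. The two places meriting some care are (i) verifying the double cancellation of $\ln Z(d_{(i \to 0)})$ and $\ln Z(d_{(i \to 1)})$ when expanding the four cross-products, which is what reduces the inner product to an intrinsic ratio of $\hat{p}$ values independent of the partition functions, and (ii) tracking the overall sign so that the near-zero ``mismatched'' probabilities end up in the numerator of the final bound, at which point injectivity plus $\epsilon$-approximation delivers the inequality immediately.
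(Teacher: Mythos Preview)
Your approach is correct and matches the paper's proof: both reduce the exponential to a product of two ratios of $\hat p$ values via the softmax identity (the paper writes this step directly, you spell out the partition-function cancellation), observe that the mismatched true conditionals $p(c_{(i\to 1)}\mid d_{(i\to 0)})$ and $p(c_{(i\to 0)}\mid d_{(i\to 1)})$ vanish, and then apply the $\epsilon$-approximation to bound numerator and denominator. Your caution about sign-tracking in step~(ii) is warranted, as that is the only place the algebra can go astray.
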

\begin{proof}
For any $c \in \widehat{\mathcal{C}}$ and $d \in \widehat{\mathcal{D}}$, we have that
\begin{equation*}
\begin{split}
     \frac{p(c_{c_i=1} | d_{c_i=0})}{p(c_{c_i=0} | d_{c_i=0})} = 0, \qquad \frac{p(c_{c_i=0} | d_{c_i=1})}{p(c_{c_i=1} | d_{c_i=1})} = 0
\end{split} 
\end{equation*}
These ratios are well-defined because $p(c) > 0$ for any $c$. Therefore,
\begin{equation*}
\begin{split}
     \frac{\hat{p}(c_{c_i=1} | d_{c_i=0})}{\hat{p}(c_{c_i=0} | d_{c_i=0})}  < \frac{\epsilon}{p(c_{c_i=0} | d_{c_i=0}) - \epsilon}, \qquad \frac{\hat{p}(c_{c_i=0} | d_{c_i=1})}{\hat{p}(c_{c_i=1} | d_{c_i=1})}  < \frac{\epsilon}{p(c_{c_i=1} | d_{c_i=1}) - \epsilon}
\end{split} 
\end{equation*}
Thus,
\begin{equation*}
\begin{split}
    \exp( - (g(c_{(c_i=1)}) &- g(c_{(c_i=0)}))^T (f(d_{(c_i=0)}) - f(d_{(c_i=1)})) )  \\
    &=\frac{\hat{p}(c_{c_i=1} | d_{c_i=0})}{\hat{p}(c_{c_i=0} | d_{c_i=0})} \frac{\hat{p}(c_{c_i=0} | d_{c_i=1})}{\hat{p}(c_{c_i=1} | d_{c_i=1})}  < \frac{\epsilon^2}{\big(p(c_{c_i=0} | d_{c_i=0}) - \epsilon \big) \big(p(c_{c_i=1} | d_{c_i=1}) - \epsilon\big) }
\end{split}
\end{equation*}
\end{proof}

\begin{theorem}
\label{thm:align-emb-unemb}
Given loss function $L(u, v) = \exp(-u^T v)$, any starting point $u_0, v_0$ where $u_0 \neq - \alpha v_0$ for some $\alpha > 0$, and any step size $\eta < \frac{1}{L(u_0, v_0)}$, the gradient descent iterates will have the following properties:
\begin{enumerate}[label=(\alph*)]
    \item $\lim_{t \to \infty}L(t) = \lim_{t \to \infty}L(u_t, v_t) = 0$
    \item $\lim_{t \to \infty}||u_t|| \to \infty$ and $\lim_{t \to \infty}||v_t|| \to \infty$
    \item $\cos(u_t, v_t)$ increases monotonically with $t$
    \item $\lim_{t \to \infty} \cos(u_t, v_t) = 1$
\end{enumerate}
\end{theorem}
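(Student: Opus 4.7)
The plan is to diagonalize the gradient descent recursion via the substitution $s_t = u_t + v_t$, $w_t = u_t - v_t$, and then read off all four properties from the resulting explicit product formulas. Because $\nabla_u L = -L(u,v)\,v$ and $\nabla_v L = -L(u,v)\,u$, the updates are $u_{t+1} = u_t + \eta L_t v_t$ and $v_{t+1} = v_t + \eta L_t u_t$, which give the decoupled scalar-multiplicative recursions $s_{t+1} = (1+\eta L_t)\,s_t$ and $w_{t+1} = (1-\eta L_t)\,w_t$. Thus (provided $\eta L_t < 1$) $s_t$ stays a positive scalar multiple of $s_0$ and $w_t$ a positive scalar multiple of $w_0$, with norms $\|s_t\|^2 = \|s_0\|^2 \prod_{i<t}(1+\eta L_i)^2$ and $\|w_t\|^2 = \|w_0\|^2 \prod_{i<t}(1-\eta L_i)^2$.

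Next I would prove by induction that $\eta L_t \le \eta L_0 < 1$ and that $L_t$ is non-increasing, using the identity $u_t^T v_t = (\|s_t\|^2 - \|w_t\|^2)/4$: whenever $\eta L_t \in [0,1)$ the first norm is non-decreasing and the second is non-increasing, so $u_t^T v_t$ is non-decreasing and $L_t = \exp(-u_t^T v_t)$ is non-increasing. For (a) and (b), I would argue by contradiction that $\sum_t L_t = \infty$: if the sum were finite, then (using $\log(1+x)\le x$ and $-\log(1-x)\le x/(1-\eta L_0)$) both infinite products converge to finite positive limits, so $\|s_t\|^2$, $\|w_t\|^2$, $u_t^T v_t$, and hence $L_t$ would have finite positive limits, contradicting summability. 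The divergence of $\sum_t L_t$ then forces $\|s_t\|^2 \to \infty$ and $\|w_t\|^2 \to 0$, from which (a) follows via $L_t = \exp(-(\|s_t\|^2 - \|w_t\|^2)/4) \to 0$ and (b) follows from $\|u_t\| \ge (\|s_t\| - \|w_t\|)/2 \to \infty$, and similarly for $\|v_t\|$.

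For (c) and (d), I would restrict attention to the 2D plane spanned by $s_0, w_0$; the hypothesis $u_0 \neq -\alpha v_0$ rules out $s_0 = 0$, and the residual degenerate case $s_0 \parallel w_0$ with $\|w_0\| < \|s_0\|$ produces $\cos(u_t, v_t) \equiv 1$, which trivially satisfies both claims. Otherwise, let $\phi$ be the angle between $s_0$ and $w_0$ and set $r_t = \|w_t\|^2/\|s_t\|^2$. A direct calculation from $\|u_t\|^2 \|v_t\|^2 = \tfrac{1}{16}[(\|s_t\|^2 + \|w_t\|^2)^2 - 4(s_t^T w_t)^2]$ gives $\cos(u_t, v_t) = g(r_t)$ with $g(r) = (1-r)/\sqrt{(1+r)^2 - 4r\cos^2\phi}$, and a short differentiation yields $g'(r) = -2(1-\cos^2\phi)(1+r)/[(1+r)^2 - 4r\cos^2\phi]^{3/2} \le 0$. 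The product formulas also give $r_{t+1}/r_t = ((1-\eta L_t)/(1+\eta L_t))^2 < 1$, so $r_t$ strictly decreases; composing, $\cos(u_t, v_t) = g(r_t)$ is non-decreasing, giving (c), and tends to $g(0) = 1$, giving (d).

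The main obstacle will be (c). Monotonicity fails for $\cos^2(u_t, v_t)$ in general (it can pass through zero when $u_t^T v_t$ changes sign), so it is essential to work with the signed cosine and to identify the clean cancellation that produces the $g'(r) \le 0$ identity above. Isolating the degenerate case $s_0 \parallel w_0$, which is exactly where the non-antiparallel hypothesis on $(u_0, v_0)$ is actually invoked, is also easy to get wrong but becomes straightforward once separated out.
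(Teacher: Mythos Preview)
Your proposal is correct and takes a genuinely different, cleaner route than the paper. The paper also notices that $u_t+v_t$ keeps a fixed direction (their \cref{eqn:sum}), but it never introduces the companion variable $w_t=u_t-v_t$; instead it tracks $\langle u_t,v_t\rangle$, $\|u_t\|$, $\|v_t\|$ directly. For (a) it shows $\langle u_t,v_t\rangle$ is strictly increasing by a sign-dependent induction, argues it must eventually become positive, and then reaches a contradiction if $L_t\not\to 0$. For (b) it studies the ratio $q_t=\|v_t\|/\|u_t\|$ and shows it cannot tend to $0$. For (c) it expands $\cos^2(u_{t+1},v_{t+1})$ explicitly and proves monotonicity by two separate arguments according to whether $\langle u_t,v_t\rangle$ has yet turned positive --- precisely the case split you avoid by working with the signed cosine. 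For (d) it projects $u_{t+1}$ onto the orthogonal complement of $u_0+v_0$, bounds the residual via the quantity $\|u_t\|^2\|v_t\|^2-\langle u_t,v_t\rangle^2$, and concludes both $u_t,v_t$ align with that fixed direction. Your diagonalization collapses all of this: once $s_t,w_t$ evolve by scalar multiples, everything reduces to the single ratio $r_t=\|w_t\|^2/\|s_t\|^2$ and the closed form $\cos(u_t,v_t)=g(r_t)$, so (c) and (d) follow immediately from $g'\le 0$ and $r_t\downarrow 0$ with no sign case analysis and no projection step. The only cost is disposing of the collinear cases up front; you do this correctly, though the boundary $\|w_0\|=\|s_0\|$ (i.e.\ $u_0=0$ or $v_0=0$, which the hypothesis does not exclude) deserves one explicit sentence --- after a single step one lands in the $\|w_1\|<\|s_1\|$ regime and $\cos\equiv 1$ thereafter.
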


\begin{proof}
Before proving the theorem, let's write out a few equations. By the gradient descent algorithm, we have the following equations:
\begin{equation*}
\begin{split}
    u_{t+1} &= u_t + \eta L(t) v_t \\
    v_{t+1} &= v_t + \eta L(t) u_t 
\end{split}
\end{equation*}
Thus,
\begin{equation*}
\begin{split}
    || u_{t+1} ||^2 &= ||u_t||^2 + 2 \eta L(t) \langle u_t, v_t \rangle + \eta^2 L(t)^2||v_t||^2 \\
    || v_{t+1} ||^2 &= ||v_t||^2 + 2 \eta L(t) \langle u_t, v_t \rangle + \eta^2 L(t)^2||u_t||^2 \\
    \langle u_{t+1}, v_{t+1} \rangle &= (1 + \eta^2 L(t)^2) \langle u_t, v_t \rangle + \eta L(t) (||u_t||^2 + ||v_t||^2)
\end{split}
\end{equation*}

Now let's prove each claim one by one. 

First of all, we know that
\begin{equation*}
    \langle u_{t+1}, v_{t+1} \rangle - \langle u_t, v_t \rangle = \eta^2 L(t)^2 \langle u_t, v_t \rangle + \eta L(t) (||u_t||^2 + ||v_t||^2)
\end{equation*}
The difference is positive if $\langle u_t, v_t \rangle > 0$. To deal with the case of a negative inner product, we will use induction to prove that for any $t$, $\langle u_{t+1}, v_{t+1} \rangle - \langle u_t, v_t \rangle$ is positive and thus $L(t)$ decreases monotonically. 

\textbf{Base case ($t = 0$):}
The difference is positive if $\langle u_0, v_0 \rangle > 0$. Let's consider the case when  $\langle u_0, v_0 \rangle \leq 0$
\begin{equation*}
\begin{split}
     \langle u_{1}, v_{1} \rangle - \langle u_0, v_0 \rangle &= \eta^2 L(0)^2 \langle u_0, v_0 \rangle + \eta L(0) (||u_0||^2 + ||v_0||^2) \\
     & =  (\eta^2 L(0)^2 - 2\eta L(0)) \langle u_0, v_0 \rangle + \eta L(0) (u_0 + v_0)^2 \\
     & = \eta L(0) (\eta L(0) - 2) \langle u_0, v_0 \rangle + \eta L(0) (u_0 + v_0)^2 > 0
\end{split}
\end{equation*}
The last inequality is due to $\eta < \frac{1}{L(u_0, v_0)}$.

\textbf{Inductive step :}
Again, the difference is positive if $\langle u_t, v_t \rangle > 0$. Let's consider the case when  $\langle u_t, v_t \rangle \leq 0$
\begin{equation*}
\begin{split}
     \langle u_{t+1}, v_{t+1} \rangle - \langle u_t, v_t \rangle &= \eta^2 L(t)^2 \langle u_t, v_t \rangle + \eta L(t) (||u_t||^2 + ||v_t||^2) \\
     & =  (\eta^2 L(t)^2 - 2\eta L(t)) \langle u_t, v_t \rangle + \eta L(t) (u_t + v_t)^2 \\
     & = \eta L(t) (\eta L(t) - 2) \langle u_t, v_t \rangle + \eta L(t) (u_t + v_t)^2 > 0
\end{split}
\end{equation*}
The last inequality is due to $\eta < \frac{1}{L(u_0, v_0)}$ and the inductive hypothesis that $L(t)$ decreases monotonically. 

It is worth noting because $\langle u_t, v_t \rangle$ is monotonically increasing, there must exist a time $t_p$ such that $\langle u_{t_p}, v_{t_p} \rangle > 0$. If the opposite is true, then $\langle u_t, v_t \rangle$ must converge to a nonpositive number which is not possible because the difference between consecutive numbers in the sequence is strictly positive unless both $u_t$ and $v_t$ converges to zero. In other words, if $\langle u_t, v_t \rangle$ does not diverge, it must be a Cauchy sequence which needs both $u_t$ and $v_t$ to reach $0$. Suppose $u_t \to 0$ and $v_t \to 0$. We know that
\begin{equation}
\label{eqn:sum}
    u_{t+1} + v_{t+1} = (1+\eta L(t)) (u_{t} + v_{t})
\end{equation}
which means the only possible scenario that  $u_t \to 0$ and $v_t \to 0$ is when $u_0 + v_0 = 0$ which is excluded by the assumption on initial $u_0$ and $v_0$.

Because $L(t) > 0$, we know that $\lim_{t \to \infty}L(t)$ has a limit. Suppose the limit is some constant $c_1 \neq 0$. Then, we have 
\begin{equation*}
\begin{split}
     \langle u_{T}, v_{T} \rangle - \langle u_{t_p}, v_{t_p} \rangle &= \sum_{t=t_p}^{T-1} \big(\langle u_{t+1}, v_{t+1} \rangle - \langle u_t, v_t \rangle \big)\\
     &=  \sum_{t=t_p}^{T-1} \eta^2 L(t)^2 \langle u_t, v_t \rangle +  \sum_{t=t_p}^{T-1} \eta L(t) (||u_t||^2 + ||v_t||^2) \\
     & \geq  \sum_{t=t_p}^{T-1} \eta^2 c_1^2 \langle u_{t_p}, v_{t_p} \rangle\\
     &> \sum_{t=t_p}^{T-1} C
\end{split}
\end{equation*}
for some constant $C$. This would imply $\langle u_{T}, v_{T} \rangle \to \infty$ which contradicts that $\lim_{t \to \infty}L(t) > 0$. Therefore, $\lim_{t \to \infty}L(t) = 0$.

For the second property, we already know at least one of $||u_t||$ and $||v_t||$ will converge to infinity for the loss to converge to zero. Suppose one of them converges to a constant. Without loss of generality, let's assume $\lim_{t \to \infty}||u_t|| \to \infty$ and for all $t$, $||v_t|| \leq C_v$ for some constant $C_v$. This implies that $\lim_{t \to \infty} \frac{||v_t||}{||u_t||} \to 0$. On the other hand, let's consider the following equation of $q_t = \frac{||v_t||}{||u_t||}$ for $t \geq t_p$:
\begin{equation*}
\begin{split}
    q_{t+1}^2 = \frac{||v_{t+1}||^2}{||u_{t+1}||^2} &= \frac{||v_t||^2 + 2 \eta L(t) \langle u_t, v_t \rangle + \eta^2 L(t)^2||u_t||^2}{||u_t||^2 + 2 \eta L(t) \langle u_t, v_t \rangle + \eta^2 L(t)^2||v_t||^2} \\
\end{split}
\end{equation*}
If $q_t^2 < 1$, then 
\begin{equation*}
    \frac{2 \eta L(t) \langle u_t, v_t \rangle + \eta^2 L(t)^2||u_t||^2}{2 \eta L(t) \langle u_t, v_t \rangle + \eta^2 L(t)^2||v_t||^2} > 1\\
\end{equation*}
Therefore, if $q_t^2 < 1$, then $q_{t+1}^2 > q_t^2$. Similarly, if $q_t^2 > 1$, then $q_{t+1}^2 < q_t^2$. As a result, $q_t$ will not converge to zero, which is a contradiction.

For the third property, let's consider this equation.
\begin{equation*}
\begin{split}
    &\cos(u_{t+1}, v_{t+1})^2 = \frac{(\langle u_{t+1}, v_{t+1} \rangle )^2}{||u_{t+1}||^2 ||v_{t+1}||^2} \\
    & = \frac{(\langle u_{t+1}, v_{t+1} \rangle )^2}{(||u_t||^2 + 2 \eta L(t) \langle u_t, v_t \rangle + \eta^2 L(t)^2||v_t||^2) (||v_t||^2 + 2 \eta L(t) \langle u_t, v_t \rangle + \eta^2 L(t)^2||u_t||^2)} \\
\end{split}
\end{equation*}
where 
\begin{equation*}
\begin{split}
    (\langle u_{t+1}, v_{t+1} \rangle )^2 &= (\langle u_t, v_t \rangle)^2 + (2\eta^2 L(t)^2+ \eta^4 L(t)^4) (\langle u_t, v_t \rangle)^2 + \eta^2 L(t)^2 (||u_t||^4 + ||v_t||^4) \\
    &+ 2(1+\eta^2 L(t)^2) (\eta L(t)) \langle u_t, v_t \rangle (||u_t||^2 + ||v_t||^2) + 2\eta^2 L(t)^2||u_t||^2||v_t||^2
\end{split}
\end{equation*}
Therefore, 
\begin{equation*}
\begin{split}
    \cos(u_{t+1}, v_{t+1})^2 = \frac{(\langle u_t, v_t \rangle)^2 + \Delta X}{(||u_t||^2 + \Delta Y)(||v_t||^2 + \Delta Z)}
\end{split}
\end{equation*}
where
\begin{equation*}
\begin{split}
    \Delta X &= (2\eta^2 L(t)^2+ \eta^4 L(t)^4) (\langle u_t, v_t \rangle)^2 + \eta^2 L(t)^2 (||u_t||^4 + ||v_t||^4) \\
    &+ 2(1+\eta^2 L(t)^2) (\eta L(t)) \langle u_t, v_t \rangle (||u_t||^2 + ||v_t||^2) + 2\eta^2 L(t)^2||u_t||^2||v_t||^2\\
    \Delta Y &= 2 \eta L(t) \langle u_t, v_t \rangle + \eta^2 L(t)^2||v_t||^2 \\
    \Delta Z &= 2 \eta L(t) \langle u_t, v_t \rangle + \eta^2 L(t)^2||u_t||^2 \\
\end{split}
\end{equation*}
then,
\begin{equation*}
\begin{split}
    \Delta X - (\Delta Y)||v_t||^2 & - (\Delta Z) ||u_t||^2 - (\Delta Y)(\Delta Z)  \\
    &= (\eta^4 L(t)^4  - 2 \eta^2 L(t)^2 )(\langle u_t, v_t \rangle)^2 - ||u_t||^2||v_t||^2) \\
    & = (2 \eta^2 L(t)^2 - \eta^4 L(t)^4) ||u_t||^2||v_t||^2 (1 - \cos(u_t, v_t)^2)
\end{split}
\end{equation*}
This is zero if and only if $ \cos(u_t, v_t)^2 = 1$. Otherwise, it is strictly positive. Therefore, if $t \geq t_p$, by \cref{lemma:ratio}, $\cos(u_t, v_t)$ increases monotonically.

Before studying the case of $t < t_p$, let's first consider this difference.
\begin{equation*}
\begin{split}
    \Delta_t &= ||u_t||^2||v_t||^2 - (\langle u_v, v_t \rangle)^2 \\
     \Delta_{t+1} - \Delta_{t}& = (\eta^4 L(t)^4 - 2 \eta^2 L(t)^2) ||u_t||^2||v_t||^2 (1 - \cos(u_t, v_t)^2) < 0
\end{split}
\end{equation*}
Therefore, when $t < t_p$, because $\langle u_v, v_t \rangle$ increases but $\langle u_v, v_t \rangle < 0$, $(\langle u_v, v_t \rangle)^2$ decreases. Since $\Delta_t$ decreases, $||u_t||^2||v_t||^2$ need to decrease as well. Therefore, $cos(u_t, v_t)$ increases when  $t < t_p$.

In fact, because $\Delta_t \geq 0$, $\Delta_t$ must have a limit. In particular, this would also imply that $\Delta_t$ has an upper bound. 

Finally, we can prove the last property. Because $\cos(u_t, v_t)$ increases monotonically and $\cos(u_t, v_t) \leq 1$, it must have a limit. Note that the limit does not have to be $1$ for the loss to converge to $0$. The key observation is that by \cref{eqn:sum}, the direction of $u_t + v_t$ is always the same. In fact, it is reasonable to guess that both $u_t$ and $v_t$ will converge in that direction. Let's project $u_{t+1}$ onto the orthogonal complement of $\textrm{span}(u_t + v_t)$. And we want to show that the projected vector is bounded. Therefore, we only need to consider $t > t_p$.
\begin{equation*}    ||u_{t+1}^{\perp}||^2 = ||u_{t+1}||^2 - ||\frac{\langle u_{t+1}, u_t + v_t \rangle}{||u_t + v_t||}||^2
\end{equation*}
then,
\begin{equation*}
    ||u_{t+1}^{\perp} ||^2 = \frac{(1 - \eta L(t))^2(||u_t||^2||v_t||^2 - (\langle u_v, v_t \rangle)^2)}{||u_t + v_t||^2} \leq \frac{(1 - \eta L(t))^2(||u_t||^2||v_t||^2 - (\langle u_v, v_t \rangle)^2)}{||u_{t_p} + v_{t_p}||^2}
\end{equation*}
We already know that the numerator is bounded. Therefore, $||u_{t+1}^{\perp}||$ is also bounded. Similarly,  $||v_{t+1}^{\perp}||$ is also bounded. 

We know that both $||u_t||$ and $||v_t||$ diverge to infinity, thus $\lim_{t \to \infty} \cos(u_t, v_t + u_t) = 1$ and $\lim_{t \to \infty} \cos(v_t, v_t + u_t) = 1$. Because of the following well-known inequality,
\begin{equation*}
\begin{split}
    \cos(u_t, v_t + u_t)&\cos(v_t, v_t + u_t) - \sqrt{1 - \cos(u_t, v_t + u_t)^2} \sqrt{1 - \cos(v_t, v_t + u_t)^2} \\
    &\leq \cos(u_t, v_t) \\
    &\leq \cos(u_t, v_t + u_t)\cos(v_t, v_t + u_t)  + \sqrt{1 - \cos(u_t, v_t + u_t)^2} \sqrt{1 - \cos(v_t, v_t + u_t)^2}
\end{split}
\end{equation*}
we have $\lim_{t \to \infty} \cos(u_t, v_t) = 1$
\end{proof}

\begin{lemma}
\label{lemma:ratio}
Let $X, \Delta X, Y, \Delta Y, Z, \Delta Z > 0$ where $YZ \neq 0$ and $(Y + \Delta Y) (Z + \Delta Z) \neq 0$, if 
\begin{equation*}
\begin{split}
    \Delta X  >  (\Delta Y)Z + (\Delta Z) Y + (\Delta Y)(\Delta Z), \;\; \frac{X}{YZ} < 1
\end{split}
\end{equation*}
then,
\begin{equation*}
    \frac{X+\Delta X}{(Y + \Delta Y)(Z + \Delta Z)} > \frac{X}{YZ}
\end{equation*}
\end{lemma}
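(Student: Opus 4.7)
The plan is to clear denominators in the target inequality and reduce it to an algebraic comparison that directly invokes both hypotheses. Since $X, Y, Z, \Delta X, \Delta Y, \Delta Z$ are all strictly positive, both $YZ$ and $(Y+\Delta Y)(Z+\Delta Z)$ are strictly positive, so the desired strict inequality
\[
\frac{X+\Delta X}{(Y + \Delta Y)(Z + \Delta Z)} > \frac{X}{YZ}
\]
is equivalent to $(X + \Delta X)\,YZ > X\,(Y + \Delta Y)(Z + \Delta Z)$.

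Next I would expand $(Y+\Delta Y)(Z+\Delta Z) = YZ + Y\Delta Z + Z\Delta Y + \Delta Y\,\Delta Z$, substitute, and cancel the common $XYZ$ on both sides. This reduces the goal to
\[
\Delta X \cdot YZ \;>\; X\bigl(Y\Delta Z + Z\Delta Y + \Delta Y\,\Delta Z\bigr),
\]
or equivalently, dividing through by $YZ > 0$,
\[
\Delta X \;>\; \tfrac{X}{YZ}\bigl(Y\Delta Z + Z\Delta Y + \Delta Y\,\Delta Z\bigr).
\]

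Finally I would combine the two hypotheses to close this inequality. The quantity $Y\Delta Z + Z\Delta Y + \Delta Y\,\Delta Z$ is strictly positive by positivity of all variables, so the second hypothesis $\tfrac{X}{YZ} < 1$ upgrades to the strict bound $\tfrac{X}{YZ}(Y\Delta Z + Z\Delta Y + \Delta Y\,\Delta Z) < Y\Delta Z + Z\Delta Y + \Delta Y\,\Delta Z$. The first hypothesis states precisely that $\Delta X$ exceeds this latter expression, and chaining the two strict inequalities yields the desired bound.

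There is essentially no obstacle here; the lemma is elementary algebra. The only points that require a line of care are (i) verifying that all denominator-clearings preserve the direction of the inequality, which is immediate from the overall positivity assumption, and (ii) ensuring that the intermediate quantity $Y\Delta Z + Z\Delta Y + \Delta Y\,\Delta Z$ is strictly positive so that multiplying it by a factor strictly less than $1$ produces a \emph{strict} inequality rather than a non-strict one.
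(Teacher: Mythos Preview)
Your proposal is correct and is exactly the straightforward algebraic verification the paper has in mind; the paper's own proof simply reads ``The proof is straightforward'' without spelling out the steps. Your write-up fills in those steps cleanly and there is nothing to add.
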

\begin{proof}
    The proof is straightforward. 
\end{proof}

Finally, we will prove our main \cref{thm:main}, which is equivalent to the following theorem with simplified notation.

\begin{theorem}
Given loss function 
\begin{equation*}
    L( u^{(1)}, u^{(2)}, \{ v^{(i)}\}_{i=1}^K ) = \sum_{i=1}^K \big(\exp(-\langle  u^{(1)}, v^{(i)} \rangle) + \exp(\langle u^{(2)}, v^{(i)} \rangle) \big)
\end{equation*}
Suppose at starting time, $\langle u^{(1)}_0, u^{(2)}_0 \rangle = 0$, $\langle u^{(1)}_0, v^{(i)}_0 \rangle = 0$, $\langle u^{(2)}_0, v^{(i)}_0 \rangle = 0$ and $\langle v^{(i)}_0, v^{(j)}_0 \rangle = 0$, $||u^{(1)}_0|| = ||u^{(2)}_0|| =  C_u$, $||v^{(i)}_0|| = C_v$ for some positive constants $C_v$, $C_u$ and all $i, j \in [K]$,
then the gradient descent iterates will have the following properties:
\begin{enumerate}[label=(\alph*)]
    \item $\lim_{t \to \infty} \cos(v_t^{(i)}, v_t^{(j)}) = 1$ for all $i, j \in [K]$
    \item $\lim_{t \to \infty} \cos(u^{(1)}_t - u^{(2)}_t, v_t^{(i)}) = 1$ for all $i \in [K]$
\end{enumerate}
\end{theorem}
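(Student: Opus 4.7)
The plan is to exploit the orthogonal, equal-norm initialization to collapse the $(K+2)$-body gradient flow into the coupled two-variable dynamics already analyzed in \cref{thm:align-emb-unemb}. First, by induction on $t$, I would establish the invariants: \textbf{(i)} there is a vector $w_t$ common to all $i$ with $v^{(i)}_t = v^{(i)}_0 + w_t$; \textbf{(ii)} $a_t \coloneqq \langle u^{(1)}_t, v^{(i)}_t\rangle = -\langle u^{(2)}_t, v^{(i)}_t\rangle$ is independent of $i$; \textbf{(iii)} $u^{(1)}_t + u^{(2)}_t$ is frozen at its initial value and stays orthogonal to every $v^{(i)}_t$ and to $\bar{u}^-_t \coloneqq u^{(1)}_t - u^{(2)}_t$. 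Under these invariants, the $v^{(i)}$-gradient reduces to $\eta e^{-a_t}\bar{u}^-_t$, which is manifestly $i$-independent, and the updates to $u^{(1)} + u^{(2)}$ cancel because $e^{-a_t}=e^{b_t}$; the equal-norm condition $\|u^{(1)}_0\|=\|u^{(2)}_0\|$ together with mutual orthogonality at $t=0$ supplies the base case for (iii).

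Second, set $\tilde w_t \coloneqq w_t + K^{-1}S_0$ where $S_0 \coloneqq \sum_j v^{(j)}_0$. The invariants of step one give the reduced recursion $\tilde w_{t+1} = \tilde w_t + \eta e^{-a_t}\bar{u}^-_t$ and $\bar{u}^-_{t+1} = \bar{u}^-_t + 2K\eta e^{-a_t}\tilde w_t$, together with the clean identity $a_t = \tfrac{1}{2}\langle \bar{u}^-_t, \tilde w_t\rangle$; the shift $K^{-1}S_0$ is engineered precisely to absorb the additive $S_0$ contribution in the $\bar u^-$ update and to remove the cross term involving $v^{(i)}_0$ inside $a_t$. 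A further linear rescaling $(\hat w_t, \hat u_t) \coloneqq (\alpha\tilde w_t, \beta\bar{u}^-_t)$ with $\alpha\beta = 1/2$ and $\alpha = \sqrt{2K}\,\beta$ matches this flow exactly to the form $\hat w_{t+1} = \hat w_t + \eta' e^{-\langle \hat u_t,\hat w_t\rangle}\hat u_t$, $\hat u_{t+1} = \hat u_t + \eta' e^{-\langle \hat u_t,\hat w_t\rangle}\hat w_t$ with $\eta' = \sqrt{2K}\,\eta$, i.e., gradient descent on the two-variable loss $\exp(-u^Tv)$ treated in \cref{thm:align-emb-unemb}. At $t=0$, $\langle \hat u_0, \hat w_0\rangle = 0$ and the two vectors are non-antiparallel (by the orthogonality of $\bar u^-_0$ to $S_0$), so \cref{thm:align-emb-unemb} applies and yields $\|\tilde w_t\|, \|\bar{u}^-_t\| \to \infty$ together with $\cos(\bar{u}^-_t, \tilde w_t) \to 1$.

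Finally, since $\|K^{-1}S_0\|$ is a fixed constant and $\|\tilde w_t\|\to\infty$, one gets $\|w_t\|\to\infty$ and $\cos(w_t, \tilde w_t)\to 1$, hence $\cos(\bar u^-_t, w_t)\to 1$. Invariant (i) together with $\|v^{(i)}_0\|=C_v$ bounded then gives $v^{(i)}_t/\|v^{(i)}_t\|\to w_t/\|w_t\|$, which immediately implies $\cos(v^{(i)}_t, v^{(j)}_t)\to 1$ for part (a); combining this with $\cos(\bar u^-_t, w_t)\to 1$ gives $\cos(\bar u^-_t, v^{(i)}_t)\to 1$ for part (b). The main obstacle is the second step: one must find the right affine change of variables $(w_t, \bar u^-_t) \mapsto (\tilde w_t, \bar u^-_t)$ and the subsequent rescaling that reduce the $(K+2)$-body dynamics cleanly to the two-variable form of \cref{thm:align-emb-unemb}. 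Once the shift $\tilde w_t = w_t + K^{-1}S_0$ is identified (and shown to linearize $a_t$ into $\tfrac{1}{2}\langle \bar u^-_t, \tilde w_t\rangle$), the rest of the argument is a direct appeal to the prior theorem combined with the symmetry bookkeeping of step one.
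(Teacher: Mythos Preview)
Your proposal is correct and follows essentially the same route as the paper's proof: both arguments use the orthogonal, equal-norm initialization to establish (by induction) the symmetry invariants that collapse the $(K+2)$-body dynamics to a two-variable recursion, perform the $\sqrt{2K}$-type rescaling to match the form of \cref{thm:align-emb-unemb}, and then read off divergence of the norms and alignment of the cosines. Your shifted variable $\tilde w_t = w_t + K^{-1}S_0$ is exactly $K^{-1}\sum_j v^{(j)}_t$, i.e.\ a scalar multiple of the paper's aggregate $V_t$, so the reductions coincide; the only cosmetic difference is that you invoke \cref{thm:align-emb-unemb} as a black box (after verifying $a_t = \tfrac12\langle \bar u^-_t,\tilde w_t\rangle$ so the loss term matches exactly), whereas the paper reruns the key estimates from that theorem inline. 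Note that, as in the paper, the appeal to \cref{thm:align-emb-unemb} tacitly imposes the step-size constraint $\sqrt{2K}\,\eta < 1$.
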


\begin{proof}
Before proving the theorem, let's write out a few equations. For simplicity, let's denote $\ell^{1, i}_t = \exp(-\langle u^{(1)}_t, v_t^{(i)} \rangle)$, $\ell^{2, i}_t = \exp(\langle u_t^{(1)}, v_t^{(i)} \rangle)$, $\Delta_t = u^{(1)}_t - u^{(2)}_t$ and $\Delta_t^{i} = \ell^{1, i}_t u_t^{(1)} - \ell^{2, i}_t u_t^{(2)}$.

By the gradient descent algorithm with learning rate $\eta$,
\begin{equation*}
\begin{split}
    v^{(i)}_{t+1} &= v^{(i)}_{t} + \eta \Delta_t^{i} \\
    u^{(1)}_{t+1} &= u^{(1)}_{t} + \eta \sum_{j=1}^K \ell^{1, j}_t v_t^{(j)} \\
    u^{(2)}_{t+1} &= u^{(2)}_{t} - \eta \sum_{j=1}^K \ell^{2, j}_t v_t^{(j)}
\end{split}
\end{equation*}

Furthermore,
\begin{equation*}
\begin{split}
    \Delta_{t+1} &= u^{(1)}_{t+1} - u^{(2)}_{t+1}  = u^{(1)}_{t} - u^{(2)}_{t} + \eta \sum_{j=1}^K \ell^{1, j}_t v_t^{(j)} + \eta \sum_{j=1}^K \ell^{2, j}_t v_t^{(j)} \\
    & = \Delta_{t} + \eta \sum_{j=1}^K (\ell^{1, j}_t + \ell^{2, j}_t) v_t^{(j)} \\
    \Delta_{t+1}^{i} & = \ell^{1, i}_{t+1} u_{t+1}^{(1)} - \ell^{2, i}_{t+1} u_{t+1}^{(2)} =
    \ell^{1, j}_{t+1} u^{(1)}_{t} + \eta \ell^{1, j}_{t+1} \sum_{j=1}^K \ell^{1, j}_t v_t^{(j)} - \ell^{2, i}_{t+1} u^{(2)}_{t} + \eta \ell^{2, j}_{t+1} \sum_{j=1}^K \ell^{2, j}_t v_t^{(j)} \\
    & = \ell^{1, i}_{t+1} u^{(1)}_{t} - \ell^{2, i}_{t+1} u^{(2)}_{t} + \eta \big(\ell^{1, i}_{t+1} \sum_{j=1}^K \ell^{1, j}_t v_t^{(j)} + \ell^{2, i}_{t+1} \sum_{j=1}^K \ell^{2, j}_t v_t^{(j)}\big)
\end{split}
\end{equation*}

With our initial condition, by symmetry, one can show that for any $t \geq 1$.
\begin{enumerate}[label=(\alph*)]
    \item $\ell^{1, i}_t = \ell^{1, j}_t = \ell^{2, i}_t = \ell^{2, j}_t$ for any $i, j \in [K]$ and the loss decreases monotonically.
    \item $\Delta^i_{t} = \Delta^{j}_{t}$.
    \item $\langle v^{(i)}_{t}, \sum_{j=1}^K v_t^{(j)}\rangle = C^1_t$ for some positive constant $C^1_t$ that does not depend on index $i$. And $C^1_t$ increases monotonically with $t$.
    \item $\langle v^{(i)}_{t}, \Delta_t^{i}\rangle  = C^2_t$ for some positive constant $C^2_t$ that does not depend on index $i$.
    \item $\langle u_t^{(1)}, \Delta_t^{i}\rangle = - \langle u_t^{(2)}, \Delta_t^{i}\rangle  = C^3_t$ for some positive constant $C^3_t$ that does not depend on index $i$.
    \item $\langle u_t^{(1)}, v_t^{i}\rangle = - \langle u_t^{(2)}, v_t^{i}\rangle  = C^4_t$ for some positive constant $C^4_t$ that does not depend on index $i$.
\end{enumerate}
We'll prove this by induction. 

\paragraph{Base step ($t = 1$)}
First of all, by the initial condition, $\ell^{1, i}_t = \ell^{1, j}_t = \ell^{2, i}_t = \ell^{2, j}_t = 1$.
\begin{equation*}
\begin{split}
    \ell^{1, i}_1 &= \exp(-\langle u^{(1)}_1, v_1^{(i)} \rangle) = \exp\big(-\langle u^{(1)}_{0} + \eta \sum_{j=1}^K \ell^{1, i}_0 v_0^{(j)}, v^{(i)}_{0} + \eta \Delta_0^{i} \rangle\big) \\
    &= \exp\big(-\langle u^{(1)}_{0}, v^{(i)}_{0}\rangle - \eta \langle u^{(1)}_{0}, \Delta_0^{i}\rangle - \eta \langle v^{(i)}_{0}, \sum_{j=1}^K \ell^{1, j}_0 v_0^{(j)}\rangle - \eta^2 \langle \Delta_0^{i}, \sum_{j=1}^K \ell^{1, i}_0 v_0^{(i)}\rangle\big) \\
    &= \exp\big(-  \eta \langle u^{(1)}_{0}, \Delta_0^{i}\rangle - \eta \ell^{1, i}_0 ||v^{(i)}_{0}||^2 \big) = \exp\big(-  \eta \ell^{1, i}_0 ||u^{(1)}_{0}||^2 - \eta \ell^{1, i}_0 ||v^{(i)}_{0}||^2 \big) \\
    & = \exp\big(-  \eta \ell^{1, i}_0 C_u^2 - \eta \ell^{1, i}_0 C_v^2 \big) = \exp\big(-  \eta  C_u^2 - \eta C_v^2 \big) \\
    &= \ell^{1, j}_1
\end{split}
\end{equation*}
Similarly
\begin{equation*}
\begin{split}
    \ell^{2, i}_1 &= \exp(\langle u^{(2)}_1, v_1^{(i)} \rangle) = \exp\big(\langle u^{(2)}_{0} - \eta \sum_{j=1}^K \ell^{2, i}_0 v_0^{(j)}, v^{(i)}_{0} + \eta \Delta_0^{i} \rangle\big) \\
    & = \exp\big(\langle u^{(2)}_{0}, \Delta_0^{i} \rangle  - \eta \ell^{1, i}_0 ||v^{(i)}_{0}||^2 \big) = \exp\big(-  \eta  C_u^2 - \eta C_v^2 \big) \\
     &= \ell^{2, j}_1 = \ell^{1, i}_1 = \ell^{1, j}_1
\end{split}
\end{equation*}
For simplicity, let use $\ell_1 = \ell^{1, i}_1 = \ell^{2, i}_1$.
This also implies that $\Delta^i_{1} = \Delta^{j}_{1} = \ell_1 \Delta_1$.

On the other hand,
\begin{equation*}
\begin{split}
    \langle v^{(i)}_{1}, \sum_{j=1}^K v_1^{(j)} \rangle &= \langle v^{(i)}_{0} + \eta \Delta_0^{i}, \sum_{j=1}^K (v^{(j)}_{0} + \eta \Delta_0^{i}) \rangle \\
    &= ||v^{(i)}_{0}||^2 + \eta^2 \langle \Delta_0^{i}, \sum_{j=1}^K \Delta_0^{j} \rangle \\
    & = ||v^{(i)}_{0}||^2 + \eta^2 K ||\Delta_0^{i}||^2 = C_v^2 + 2\eta^2K C_u^2 \\
    & > \langle v^{(i)}_{0}, \sum_{j=1}^K v_0^{(j)} \rangle = ||v^{(i)}_{0}||^2 = C_v^2  
\end{split}
\end{equation*}
Notice that the constant does not depend on $i$.

\begin{equation*}
\begin{split}
\langle v^{(i)}_{1}, \Delta_1^{i}\rangle & = \ell_1 \langle v^{(i)}_{0} + \eta \Delta_0^{i}, \Delta_1 \rangle = \ell_1 \langle v_1^{(i)}, \Delta_{0} + \eta \sum_{j=1}^K (\ell^{1, j}_0 + \ell^{2, j}_0) v_0^{(j)} \rangle \\
& = \ell_1 \langle v^{(i)}_{0} + \eta \Delta_0^{i}, \Delta_{0} + 2 \eta \sum_{j=1}^K v_0^{(j)} \rangle = 2 \ell_1 \eta ||v^{(i)}_{0}||^2 + \ell_1 \eta (||u_0^{(1)}||^2 + ||u_0^{(2)}||^2) \\
& = 2 \ell_1 \eta C_v^2 + 2 \eta \ell_1 ||C_u||^2
\end{split}
\end{equation*}
Notice that the constant does not depend on $i$.
\begin{equation*}
\begin{split}
    \langle u_1^{(1)}, \Delta_1^{i}\rangle &= \ell_1 \langle u_1^{(1)}, \Delta_1\rangle = \ell_1 \langle u^{(1)}_{0} + \eta \sum_{j=1}^K \ell^{1, i}_0 v_0^{(j)}, \Delta_{0} + \eta \sum_{j=1}^K (\ell^{1, j}_0 + \ell^{2, j}_0) v_0^{(j)} \rangle \\
    & = \ell_1 \langle u^{(1)}_{0} + \eta \sum_{j=1}^K v_0^{(j)}, \Delta_{0} + 2\eta \sum_{j=1}^K v_0^{(j)} \rangle = \ell_1 ||u^{(1)}_0||^2 + 2 \ell_1 \eta^2 \sum_{j=1}^K ||v_0^{(j)}||^2\\
    & = \ell_1 C_u^2 + 2 \ell_1 \eta^2 K ||C_v||^2  \\
    & = - \langle u_1^{(2)}, \Delta_1^{i}\rangle
\end{split}
\end{equation*}
Again, the constant does not depend on $i$.

Finally,
\begin{equation*}
\begin{split}
    \langle u_1^{(1)}, v_1^{i}\rangle & = \langle u^{(1)}_{0} + \eta \sum_{i=1}^K \ell^{1, i}_0 v_0^{(i)}, v^{(i)}_{0} + \eta \Delta_0^{i}\rangle \\
    & = \langle u^{(1)}_{0} + \eta \sum_{i=1}^K v_0^{(i)}, v^{(i)}_{0} + \eta \Delta_0^{i}\rangle \\
    & = \eta ||u^{(1)}_0||^2 + \eta ||v_0^{(i)}||^2 = \eta C_u^2 + \eta C_v^2 \\
    & = - \langle u_1^{(2)}, v_1^{i}\rangle
\end{split}
\end{equation*}

\paragraph{Inductive step} By the inductive hypothesis, let $\ell_t = \ell_t^{1, i} = \ell_t^{2, i}$.
\begin{equation*}
\begin{split}
    \ell^{1, i}_{t+1} &= \exp(-\langle u^{(1)}_{t+1}, v_{t+1}^{(i)} \rangle) = \exp\big(-\langle u^{(1)}_{t} + \eta \sum_{j=1}^K \ell^{1, i}_t v_t^{(j)}, v^{(i)}_{t} + \eta \Delta_t^{i} \rangle\big) \\
    &= \exp\big(-\langle u^{(1)}_{t}, v^{(i)}_{t}\rangle - \eta \langle u^{(1)}_{t}, \Delta_t^{i}\rangle - \eta \langle v^{(i)}_{t}, \sum_{j=1}^K \ell^{1, j}_t v_t^{(j)}\rangle - \eta^2 \langle \Delta_t^{i}, \sum_{j=1}^K \ell^{1, i}_t v_t^{(i)}\rangle\big) \\
    &= \exp\big(-C_t^4 - \eta C_t^3 - \eta \ell_t C_t^1 - \eta^2 \ell_t K C_t^2 \big) \\
    & = \ell^{1, j}_{t+1} = \ell^{2, i}_{t+1} = \ell^{2, j}_{t+1}
\end{split}
\end{equation*}

\begin{equation*}
\begin{split}
    \langle v^{(i)}_{t+1}, \sum_{j=1}^K v_{t+1}^{(j)} \rangle &= \langle v^{(i)}_{t} + \eta \Delta_t^{i}, \sum_{j=1}^K (v^{(j)}_{t} + \eta \Delta_t^{i}) \rangle \\
    &= \langle v^{(i)}_{t}, \sum_{j=1}^K v^{(j)}_{t} \rangle + \eta \langle v^{(i)}_{t}, \sum_{j=1}^K  \Delta_t^{i}\rangle + \eta \langle \Delta_t^{i}, \sum_{j=1}^K v^{(j)}_{t}\rangle + \eta^2 \langle \Delta_t^{i}, \sum_{j=1}^K  \Delta_t^{i}\rangle \\
    & = C_t^1 + 2 \eta K C_2^t + \eta^2 \ell_t \langle u_t^{(1)} - u_t^{(2)}, \sum_{j=1}^K  \Delta_t^{i}\rangle\\
    & = C_t^1 + 2 \eta K C_2^t + \eta^2 \ell_t \langle u_t^{(1)} - u_t^{(2)}, \sum_{j=1}^K  \Delta_t^{i}\rangle\\
    & =  C_t^1 + 2 \eta K C_2^t + 2 \eta^2 \ell_t K C_t^4 > \langle v^{(i)}_{t}, \sum_{j=1}^K v^{(j)}_{t} \rangle > 0
\end{split}
\end{equation*}

By same logic, one show the inductive step for $\langle v^{(i)}_{t}, \Delta_t^{i}\rangle$, 
$\langle u_t^{(1)}, \Delta_t^{i}\rangle$, $\langle u_t^{(2)}, \Delta_t^{i}\rangle$, $\langle u_t^{(1)}, v_t^{i}\rangle$ and $\langle u_t^{(2)}, v_t^{i}\rangle$.

Therefore, we can simplify the notation. The gradient descent iterates can be rewritten as:
\begin{equation*}
\begin{split}
    v^{(i)}_{t+1} &= v^{(i)}_{t} + \eta \ell_t \Delta_t \quad u^{(1)}_{t+1} = u^{(1)}_{t} + \eta \ell_t \sum_{i=1}^K v_t^{(i)} \\
    u^{(2)}_{t+1} &= u^{(2)}_{t} - \eta \ell_t \sum_{i=1}^K v_t^{(i)} \quad \Delta_{t+1} = \Delta_{t} + 2 \eta \ell_t \sum_{j=1}^K v_t^{(j)} \\
\end{split}
\end{equation*}
Furthermore, let $V_t = \sum_{j=1}^K v^{(j)}_{t}$
\begin{equation*}
\begin{split}
    V_{t+1} &= V_t + K \eta \ell_t \Delta_t \\
    \Delta_{t+1} &= \Delta_{t} + 2 \eta \ell_t V_t
\end{split}
\end{equation*}
In essence, the rest of the proof follows from the proof of \cref{thm:align-emb-unemb} and symmetry. 

\paragraph{Loss converging to zero} One first notice that by the induction argument above, the loss must be monotonically decreasing. In fact, $\lim_{t\to \infty}\ell_t  = 0$. To see this, notice that
\begin{equation*}
    \langle u^{(1)}_{t+1}, v^{(i)}_{t+1} \rangle - \langle u^{(1)}_{t}, v^{(i)}_{t} \rangle \geq \eta \langle v_t^{(j)}, \ell_t \sum_{j=1}^K v_t^{(j)} \rangle \geq \ell_t \eta \langle v_1^{(j)}, \sum_{j=1}^K v_t^{(j)} \rangle = \ell_t \eta C_1^1
\end{equation*}
Suppose $\ell_t $ is not converging to zero. Then $\ell_t$ has an lower bound, which means $\langle u^{(1)}_{t+1}, v^{(i)}_{t+1} \rangle$ will increase to infinity. This is a contradiction. Therefore, $\lim_{t\to \infty}\ell_t = 0$.

In fact, one can also show that, $\lim_{t \to \infty}||V_t|| \to \infty$ and $\lim_{t \to \infty}||\Delta_t|| \to \infty$. To see this, one first notice that,
\begin{equation*}
    \lim_{t \to \infty} \exp(- \langle \Delta_t, V_t \rangle) = \lim_{t \to \infty} \ell_t^{2K} \to 0
\end{equation*}
Therefore, at least one of $||V_t||$ or $||\Delta_t|| $ needs to go to infinity. Suppose only $||\Delta_t||$ reaches infinity, then $\lim_{t \to \infty} \frac{||V_t||}{||\Delta_t||} \to 0$. On the other hand, 

\begin{equation*}
    \frac{||V_{t+1}||^2}{||\Delta_{t+1}||^2} = \frac{||V_t||^2 + 2K \eta \ell_t \langle V_t, \Delta_t \rangle + K^2 \eta^2 \ell^2 ||\Delta_t||^2}{||\Delta_t||^2 + 4 \eta \ell_t \langle V_t, \Delta_t \rangle + 4 \eta^2 \ell^2 ||V_t||^2}
\end{equation*}

Suppose $\frac{||V_t||^2}{||\Delta_t||^2} < K/2$, then 
\begin{equation*}
    \frac{2K \eta \ell_t \langle V_t, \Delta_t \rangle + K^2 \eta^2 \ell^2 ||\Delta_t||^2}{4 \eta \ell_t \langle V_t, \Delta_t \rangle + 4 \eta^2 \ell^2 ||V_t||^2} \geq K/2
\end{equation*}
Therefore, if $\frac{||V_t||^2}{||\Delta_t||^2} < K/2$, $\frac{||V_{t+1}||^2}{||\Delta_{t+1}||^2} > \frac{||V_t||^2}{||\Delta_t||^2}$. Similarly, if $\frac{||V_t||^2}{||\Delta_t||^2} > K/2$, $\frac{||V_{t+1}||^2}{||\Delta_{t+1}||^2} < \frac{||V_t||^2}{||\Delta_t||^2}$. Therefore, $\lim_{t \to \infty} \frac{||V_t||}{||\Delta_t||}$ will not be zero. So both $\lim_{t \to \infty}||V_t|| \to \infty$ and $\lim_{t \to \infty}||\Delta_t|| \to \infty$.

\paragraph{Cosine similarity between $v^{(i)}_t, v^{(j)}_t$ converges to one}
Note for any $i$, $\lim_{t \to \infty}||v^{(i)}_{t}|| \to \infty$. This is because by symmetry, 
\begin{equation*}
    \lim_{t \to \infty}||v^{(i)}_{t}|| \geq \lim_{t \to \infty} \frac{||V_{t}||}{K} \to \infty
\end{equation*}

Then,
\begin{equation*}
    v^{(i)}_{T} = v^{(i)}_{0} + \eta \sum_{t=0}^{T-1} \Delta^{w}_t
\end{equation*}
Let's denote $D_T = \eta \sum_{t=0}^{T-1} \Delta^{w}_t$. Then $||v^{(i)}_{0}|| + ||D_T|| \geq ||v^{(i)}_{T}||$. Thus, $\lim_{T \to \infty} ||D_T|| = \infty$.

Finally
\begin{equation*}
\begin{split}
     \cos(v^{(i)}_{t}, v^{(i)}_{t}) &= \frac{\langle v^{(i)}_{0} + D_T, v^{(j)}_{0} + D_T \rangle}{||v^{(i)}_{t}|| ||v^{(j)}_{t}||} \geq \frac{\langle v^{(i)}_{0} + D_T, v^{(j)}_{0} + D_T \rangle}{(||v^{(i)}_{0}|| + ||D_t||) (||v^{(j)}_{0}|| + ||D_t||)} \\
     &= \frac{\langle v^{(i)}_{0} + v^{(j)}_{0}, D_T\rangle + ||D_T||^2}{(||v^{(i)}_{0}|| + ||D_t||) (||v^{(j)}_{0}|| + ||D_t||)} = \frac{\langle v^{(i)}_{0} + v^{(j)}_{0}, D_T\rangle / ||D_t||^2  + 1}{(||v^{(i)}_{0}||/||D_t|| + 1) (||v^{(j)}_{0}||/||D_t|| + 1)}
\end{split}
\end{equation*}
Thus,
\begin{equation*}
    \lim_{t \to \infty}\cos(v^{(i)}_{t}, v^{(i)}_{t}) = 1
\end{equation*}

\paragraph{Cosine similarity between $v^{(i)}_t, \Delta_t$ converges to one}
Finally, one can show that $\lim_{t \to \infty} \cos(\Delta_t, V_t) \to 1$ follows the same proof of \cref{thm:align-emb-unemb}. For completeness, we will present the full proof here.

Let's first do a simple variable change,
\begin{equation*}
\begin{split}
    \sqrt{2}V_{t+1} &= \sqrt{2}V_t + \sqrt{2K} \eta \ell_t \sqrt{K}\Delta_t \\
    \sqrt{K}\Delta_{t+1} &= \sqrt{K}\Delta_{t} + \sqrt{2K}\eta \ell_t \sqrt{2}V_t
\end{split}
\end{equation*}

Let $\Tilde{V}_t = \sqrt{2} V_t$, $\Tilde{\Delta}_t = \sqrt{K} \Delta_t$, and $\Tilde{\eta} = \sqrt{2K} \eta$, then

\begin{equation*}
\begin{split}
    \Tilde{V}_{t+1} &= \Tilde{V}_t + \Tilde{\eta} \ell_t \Tilde{\Delta}_t \\
    \Tilde{\Delta}_{t+1} &= \Tilde{\Delta}_{t} + \Tilde{\eta} \ell_t \Tilde{V}_t
\end{split}
\end{equation*}

One first notice that, $\Tilde{V}_{t} + \Tilde{\Delta}_{t}$ always has the direction at any $t$. Therefore, let's consider the $\Tilde{V}_{t+1}^{\perp}$ which is the residual after projecting onto the direction of $\Tilde{V}_{t} + \Tilde{\Delta}_{t}$,
\begin{equation*}
\begin{split}
    ||\Tilde{V}_{t+1}^{\perp}||^2 &= ||\Tilde{V}_{t+1}||^2 - ||\frac{\langle \Tilde{V}_{t+1}, \Tilde{V}_{t} + \Tilde{\Delta}_{t}\rangle}{||\Tilde{V}_{t} + \Tilde{\Delta}_{t}||}||^2 \\
    &= \frac{(1-\Tilde{\eta}\ell_t)^2 \big( ||\Tilde{V}_{t}||^2||\Tilde{\Delta}_{t}||^2 - (\langle \Tilde{V}_{t}, \Tilde{\Delta}_{t} \rangle)^2 \big)}{||\Tilde{V}_{t} + \Tilde{\Delta}_{t}||^2} \\
    & \leq C_{\eta}\frac{||\Tilde{V}_{t}||^2||\Tilde{\Delta}_{t}||^2 - (\langle \Tilde{V}_{t}, \Tilde{\Delta}_{t} \rangle)^2 }{||\Tilde{V}_{t} + \Tilde{\Delta}_{t}||^2} \\
\end{split}
\end{equation*}
Note that $\Tilde{\eta} \ell_t$ converges to zero. Therefore, there's an upper bound $C_{\eta}$ on $(1-\Tilde{\eta}\ell_t)^2$.

On the other hand, let $O_t = ||\Tilde{V}_{t}||^2||\Tilde{\Delta}_{t}||^2 - (\langle \Tilde{V}_{t}, \Tilde{\Delta}_{t} \rangle)^2$. Then
\begin{equation*}
    O_{t+1} - O_{t} = (\Tilde{\eta}^4 \ell_t^4 - 2 \Tilde{\eta}^2 \ell_t^2)||\Tilde{V}_{t}||^2||\Tilde{\Delta}_{t}||^2 (1 - \cos(\Tilde{V}_{t}, \Tilde{\Delta}_{t}))
\end{equation*}
Once again, because $\ell_t$ is eventually converging to zero, $O_{t}$ will decrease at some point. This is because $\Tilde{\eta}^4 \ell_t^4 - 2 \Tilde{\eta}^2 \ell_t^2 < 0$ if $\ell_t < \frac{\sqrt{2}}{\Tilde{\eta}}$ and $||\Tilde{V}_{t}||^2||\Tilde{\Delta}_{t}||^2 (1 - \cos(\Tilde{V}_{t}, \Tilde{\Delta}_{t})) \geq 0$.
Because $O_t \geq 0$, it will reach a limit. Therefore, $O_t$ must have an upper bound.
Finally, the denominator is diverging and by our inductive statements, it must have a nonzero lower bound. 

Therefore $||\Tilde{V}_{t+1}^{\perp}||$ is bounded. And as $\lim_{t \to \infty}||V_t|| \to \infty$ and $\lim_{t \to \infty}||\Delta_t|| \to \infty$, we have 
\begin{equation*}
\begin{split}
    \lim_{t\to\infty}\cos(\Tilde{V}_{t}, \Tilde{V}_{t} + \Tilde{\Delta}_t) &= 1 \\
    \lim_{t\to\infty}\cos(\Tilde{\Delta}_{t}, \Tilde{V}_{t} + \Tilde{\Delta}_t) &= 1 \\
\end{split}
\end{equation*}

Thus, 
\begin{equation*}
    \lim_{t\to\infty}\cos(\Tilde{\Delta}_{t}, \Tilde{V}_{t}) = 1
\end{equation*}

This would also imply that $\lim_{t \to \infty} \cos(\Delta_t, v^{(i)}_t) \to 1$ for all $i$.

\end{proof}

\section{Orthogonality}

\label{sec: ortho_proofs}

In this section, we will prove our main theorems on orthogonality.

\thmortho*
\begin{proof}
First of all, for any binary vector $c$, $\mathcal{D}_c$ is non-empty by the positivity assumption because $d^{\uk} \in \mathcal{D}_c$ where $d^{\uk} = \{\uk, ..., \uk \}$.

Consider an arbitrary $c \in \mathcal{C}$ and an arbitrary $d \in \mathcal{D}_c$. Without loss of generality, let $c_i = 1$. Suppose $d_j = \uk$, then it must be that $d_{(j \to c_j)} \in \mathcal{D}_c$ because $d_{(j \to c_j)}$ agrees with $c$ on the $j$-th entry. Similarly, if $d_j = c_j$, then $d_{(j \to \uk)} \in \mathcal{D}_c$.

By the positivity assumption and the fact that $d_i = \uk$, 
\begin{equation*}
\begin{split}
    p(c_{(i \to 1)} | d_{(j \to \uk)}) > 0, &\qquad p(c_{(i \to 0)} | d_{(j \to \uk)}) > 0 \\
    p(c_{(i \to 1)} | d_{(j \to c_j)}) > 0, &\qquad p(c_{(i \to 0)} | d_{(j \to c_j)}) > 0 
\end{split}
\end{equation*}
Thus,
\begin{equation*}
\begin{split}
    \hat{p}(c_{(i \to 1)} | d_{(j \to \uk)}) = p(c_{(i \to 1)} | d_{(j \to \uk)}) &\quad \hat{p}(c_{(i \to 0)} | d_{(j \to \uk)}) = p(c_{(i \to 0)} | d_{(j \to \uk)}) \\
     \hat{p}(c_{(i \to 1)} | d_{(j \to c_j)}) = p(c_{(i \to 1)} | d_{(j \to c_j)}) &\quad \hat{p}(c_{(i \to 0)} | d_{(j \to c_j)}) = p(c_{(i \to 0)} | d_{(j \to c_j)}) \\
\end{split}
\end{equation*}

Then by the Hammersley–Clifford theorem, we can factorize the joint distribution over cliques:
\begin{equation*}
    p(c) \propto \prod_k \Psi_k (c_{I_k})
\end{equation*}
where $\Psi_k (c_{I_k})$ is a function that only depends on the clique of random variables $C_{I_k}$.

By this factorization, if $p(c_{(i \to 1)} | d) > 0$ and $p(c_{(i \to 0)} | d) > 0$,
\begin{equation*}
    \ln \frac{p(c_{(i \to 1)} | d)}{p(c_{(i \to 0)} | d)} =  \beta (c_{I_{i}}, d_{I_{i}})
\end{equation*}
where $\beta$ is some function that only depends on cliques that involve $C_i$. In other words, $i \in I_{i}$ and $i' \in I_{i}$ if $C_{i'}$ and $C_{i}$ are in the same clique in $G_C$.

Thus,
\begin{equation*}
    \ln \frac{p(c_{(i \to 1)} | d_{(j \to \uk)})}{p(c_{(i \to 0)} | d_{(j \to \uk)})} = \ln \frac{p(c_{(i \to 1)} | d_{(j \to c_j)})}{p(c_{(i \to 0)} | d_{(j \to c_j)})}
\end{equation*}
and,
\begin{equation*}
    \ln \frac{\hat{p}(c_{(i \to 1)} | d_{(j \to \uk)})}{\hat{p}(c_{(i \to 0)} | d_{(j \to \uk)})} = \ln \frac{\hat{p}(c_{(i \to 1)} | d_{(j \to c_j)})}{\hat{p}(c_{(i \to 0)} | d_{(j \to c_j)})}
\end{equation*}

Therefore,
\begin{equation*}
\begin{split}
    \bigg( g(c_{(i \to 1)}) -  g(c_{(i \to 0)}) \bigg)^T f(d_{(j \to \uk)}) &= \bigg( g(c_{(i \to 1)}) -  g(c_{(i \to 0)}) \bigg)^T f(d_{(j \to c_j)}) \\
    \bigg( g(c_{(i \to 1)}) -  g(c_{(i \to 0)}) \bigg)^T & \bigg(f(d_{(j \to \uk)}) - f(d_{(j \to c_j)} \bigg) = 0
\end{split}
\end{equation*}

\end{proof}

\corortho*
\begin{proof}
Consider two binary vectors $c^{(0)}, c^{(1)} \in \mathcal{C}$ where $c^{(0)}_j = 0$ and $c^{(1)}_j = 1$ but they agree on other entries. 

By \cref{thm:ortho},
\begin{equation*}
    g(c^{(0)}_{(i \to 1)}) -  g(c^{(0)}_{(i \to 0)}) \perp f(d^{(0)}_{(j \to c_j)}) - f(d^{(0)}_{(j \to \uk)})
\end{equation*}
for any $d^{(0)} \in \mathcal{D}_{c^{(0)}}$. Similar statements can be made for $c^{(1)}$ as well.

Note that $\mathcal{D}_{c^{(0)}} \cap \mathcal{D}_{c^{(1)}} \neq \emptyset$ by the positivity assumption. Let $d \in \mathcal{D}_{c^{(0)}} \cap \mathcal{D}_{c^{(1)}}$. Then,
\begin{equation*}
\begin{split}
    g(c^{(0)}_{(i \to 1)}) -  g(c^{(0)}_{(i \to 0)}) &\perp f(d_{(j \to 0)}) - f(d_{(j \to \uk)}) \\
    g(c^{(1)}_{(i \to 1)}) -  g(c^{(1)}_{(i \to 0)}) &\perp f(d_{(j \to 1)}) - f(d_{(j \to \uk)}) \\
\end{split}
\end{equation*}
By assumption, there exists a unit vector $u_i$, such that
\begin{equation*}
    g(c^{(0)}_{(i \to 1)}) -  g(c^{(0)}_{(i \to 0)}) = \alpha^{(0)}u_i \quad g(c^{(1)}_{(i \to 1)}) -  g(c^{(1)}_{(i \to 0)}) = \alpha^{(1)}u_i
\end{equation*}
for some $\alpha^{(0)}, \alpha^{(1)} > 0$.
Therefore, 
\begin{equation*}
    \bigg\langle u_i, f(d_{(j \to 0)}) - f(d_{(j \to \uk)}) \bigg\rangle =  \bigg\langle u_i, f(d_{(j \to 1)}) - f(d_{(j \to \uk)}) \bigg\rangle = 0
\end{equation*}
Thus,
\begin{equation*}
    \bigg\langle u_i, f(d_{(j \to 0)}) - f(d_{(j \to 1)}) \bigg\rangle = 0
\end{equation*}
Because latent variable $C_i$ has linaer and matched representations, 

\begin{equation*}
    g(c_{(i \to 1)}) -  g(c_{(i \to 0)}) \perp g(c_{(j \to 1)}) -  g(c_{(j \to 0)})
\end{equation*}
for any $c \in \mathcal{C}$.
\end{proof}

\section{Simulated Experiments}
\label{app: sim}

In this section, we will provide additional details on our simulated experiments with the latent conditional model.

\paragraph{Additional details}
To let models learn conditional distributions, we train them to make predictions using cross-entropy loss. To turn the aforementioned generated binary vectors into a prediction task, we randomly generate binary masks $\mu$ for each vector in the batch. And if $\mu_i = 1$, the $i$-th entry of the vector is left untouched, and if $\mu_i = 0$, then it is set to $-1$ which is a numerical representation of the token $\uk$. The model is trained to use masked vectors to predict the original vectors.

These sampled binary vectors and ternary vectors are mapped into one-hot encodings to avoid neural networks exploiting the inherent structures of these vectors. $f$ and $g$ are modeled as a linear function, which are essentially lookup tables. This construction is made without loss of generality.

\paragraph{Adam optimizer} Although the theory is presented with gradient descent, the empirical result is actually robust to the choice of optimizers. We repeat the previous experiments on the complete set of conditionals with Adam optimizer \cite{kingma2014adam} using a learning rate $0.001$ and observe similar linear representation patterns in \cref{tab:adam}.

\begin{figure}[h]
    \centering
        \includegraphics[width=.6\linewidth]{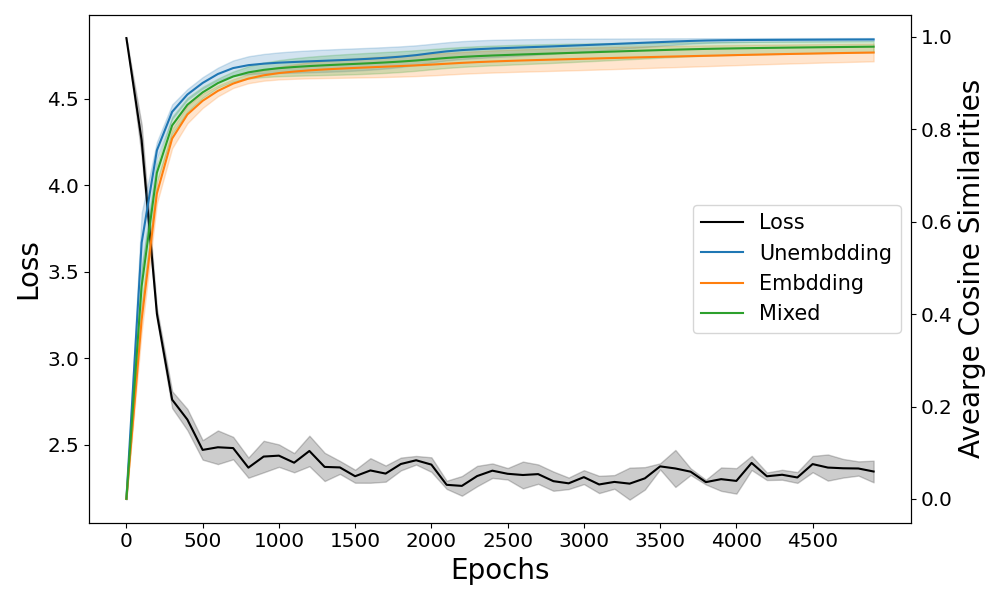}
        \caption{Loss and cosine similarities change as training progresses. The experiments are tested with $7$ hidden variables over 10 runs.}
        \label{fig:loss}
\end{figure}

\begin{table}[!h]
\caption{When the model is trained with Adam optimizer, the $m$ latent variables are represented linearly, and the embedding and unembedding representations are matched. The table shows average cosine similarities among and between steering vectors of unembeddings and embeddings. Standard errors are over $100$ runs for $3$ variables and $4$ variables, $50$ runs for $5$ variables, $20$ runs for $6$ variables and $10$ runs for $7$ variables.}
\label{tab:adam}
\begin{center}
\begin{small}
\begin{sc}
\begin{tabular}{lcccr}
\toprule
\makecell{$m$} & Unembedding & Embedding & \makecell{Unembedding \\ and Embedding}\\
\midrule
3    & 0.910$\pm$0.015 & 0.923$\pm$0.013 & 0.926$\pm$0.012 \\
4    & 0.972$\pm$0.005 & 0.959$\pm$0.005 & 0.965$\pm$0.005\\
5    & 0.985$\pm$0.004 & 0.970$\pm$0.004 & 0.975$\pm$0.004 \\
6    & 0.996$\pm$0.001 & 0.977$\pm$0.002 & 0.983$\pm$0.001 \\
7    & 0.966$\pm$0.012 & 0.918$\pm$0.016 & 0.929$\pm$0.015 \\
\bottomrule
\end{tabular}
\end{sc}
\end{small}
\end{center}
\end{table}

\paragraph{Incomplete set of contexts ($\widehat{\mathcal{D}} \subset \mathcal{D}$)} Both the size of $\mathcal{C}$ and $\mathcal{D}$ grow exponentially. To model large language models, not every conditional probability is necessarily trained. In fact, one does not need the complete set of contexts to get linearly encoded representations. To test this, we run experiments on incomplete subsets of contexts by randomly selecting a few masks $\{ \mu \}$. \cref{tab:incomplete-cond} shows that even with subsets of contexts, one can still get linearly encoded representations.

\begin{table}[h]
\caption{When the model is trained to learn subsets of conditionals for $10$ latent variables, the latent variables are still represented linearly and matched. The table shows average cosine similarities. Standard errors are over $10$ runs.}
\label{tab:incomplete-cond}
\begin{center}
\begin{small}
\begin{sc}
\begin{tabular}{lcccr}
\toprule
\makecell{Max Number \\ of Masks} & Unembedding & Embedding & \makecell{Unembedding \\ and Embedding}\\
\midrule
50    & 0.974$\pm$0.009 & 0.946$\pm$0.014 & 0.959$\pm$0.011 \\
100    & 0.957$\pm$0.009 & 0.915$\pm$0.013 & 0.934$\pm$0.011\\
\bottomrule
\end{tabular}
\end{sc}
\end{small}
\end{center}
\end{table}

\paragraph{Incomplete set of concept vectors or violation of positivity ($\widehat{\calC} \subset \calC$)}
In real-world language modeling, not every concept vector will be mapped to a token. We test this by allowing some concept vectors $c$ to have zero probabilities (i.e., $p(c) = 0$). For the experiments, we first randomly select a subset $\hat{\calC}$ of $\calC$ and then use rejection sampling to collect data points. \cref{tab:violation-pos} shows that one can still get reasonable linearity for unembeddings. The embedding alignments drop, however. One possible explanation is due to a lack of training because the size of the problem grows exponentially. On the other hand, \cref{sec: bias_linearity} suggests the connection between linearity and zero conditional probabilities. Because the positivity assumption is violated, the newly introduced zero conditional probabilities might also cause misalignment.

\begin{table}[h]
\caption{When training with an incomplete set of concept vectors and contexts, unembedding representations are still encoded linearly. The table shows average cosine similarities. Standard errors are over $10$ runs.}
\label{tab:violation-pos}
\begin{center}
\begin{small}
\begin{sc}
\begin{tabular}{lcccr}
\toprule
\makecell{Number of \\ Hidden Varibles} & Unembedding & Embedding & \makecell{Unembedding \\ and Embedding}\\
\midrule
10    & 0.951$\pm$0.011 & 0.777$\pm$0.010 & 0.855$\pm$0.011 \\
12    & 0.896$\pm$0.011 & 0.551$\pm$0.008 & 0.696$\pm$0.009 \\
\bottomrule
\end{tabular}
\end{sc}
\end{small}
\end{center}
\end{table}

\paragraph{Change of dimensions}
Previous experiments set the representation dimension to be the same as the number of latent variables. In this set of experiments, we test how decreasing dimensions affects representations by rerunning experiments on $7$ variables with the complete set of contexts and binary vectors as well as experiments with $10$ variables with incomplete set of contexts and binary vectors. \cref{fig:change-of-dimensions} shows that although decreasing dimensions do make representations less aligned, the effect is not significant. 

\begin{figure}[t]
    \centering
    \begin{subfigure}{0.49\textwidth}
        \includegraphics[width=\linewidth]{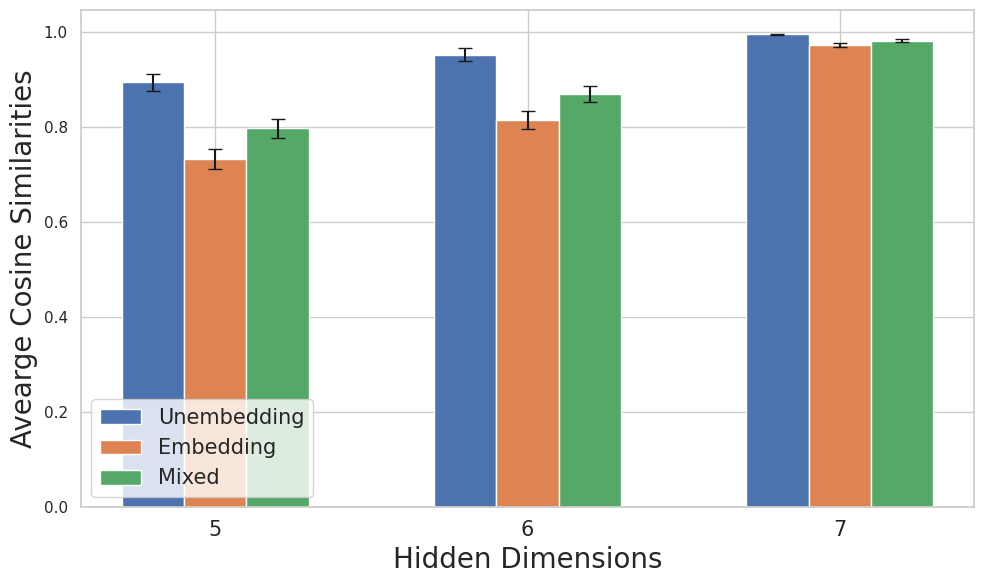}
        \caption{$7$ Hidden Variables}
        \label{fig:sub1}
    \end{subfigure}
    \hfill
    \begin{subfigure}{0.49\textwidth}
        \includegraphics[width=\linewidth]{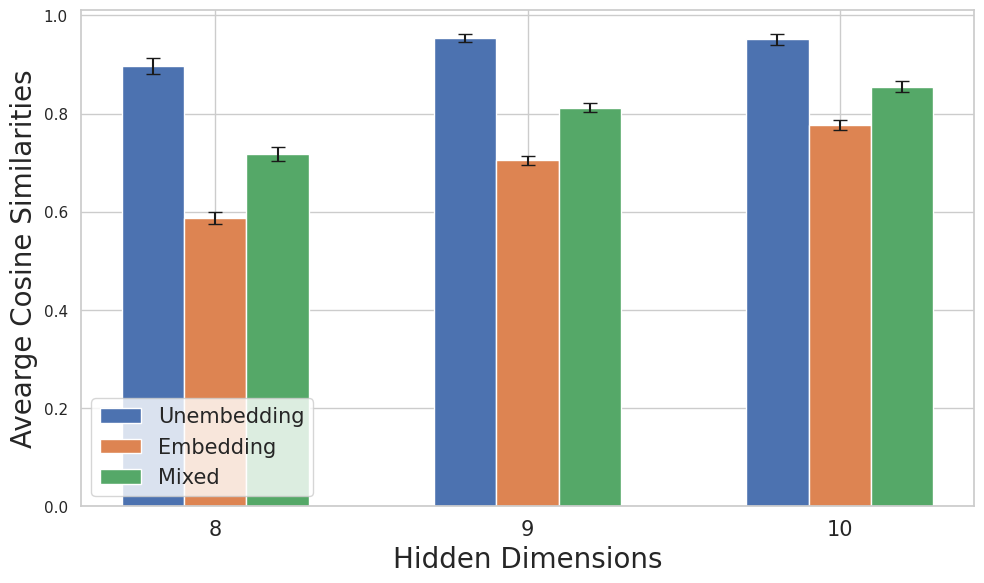}
        \caption{$10$ Hidden Variables}
        \label{fig:sub2}
    \end{subfigure}
    \caption{Average cosine similarities under different hidden dimensions show that reducing dimension dose not hurt linearity significantly.}
    \label{fig:change-of-dimensions}
\end{figure}

\begin{figure}[h]
    \centering
    \begin{subfigure}{0.49\textwidth}
        \includegraphics[width=\linewidth]{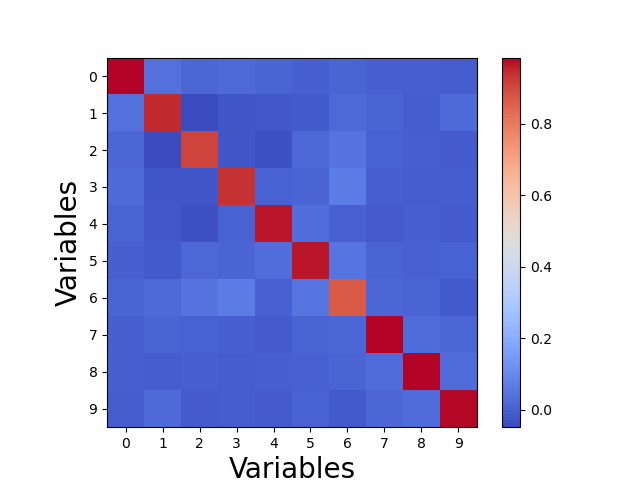}
        \caption{Simulated Experiments}
        \label{fig:sub1}
    \end{subfigure}
    \hfill
    \begin{subfigure}{0.49\textwidth}
        \includegraphics[width=\linewidth]{pics/heatmap_llm.png}
        \caption{LLaMA-2}
        \label{fig:sub2}
    \end{subfigure}
    \caption{The pattern of linear and orthogonal representations matches between simulated experiments and LLaMA-2. Specifically, unembedding steering vectors of the same concept in LLaMA-2 have nontrivial alignment, while steering vectors of different concepts are represented almost orthogonally. The heatmap shows the average cosine similarities between different sets of steering vectors for both simulated experiments and LLaMA-2. For simulated experiments, the cosine similarities are averaged over $10$ runs.}
    \label{fig:heatmap}
\end{figure}

\section{Experiments with large language models}
\label{app: llm_expts}

In this section, we will provide additional details on our experiments with LLMs.

\paragraph{Examples of counterfactual context and token pairs}
Example context pairs
from OPUS Books \cite{tiedemann2012parallel}
that were used to construct the embedding vectors are shown in \cref{tab: example_context_pairs_lang}.
For the unembedding vectors, we reuse the 27 concepts considered in \cite{park2023linear} built atop the work of \cite{gladkova2016analogy}, and they are listed in \cref{tab: 27_concepts} along with example token pairs.

\paragraph{Winograd Schema}
We now present full details about our experiments on the Winograd Schema dataset \cite{levesque2012winograd}.
Recall that we consider context pairs arising from the Winograd Schema, which is a dataset of pairs of sentences that differ in only one or two words and which further contain an ambiguity that can only be resolved with world knowledge and reasoning.
Example context pairs from the Winograd Schema \cite{levesque2012winograd} are shown in \cref{tab: example_context_pairs_wino}.
Note the final 2 context pairs in the table, which have ambiguous concepts, thus highlighting the nuances of this dataset as well as of natural language.
We filter the dataset to have the ambiguous word near the end of the sentence to better align with our theory.

We again compute the unembedding vectors for these context pairs output by LLaMA-2. For the embedding vectors, since there is no predefined set of concepts to consider (for instance consider the ambiguous examples), we therefore take the difference of the embedding vectors for the first token that differs in these corresponding pairs of sentences.
We compute all pairwise cosine similarities. We observe that for non-matching contexts, the average similarity is 0.011 with a maximum of 0.081, whereas for matching contexts, the average similarity is 0.042 with a maximum of 0.161. This aligns with our predictions that the embedding vectors align better with the unembedding vectors of the same concept. 

As an additional experiment, we compute similarities between these Winograd context pairs and the 27 concepts from \cite{park2023linear} described above. In \cref{tab: top_similarities}, we display the top 3 pairs of contexts and concepts that have the highest similarities. The alignment seems reasonable as baby, woman, male, and female are different attributes of a person; and wide vs narrow and short vs tall can be construed as different manifestations of small vs big.

\begin{table}[!h]
\caption{Example language context pairs from OPUS Books}
\label{tab: example_context_pairs_lang}
\begin{center}
\begin{footnotesize}
% \begin{sc}
\begin{tabular}{ccc}
\toprule
\sc{Language pair} & \sc{Context 1} & \sc{Context 2}\\
\midrule
French--Spanish & Quinze ou seize, r\'{e}pliqua l'autre & Quince \'{o} diez y seis, replic\'{o} el otro\\
French--German & Comment est-il mon ma\^{i}tre? & Wie ist er mein Herr?\\
English--French & I hesitated for a moment. & J’h\'{e}sitai une seconde.\\
German--Spanish & Ich hasse die Spazierfahrten & No me gusta salir en coche.\\
\bottomrule
\end{tabular}
% \end{sc}
\end{footnotesize}
\end{center}
\end{table}

\begin{table}[!h]
    \centering
    \caption{Concepts and example token pairs, taken from \cite{park2023linear}}
    \label{tab: 27_concepts}
{\footnotesize
    \begin{tabular}{cc|cc}
        \toprule
        \sc{Concept} & \sc{Example token pair} & \sc{Concept} & \sc{Example token pair} \\
        \midrule
        verb $\Rightarrow$ 3pSg & (accept, accepts) & verb $\Rightarrow$ Ving & (add, adding) \\
        verb $\Rightarrow$ Ved & (accept, accepted) & Ving $\Rightarrow$ 3pSg & (adding, adds)\\
        Ving $\Rightarrow$ Ved & (adding, added) & 3pSg $\Rightarrow$ Ved & (adds, added) \\
        verb $\Rightarrow$ V + able & (accept, acceptable) & verb $\Rightarrow$ V + er & (begin, beginner)\\
        verb $\Rightarrow$ V + tion & (compile, compilation) & verb $\Rightarrow$ V + ment & (agree, agreement)\\
        adj $\Rightarrow$ un + adj & (able, unable) & adj $\Rightarrow$ adj + ly & (according, accordingly)\\
        small $\Rightarrow$ big & (brief, long) & thing $\Rightarrow$ color & (ant, black)\\
        thing $\Rightarrow$ part & (bus, seats) & country $\Rightarrow$ capital & (Austria, Vienna) \\
        pronoun $\Rightarrow$ possessive & (he, his) & male $\Rightarrow$ female & (actor, actress) \\
        lower $\Rightarrow$ upper & (always, Always) & noun $\Rightarrow$ plural & (album, albums) \\
        adj $\Rightarrow$ comparative & (bad, worse) & adj $\Rightarrow$ superlative & (bad, worst)\\
        frequent $\Rightarrow$ infrequent & (bad, terrible) & English $\Rightarrow$ French & (April, avril)\\
        French $\Rightarrow$ German & (ami, Freund) & French $\Rightarrow$ Spanish & (année, año)\\
        German $\Rightarrow$ Spanish & (Arbeit, trabajo)\\
        \bottomrule
    \end{tabular}}
\end{table}

\begin{table}[!h]
\caption{Example context pairs from Winograd Schema}
\label{tab: example_context_pairs_wino}
\begin{center}
\begin{footnotesize}
\begin{tabular}{c}
\toprule
\sc{Contexts}\\
\midrule
The delivery truck zoomed by the school bus because it was going so fast.\\ The delivery truck zoomed by the school bus because it was going so slow.\\
\midrule
The man couldn't lift his son because he was so weak.\\
The man couldn't lift his son because he was so heavy.\\
\midrule
Joe's uncle can still beat him at tennis, even though he is 30 years younger.\\
Joe's uncle can still beat him at tennis, even though he is 30 years older.\\
\midrule
Paul tried to call George on the phone, but he wasn't successful.\\
Paul tried to call George on the phone, but he wasn't available.\\
\midrule
The large ball crashed right through the table because it was made of steel.\\
The large ball crashed right through the table because it was made of styrofoam.\\
\bottomrule
\end{tabular}
\end{footnotesize}
\end{center}
\end{table}

\begin{table}[h]
\caption{Top similarities between Winograd contexts and token concepts}
\label{tab: top_similarities}
\begin{center}
\begin{footnotesize}
\begin{tabular}{lcc}
\toprule
\sc{Contexts} & \sc{Most similar concept} & \sc{Similarity}\\
\midrule
Anne gave birth to a daughter last month. She is a very charming woman.
& \multirow{2}{*}{male$\Rightarrow$female} & \multirow{2}{*}{0.311}\\
Anne gave birth to a daughter last month. She is a very charming baby .&&\\
\midrule
The table won't fit through the doorway because it is too wide. & \multirow{2}{*}{small$\Rightarrow$big} & \multirow{2}{*}{0.309}\\
The table won't fit through the doorway because it is too narrow.&&\\
\midrule
John couldn't see the stage with Billy in front of him because he is so short.&\multirow{2}{*}{small$\Rightarrow$big} & \multirow{2}{*}{0.303}\\
John couldn't see the stage with Billy in front of him because he is so tall.&&\\
\bottomrule
\end{tabular}
\end{footnotesize}
\end{center}
\end{table}

\paragraph{Additional barplots}
The entire set of similarity barplots for the concepts of French--Spanish, French--German, English--French and German--Spanish are in Figures \ref{fig: es-fr}, \ref{fig: de-fr}, \ref{fig: en-fr} and \ref{fig: de-es} respectively.
As we see, they satisfy the same behavior as described earlier in \cref{sec: llm_expts}, exhibiting relatively high similarity with the matching unembedding vector, close to high similarity with related language concepts and low similarity with unrelated concepts.

\begin{figure}[!h]
\centering
\includegraphics[scale=0.25, trim={0em 0em 0em 0em},clip]{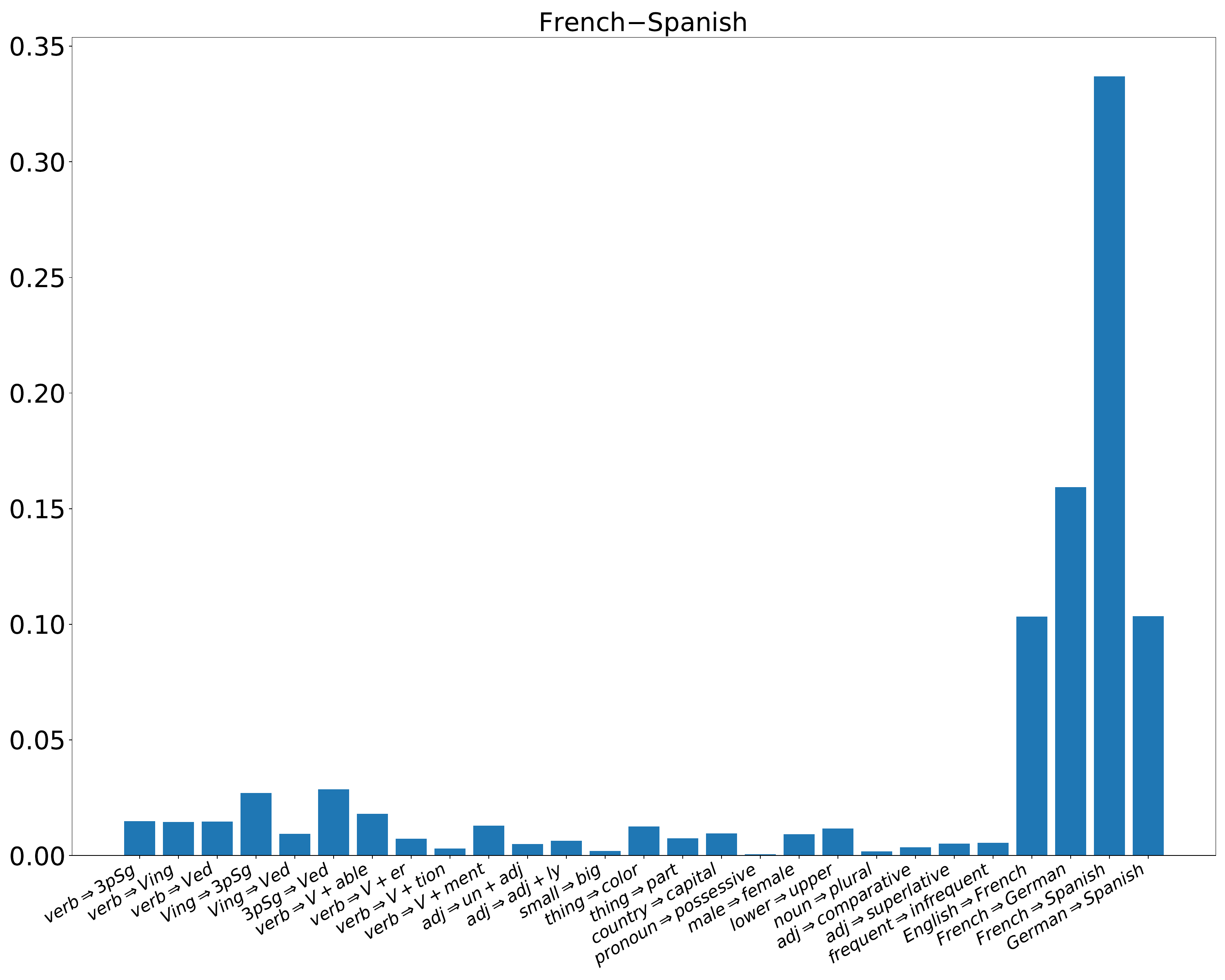}
\caption{The French--Spanish concept is highly correlated with similar token concepts relative to others. This figure shows all cosine similarities between the French--Spanish concept and token concepts.}
\label{fig: es-fr}
\end{figure}

\begin{figure}[!h]
\centering
\includegraphics[scale=0.25, trim={0em 0em 0em 0em},clip]{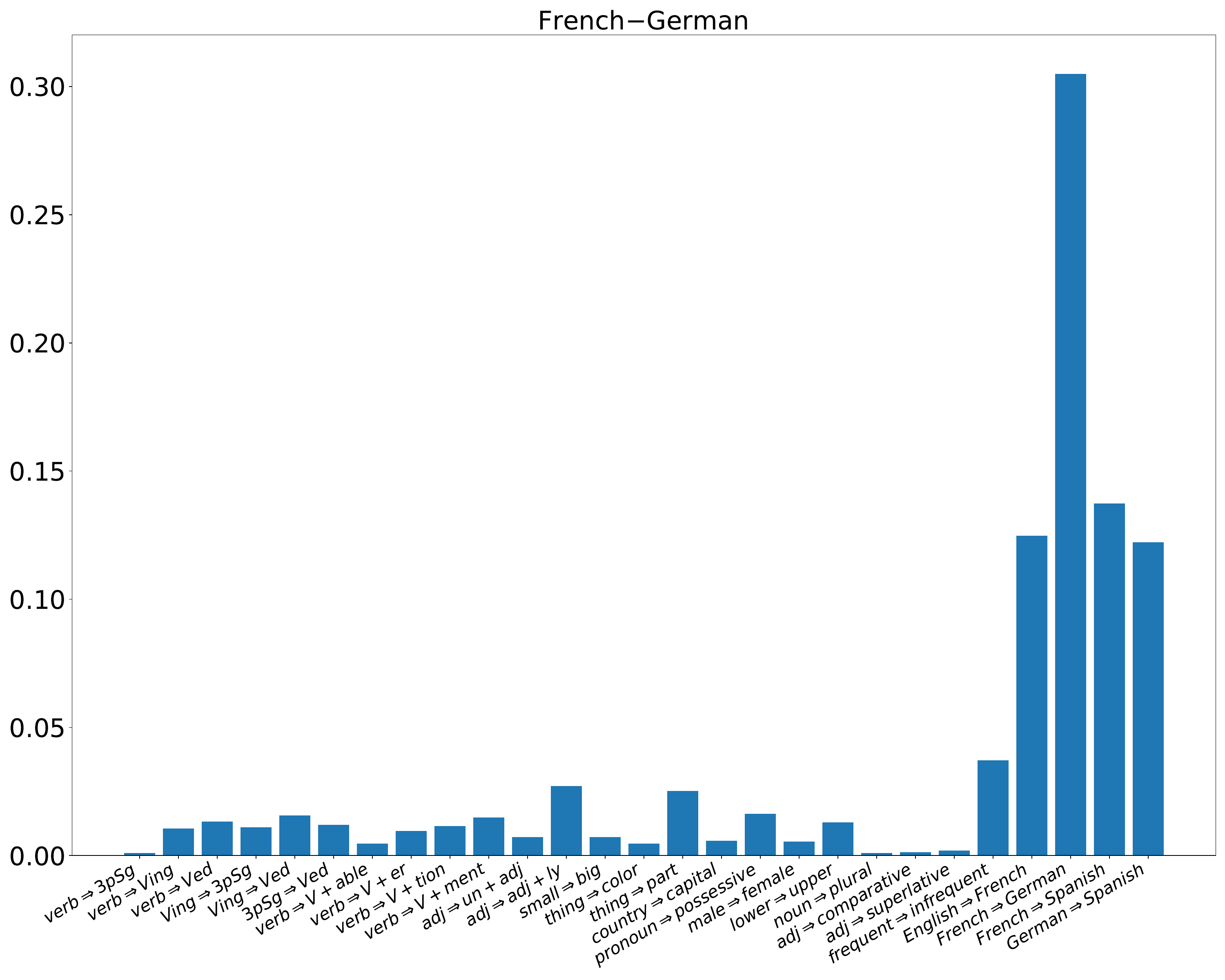}
\caption{The French--German concept is highly correlated with similar token concepts relative to others. This figure shows all cosine similarities between the French--German concept and token concepts.}
\label{fig: de-fr}
\end{figure}

\begin{figure}[!h]
\centering
\includegraphics[scale=0.25, trim={0em 0em 0em 0em},clip]{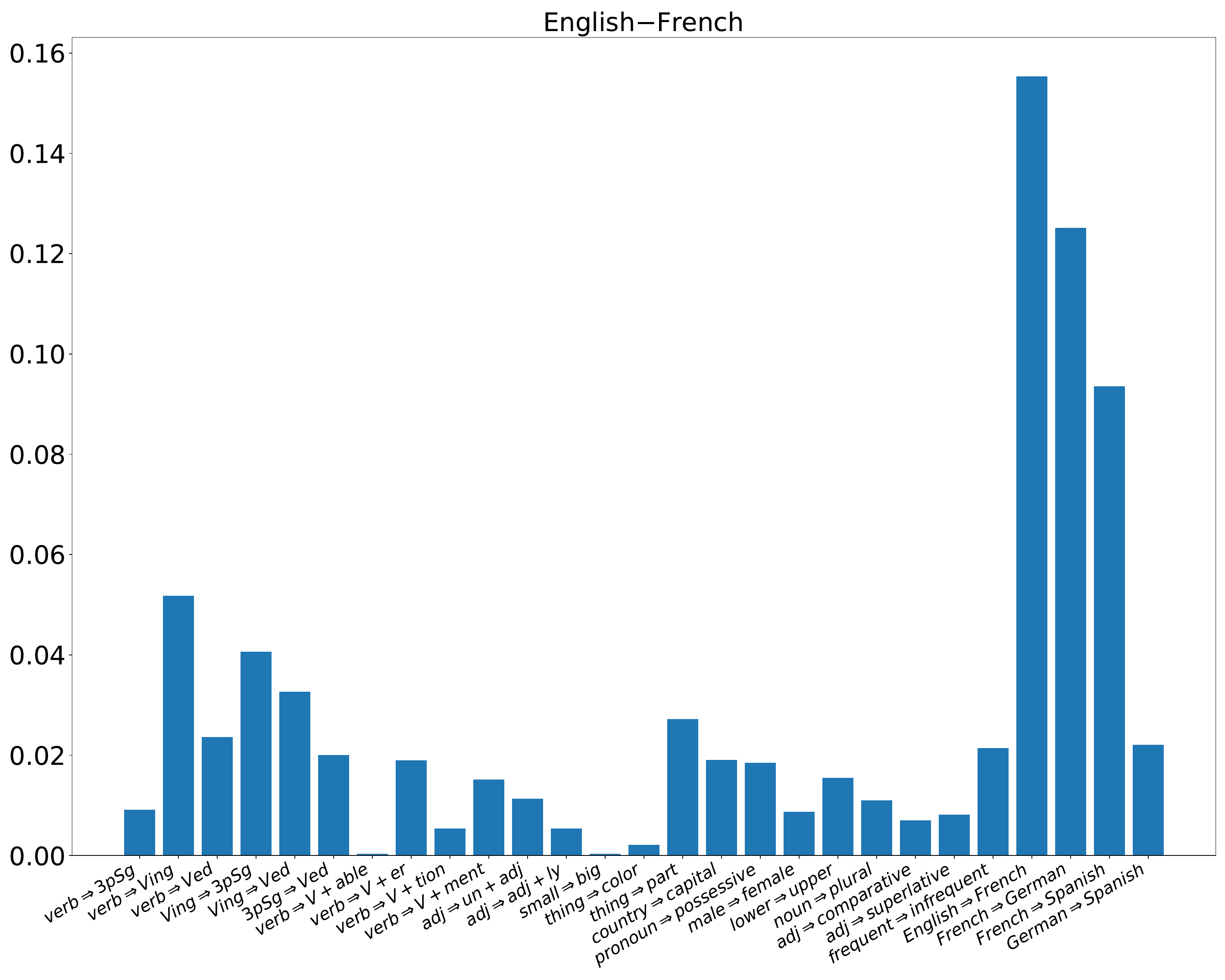}
\caption{The English--French concept is highly correlated with similar token concepts relative to others. This figure shows all cosine similarities between the English--French concept and token concepts.}
\label{fig: en-fr}
\end{figure}

\begin{figure}[!h]
\centering
\includegraphics[scale=0.25, trim={0em 0em 0em 0em},clip]{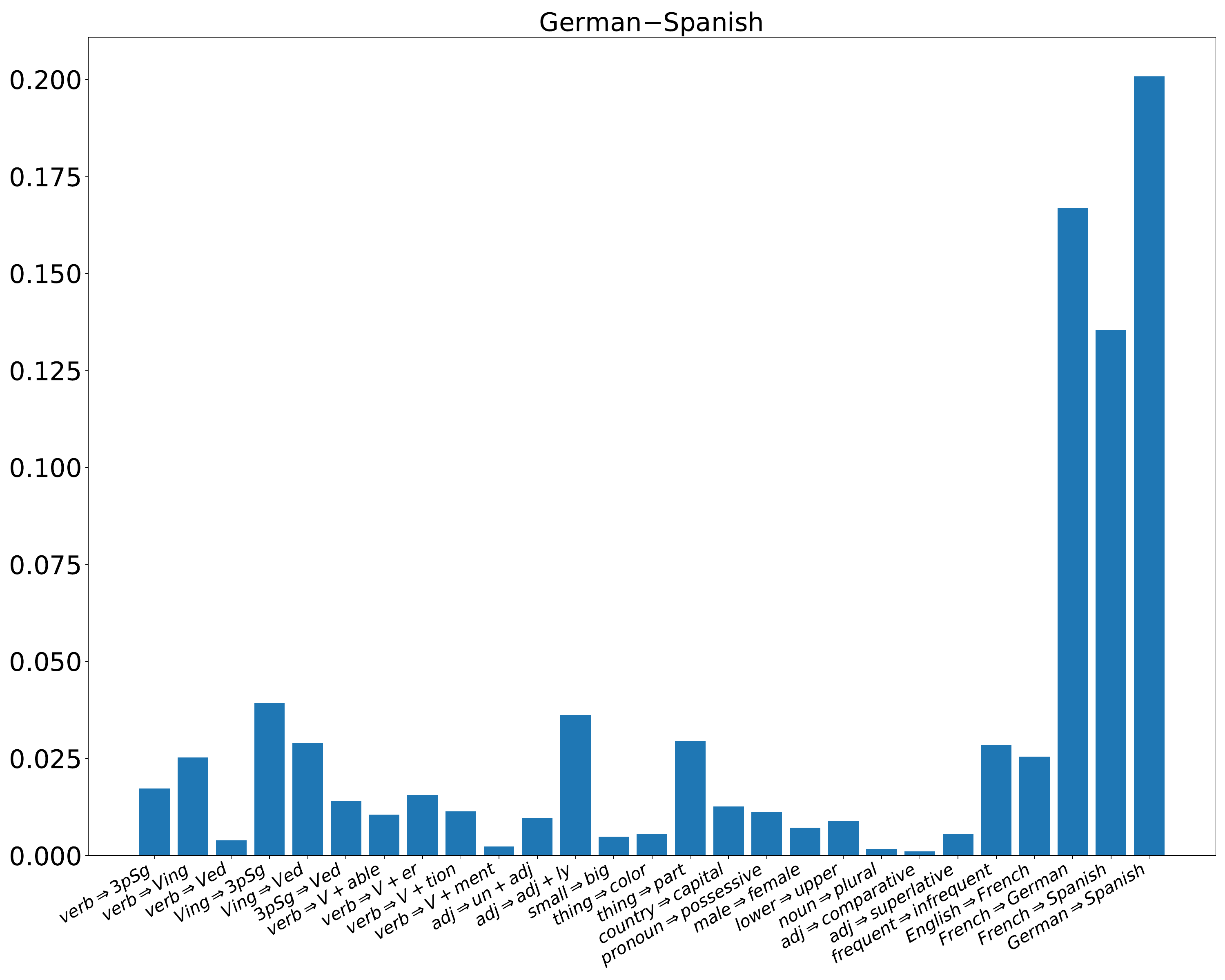}
\caption{The German--Spanish concept is highly correlated with similar token concepts relative to others. This figure shows all cosine similarities between the German--Spanish concept and token concepts.}
\label{fig: de-es}
\end{figure}

%%%%%%%%%%%%%%%%%%%%%%%%%%%%%%%%%%%%%%%%%%%%%%%%%%%%%%%%%%%%%%%%%%%%%%%%%%%%%%%
%%%%%%%%%%%%%%%%%%%%%%%%%%%%%%%%%%%%%%%%%%%%%%%%%%%%%%%%%%%%%%%%%%%%%%%%%%%%%%%

\end{document}